\def\eqref#1{equation~\ref{#1}}
\def\1{\bm{1}}
\DeclareMathAlphabet{\mathsfit}{\encodingdefault}{\sfdefault}{m}{sl}
\SetMathAlphabet{\mathsfit}{bold}{\encodingdefault}{\sfdefault}{bx}{n}
\DeclareMathOperator*{\argmax}{arg\,max}
\DeclareMathOperator*{\argmin}{arg\,min}
\newcommand{\id}[1]{\texttt{#1}} % Formats content in monospace
\theoremstyle{plain}
\newtheorem{theorem}{Theorem}[section]
\newtheorem{lemma}[theorem]{Lemma}
\theoremstyle{definition}
\theoremstyle{remark}
\newcommand{\ooea}{$(1 + 1)$-EA\xspace}
\newcommand{\opoea}{\ooea}
\newcommand{\olea}{$(1 + \lambda)$-EA\xspace}
\newcommand{\oplea}{\olea}
\newcommand{\methodname}{EvoPress}
\icmltitlerunning{\methodname{}: Accurate Dynamic Model Compression via Evolutionary Search}
\begin{document}

\twocolumn[
\icmltitle{\methodname{}: Accurate Dynamic Model Compression via Evolutionary Search}

% It is OKAY to include author information, even for blind
% submissions: the style file will automatically remove it for you
% unless you've provided the [accepted] option to the icml2025
% package.

% List of affiliations: The first argument should be a (short)
% identifier you will use later to specify author affiliations
% Academic affiliations should list Department, University, City, Region, Country
% Industry affiliations should list Company, City, Region, Country

% You can specify symbols, otherwise they are numbered in order.
% Ideally, you should not use this facility. Affiliations will be numbered
% in order of appearance and this is the preferred way.
\icmlsetsymbol{equal}{*}

\begin{icmlauthorlist}
\icmlauthor{Oliver Sieberling}{ethz}
\icmlauthor{Denis Kuznedelev}{yandex}
\icmlauthor{Eldar Kurtic}{ista,redhat}
\icmlauthor{Dan Alistarh}{ista,redhat}
%\icmlauthor{}{sch}
%\icmlauthor{}{sch}
\end{icmlauthorlist}

\icmlaffiliation{ethz}{ETH Z\"urich}
\icmlaffiliation{yandex}{Yandex Research}
\icmlaffiliation{ista}{IST Austria}
\icmlaffiliation{redhat}{Red Hat AI}

\icmlcorrespondingauthor{Dan Alistarh}{dan.alistarh@ist.ac.at}

% You may provide any keywords that you
% find helpful for describing your paper; these are used to populate
% the "keywords" metadata in the PDF but will not be shown in the document
\icmlkeywords{Machine Learning, ICML}

\vskip 0.3in
]

% this must go after the closing bracket ] following \twocolumn[ ...

% This command actually creates the footnote in the first column
% listing the affiliations and the copyright notice.
% The command takes one argument, which is text to display at the start of the footnote.
% The \icmlEqualContribution command is standard text for equal contribution.
% Remove it (just {}) if you do not need this facility.

%\printAffiliationsAndNotice{}  % leave blank if no need to mention equal contribution
\printAffiliationsAndNotice{} % otherwise use the standard text.

\begin{abstract}
The high computational costs of large language models (LLMs) have led to a flurry of research on LLM compression, via methods such as quantization, sparsification, or structured pruning. A new frontier in this area is given by \emph{dynamic, non-uniform} compression methods, which adjust the compression levels (e.g., sparsity) per-block or even per-layer in order to minimize accuracy loss, while guaranteeing a global compression threshold. 
Yet, current methods rely on estimating the ``importance'' of a given layer, implicitly assuming that layers contribute independently to the overall compression error. 
We begin from the motivating observation that this independence assumption does not generally hold for LLM compression: pruning a model further may even significantly recover performance.
To address this, we propose \methodname{}, a novel evolutionary framework for dynamic LLM compression. By formulating dynamic compression as a general optimization problem, \methodname{} identifies optimal compression profiles in a highly efficient manner, and generalizes across diverse models and compression techniques. Via \methodname{}, we achieve state-of-the-art performance for dynamic compression of Llama, Mistral, and Phi models, setting new benchmarks for structural pruning (block/layer dropping), unstructured sparsity, and quantization with dynamic bitwidths. Our code is available at  \href{https://github.com/IST-DASLab/EvoPress}{https://github.com/IST-DASLab/EvoPress}.
\end{abstract}

\section{Introduction}
\label{sec:introduction}
\definecolor{darkblue}{rgb}{0.0, 0.0, 0.55}
\definecolor{orange}{rgb}{1.0, 0.65, 0.0}
\newcommand{\blacksquareA}{\textcolor{darkblue}{\rule{0.34em}{1em}}}
\newcommand{\whitesquareA}{\textcolor{orange}{\rule{0.34em}{1em}}}

\newcommand{\spacesquare}{%
  \textcolor{black}{\rule[0.35em]{0.15em}{0.3em}}%
}
\newcommand{\spacesquareSmall}{%
  \textcolor{black}{\rule[0.35em]{0.075em}{0.3em}}%
}

\newcommand{\arrowsquare}{%
\textcolor{black}{\rule[0.2em]{0.025em}{0.6em}}%
\textcolor{black}{\rule[0.2125em]{0.025em}{0.575em}}%
\textcolor{black}{\rule[0.225em]{0.025em}{0.55em}}%
\textcolor{black}{\rule[0.2375em]{0.025em}{0.525em}}%
\textcolor{black}{\rule[0.25em]{0.025em}{0.5em}}%
\textcolor{black}{\rule[0.2625em]{0.025em}{0.475em}}%
\textcolor{black}{\rule[0.275em]{0.025em}{0.45em}}%
\textcolor{black}{\rule[0.2875em]{0.025em}{0.425em}}%
\textcolor{black}{\rule[0.3em]{0.025em}{0.4em}}%
\textcolor{black}{\rule[0.3125em]{0.025em}{0.375em}}%
\textcolor{black}{\rule[0.325em]{0.025em}{0.35em}}%
\textcolor{black}{\rule[0.3375em]{0.025em}{0.325em}}%
\textcolor{black}{\rule[0.35em]{0.025em}{0.3em}}%
\textcolor{black}{\rule[0.3625em]{0.025em}{0.275em}}%
\textcolor{black}{\rule[0.375em]{0.025em}{0.25em}}%
\textcolor{black}{\rule[0.3875em]{0.025em}{0.225em}}%
\textcolor{black}{\rule[0.4em]{0.025em}{0.2em}}%
\textcolor{black}{\rule[0.4125em]{0.025em}{0.175em}}%
\textcolor{black}{\rule[0.425em]{0.025em}{0.15em}}%
\textcolor{black}{\rule[0.4375em]{0.025em}{0.125em}}%
\textcolor{black}{\rule[0.45em]{0.025em}{0.1em}}%
\textcolor{black}{\rule[0.4625em]{0.025em}{0.075em}}%
\textcolor{black}{\rule[0.475em]{0.025em}{0.05em}}%
\textcolor{black}{\rule[0.4875em]{0.025em}{0.025em}}%
}
\begin{table*}[tb]
\footnotesize
\centering
\caption{Depth pruning is not monotone. In this example (Llama-3-8B), removing strictly more blocks (depicted in orange) \emph{can improve} perplexity across sources. The left half of a block corresponds to the attention layer, the right half to the MLP.}
\resizebox{0.95\textwidth}{!}{
\begin{tabular}{c|c|ccc}
\toprule
\bf{Model} & \bf{Configuration (Each block contains Attention + MLP)} &  \bf{Wiki2$\downarrow$} & \bf{C4$\downarrow$} &  \bf{FW$\downarrow$} \\
\midrule
  \multirow{4}{*}{Llama-3-8B} & \blacksquareA \blacksquareA \spacesquare \blacksquareA \blacksquareA \spacesquare \blacksquareA \blacksquareA \spacesquare \blacksquareA \blacksquareA \spacesquare \blacksquareA \blacksquareA \spacesquare \blacksquareA \blacksquareA \spacesquare \blacksquareA \blacksquareA \spacesquare \blacksquareA \blacksquareA \spacesquare \blacksquareA \blacksquareA \spacesquare \blacksquareA \blacksquareA \spacesquare \blacksquareA \blacksquareA \spacesquare \blacksquareA \blacksquareA \spacesquare \blacksquareA \blacksquareA \spacesquare \blacksquareA \blacksquareA \spacesquare \blacksquareA \blacksquareA \spacesquare \blacksquareA \blacksquareA \spacesquare \blacksquareA \blacksquareA \spacesquare \blacksquareA \blacksquareA \spacesquare \blacksquareA \blacksquareA \spacesquare \blacksquareA \blacksquareA \spacesquare \blacksquareA \blacksquareA \spacesquare \blacksquareA \blacksquareA \spacesquare \blacksquareA \blacksquareA \spacesquare \blacksquareA \blacksquareA \spacesquare \blacksquareA \blacksquareA \spacesquare \blacksquareA \blacksquareA \spacesquare \blacksquareA \blacksquareA \spacesquare \blacksquareA \blacksquareA \spacesquare \blacksquareA \blacksquareA \spacesquare \blacksquareA \blacksquareA \spacesquare \blacksquareA \blacksquareA \spacesquare \blacksquareA \blacksquareA \spacesquare \spacesquareSmall \arrowsquare  &   5.54 & 8.80 & 7.72\\
  \cmidrule{2-5} & 
  \blacksquareA \blacksquareA \spacesquare 
  \blacksquareA \blacksquareA \spacesquare 
  \blacksquareA \blacksquareA \spacesquare 
  \blacksquareA \blacksquareA \spacesquare 
  \blacksquareA \blacksquareA \spacesquare 
  \blacksquareA \blacksquareA \spacesquare 
  \blacksquareA \blacksquareA \spacesquare 
  \blacksquareA \blacksquareA \spacesquare 
  \whitesquareA \blacksquareA \spacesquare 
  \whitesquareA \blacksquareA \spacesquare 
  \blacksquareA \blacksquareA \spacesquare 
  \whitesquareA \whitesquareA \spacesquare 
  \whitesquareA \blacksquareA \spacesquare 
  \blacksquareA \blacksquareA \spacesquare 
  \blacksquareA \blacksquareA \spacesquare 
  \blacksquareA \blacksquareA \spacesquare 
  \blacksquareA \blacksquareA \spacesquare 
  \blacksquareA \blacksquareA \spacesquare 
  \blacksquareA \blacksquareA \spacesquare 
  \whitesquareA \whitesquareA \spacesquare 
  \whitesquareA \whitesquareA \spacesquare 
  \whitesquareA \whitesquareA \spacesquare 
  \whitesquareA \whitesquareA \spacesquare 
  \blacksquareA \blacksquareA \spacesquare 
  \blacksquareA \blacksquareA \spacesquare 
  \blacksquareA \whitesquareA \spacesquare 
  \blacksquareA \whitesquareA \spacesquare 
  \blacksquareA \blacksquareA \spacesquare 
  \blacksquareA \whitesquareA \spacesquare 
  \blacksquareA \blacksquareA \spacesquare 
  \blacksquareA \blacksquareA \spacesquare 
  \blacksquareA \blacksquareA \spacesquare 
  \spacesquareSmall \arrowsquare  
  &  188.01 & 147.25 & 70.46  \\
  &

  \blacksquareA \blacksquareA \spacesquare 
  \blacksquareA \blacksquareA \spacesquare 
  \blacksquareA \blacksquareA \spacesquare 
  \blacksquareA \blacksquareA \spacesquare 
  \blacksquareA \blacksquareA \spacesquare 
  \blacksquareA \blacksquareA \spacesquare 
  \blacksquareA \blacksquareA \spacesquare 
  \blacksquareA \whitesquareA \spacesquare 
  \whitesquareA \whitesquareA \spacesquare 
  \whitesquareA \whitesquareA \spacesquare 
  \whitesquareA \whitesquareA \spacesquare 
  \whitesquareA \whitesquareA \spacesquare 
  \whitesquareA \blacksquareA \spacesquare 
  \blacksquareA \blacksquareA \spacesquare 
  \blacksquareA \blacksquareA \spacesquare 
  \blacksquareA \blacksquareA \spacesquare 
  \blacksquareA \blacksquareA \spacesquare 
  \blacksquareA \blacksquareA \spacesquare 
  \blacksquareA \blacksquareA \spacesquare 
  \whitesquareA \whitesquareA \spacesquare 
  \whitesquareA \whitesquareA \spacesquare 
  \whitesquareA \whitesquareA \spacesquare 
  \whitesquareA \whitesquareA \spacesquare 
  \whitesquareA \blacksquareA \spacesquare 
  \blacksquareA \blacksquareA \spacesquare 
  \blacksquareA \whitesquareA \spacesquare 
  \whitesquareA \whitesquareA \spacesquare 
  \blacksquareA \blacksquareA \spacesquare 
  \whitesquareA \whitesquareA \spacesquare 
  \blacksquareA \blacksquareA \spacesquare 
  \blacksquareA \blacksquareA \spacesquare 
  \blacksquareA \blacksquareA \spacesquare 
  \spacesquareSmall \arrowsquare  
  &   \textbf{24.39} & \textbf{35.53}& \textbf{26.24}\\

\bottomrule
\end{tabular}
}
\label{tab:block-removal-not-monotone}
\end{table*}

Compression has become a standard way of reducing the deployment costs of large language models (LLMs). Current post-training  techniques can be roughly categorized into quantization-based, which reduce the bit-width of weights or activations, e.g.~\cite{frantar2022gptq, lin2023awq, dettmers2022case, tseng2024quipsharp}, pruning-based, which sparsify the weight matrices, e.g.~\cite{frantar2023massive, owl}, or structured pruning / layer dropping, which drop entire model components, e.g.~\cite{shortened_llama, shortgpt}. While improvements are still being made, existing  methods are reaching diminishing returns in terms of accuracy-vs-compression~\cite{dettmers2023spqr, tseng2024quipsharp}. 

In this context, a new direction is \emph{dynamic}, or \emph{non-uniform}, layer-wise compression, in which different layers can be compressed to various levels, according to their ``sensitivity'' relative to the model output. Dynamic compression allows to maximize model accuracy while satisfying a given compression requirement, e.g. a target model size. Instance-specific solutions for this problem have already been proposed for essentially every compression type: sparsity~\cite{owl}, quantization~\cite{frantar2022spdy}, or layer dropping~\cite{shortened_llama, shortgpt}. 
Broadly, these approaches work by assigning an \emph{error/sensitivity score} to each layer and compression level, which measures the impact of its compression on output loss increase.  
Then, one calculates a compression assignment which minimizes the sum of error scores, while still satisfying the global compression constraint. 
Thus, such approaches inherently assume \emph{error monotonicity}: i.e., that a lower sum of error scores taken over layers implies a lower compression error for the entire model. 

Our work starts from the observation that error monotonicity \emph{does not hold} generally for LLM compression: specifically, there are instances where \emph{compressed models with lower sums of per-layer errors can perform \emph{worse} than models with higher error}. We illustrate this fact in Table~\ref{tab:block-removal-not-monotone}, which shows an instance of a layer dropping configuration where pruning \emph{more blocks} leads to significantly better perplexity than an instance which prunes \emph{strictly fewer} blocks. 

%overall contribution
\paragraph{Contribution.} This refutation of error monotonicity implies that most prior approaches, which are based on this assumption, can lead to sub-optimal solutions. 
Thus, it motivates our investigation of alternatives towards optimal non-uniform compression. For this, we propose a new evolutionary search approach called \methodname{}, which is provably convergent, and is also sample and iteration efficient. Thus, \methodname{} is the first non-uniform compression method with guarantees; its two efficiency properties are critical for practicality in the context of LLMs, where the cost of evaluating single models (``offspring'') is exceedingly high. 
We validate the approach across all three popular approaches for post-training LLM compression: layer dropping, one-shot sparsification, and quantization. 
We find that \methodname{} consistently improves upon existing techniques, with major improvements at higher compression ratios. 

%setting overview
In more detail, we assume a setting where we are given a pre-trained model, a compression constraint such as the target model size, a set of compression options (e.g., 10 possible sparsity options per layer), and aim to identify a per-layer assignment which satisfies the constraint, while minimizing accuracy loss, measured in perplexity or in-context learning accuracy degradation. 
As is standard, e.g.~\cite{frantar2022spdy}, from the compression options we build a \emph{level database}, where each layer is compressed independently to each compression option. 
During the candidate search, our \emph{offspring} are models stitched together from the level database, and our \emph{fitness function} will be the difference (e.g., in KL-divergence) between the outputs of the offspring and the original model, on a set of calibration samples. 

%novelty of algorithm
At each step, our search algorithm starts with a single search point (candidate model), and generates a constant $\lambda \geq 1$ additional offspring, by applying a mutation operation which preserves the compression constraint. The selection stage is composed of multiple steps, where we \emph{iteratively} evaluate the offspring and parent on \emph{increasingly many} randomly chosen samples. 
For instance, we may start to evaluate the parent and $\lambda = 64$ offspring \emph{on less than a single sample} on the first sub-step, but progressively multiply the number of calibration samples  
as we sift through candidates, reducing variance as we obtain more competitive offspring. 
We found this trade-off between exploration and evaluation variance essential for efficiency on LLMs, as it drastically reduces our total number of evaluations relative to the case where all the initial offspring must be evaluated on a full batch.

%proof and optimality 
Our approach builds on the observation that the partial effectiveness of prior work, which assumes error linearity, suggests that the fitness landscape induced by compression allocation has properties similar to those of a linear function. Such functions are particularly well-suited to \emph{hill climbing}, meaning they can be optimized through highly local exploration, akin to gradient descent in a continuous search space. \methodname{} is specifically designed as an efficient hill climber, though it is capable of optimizing a broad class of fitness environments. Notably, our algorithm guarantees convergence: specifically, any linear fitness function defined on the $n$-dimensional hypercube will be maximized in expected $O( k (n - k) / \lambda )$ generations under the constraint $\|x\|_1=k$, where $\lambda$ is the number of offspring. The proof is quite non-trivial, as it needs to adapt stochastic drift analysis techniques to the case where multiple offspring are examined in each sub-step. 
In Figure~\ref{fig:block-removal-optimal}, we illustrate the algorithm's fast convergence and high efficiency on a practical example with correlated block dropping on Llama-3-8B, where we determined the optimum via (expensive) exhaustive search: \methodname{} is able to reach the optimum in only $6$ generations, using a total of only $56$ model evaluations. 
A key advantage of our approach is that it is agnostic of the model architecture and compression type. 
We illustrate this via experiments, which 
are the first to span all three compression methods, across different LLM families. 

Results show that \methodname{} significantly improves upon all prior work on depth pruning in terms of accuracy-vs-compression, especially at medium levels, and also outperforms the prior best methods - OWL and dynamic programming, respectively - for non-uniform pruning and quantization. 
Moreover, it can do so efficiently: the full version of \methodname{}, applied at high compression granularity, will converge in a few hours on a single RTX 3090 GPU, and we also present a lightweight version which utilizes fewer samples and converges in $\sim$ 1 hour in the same setting, on an 8B-parameter model. 

\begin{figure}[h]
\centering
    \includegraphics[width=0.95\linewidth]{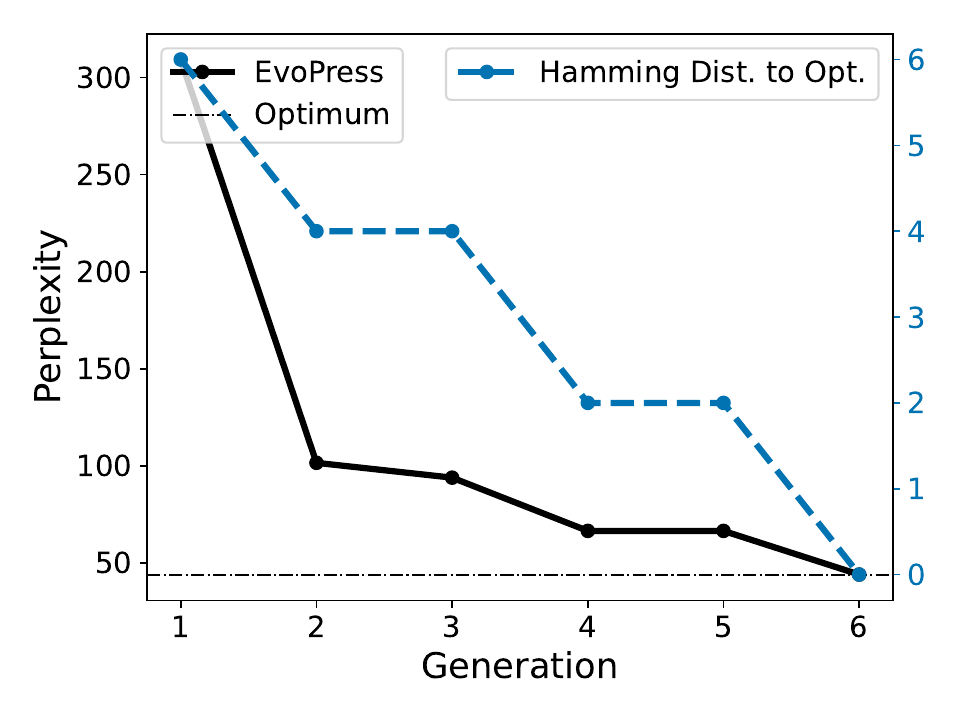}
    \caption{Removing twelve transformer blocks from Llama-3-8B under the constraint that only pairs of consecutive blocks can be removed. \methodname{} finds the optimal configuration from the $8008$ possible removal combinations in generation 6.}
    \label{fig:block-removal-optimal}
\end{figure}

\section{Related Work}
\label{sec:prel_work}

To our knowledge, we are the first to present a unified approach which covers all types of post-training LLM compression (i.e.,  layer dropping / depth pruning and non-uniform pruning / quantization) - so far, these problems have generally been approached independently. 

\textbf{Depth Pruning.} Recently, there has been a lot of interest in compression by removing entire transformer blocks, both for efficiency and to gain insights about the language model itself. Most methods are based on scoring the importance of each block, and then maximizing the importance of the resulting model by removing the blocks of lowest importance. Therefore, they assume error linearity, meaning that each block contributes independently to the total compression error. Weight Subcloning \citep{weight_subcloning} proposed a multi-step process to find good initializations for an untrained smaller model given an already trained larger one, where the importance of each block is scored based on the ratio of $\ell_2$ norms between the output embeddings of the block with and without the residual connection. Shortened Llama~\citep{shortened_llama} proposes scoring each block by measuring the perplexity after removing the respective block from the full model. ShortGPT~\citep{shortgpt} uses the cosine similarity between the input and output embeddings of each block to assess its importance. By contrast, \citet{ineffect_of_deeper} restrict  to removing \emph{consecutive  blocks} and score each configuration by cosine similarity. 

\paragraph{Non-Uniform Pruning and Quantization.} \citet{he2018amc, ashok2018n2n} were among the first to consider automatic optimization of non-uniform compression, specifically for the case of pruning, where they developed Reinforcement Learning (RL)-based approaches. 
However, both approaches suffer from high tuning complexity and would be very hard to scale to large models. Follow-up work~\citep{hubara2021accelerated, yao2021hawq, li2021brecq} considered a similar problem specifically for quantization, but explore computationally-expensive solvers (e.g. ILPs) which rely on the fact that quantization has only a small number of choices (precision levels) per layer. SPDY~\citep{frantar2022spdy} considered a unified framework which reduces the problems to knapsack-type instances, and solves them optimally modulo discretization. 
However, SPDY explicitly relies on monotonicity and linearity assumptions on the dependency between the per-layer errors and model output error, which we find not to hold on large models, especially in the high-compression regime (e.g., below 3 bits per parameter). 
Relative to SPDY, \methodname{} provides guarantees for a broader class of input functions, and focuses on efficiency for LLM compression. 

The recent OWL method~\citep{owl} focuses on non-uniform pruning of LLMs, and provides consistent improvements over uniform profiles via a layer scoring system which analyzes the activation outlier structure.
Experimentally, we find that OWL is effective especially for Llama-family models~\citep{touvron2023llama} and at moderate sparsities, but observe significant gaps in favor of \methodname{} across all models and compression levels. 

\paragraph{NAS and Structural Pruning.} 
Random search is also popular in the context of structural pruning and Neural Architecture Search (NAS)~\citep{ijcai2020p341, dong2021efficientbert, wang2020hat, xu2021nasbert, yin2021autotinybert, molchanov2022lana, kurtic2024ziplm}. 
However, such methods rely on re-training and have notoriously high costs, which limits their applicability to post-training compression. 
Thanks to its low sample complexity, we believe that \methodname{} could be extensible to lightweight NAS, and plan to investigate this in future work.

\section{Method}
\label{sec:method}

All applications of \methodname{} are grounded in a unified framework, where the objective is to identify the optimal model that adheres to a specified compression method and constraint. Formally, given a base model $M$, we seek to maximize the performance of the compressed model while satisfying the compression constraint:
\begin{displaymath}
    \hat{M}^* = \argmax_{\hat{M}} f(\hat{M}) \quad \text{subject to} \quad g(\hat{M}) \leq C,
\end{displaymath}
where $f(\hat{M})$ quantifies the performance of the compressed model $\hat{M}$ and $g(\hat{M})$ represents the compression constraint. For simplicity, we will define $g$ as the model's total size (in terms of parameters); however, the method can be adapted to other constraints, such as inference speed.

We approach this optimization problem using evolutionary search, which is a specific form of randomized search. The feasibility of such an approach heavily depends on two factors: the time required to evaluate the fitness of a candidate solution and the number of such function evaluations needed until a satisfying result is achieved. This poses a particular challenge in our case, as assessing the performance of an LLM involves substantial computational costs. 

\paragraph{Level Database.} As a first step, we compress the model to different levels. It is crucial that the units we search over -- specifically layers or blocks -- are compressed independently; otherwise, we risk losing performance when stitching together the compressed model. Ideally, the difference between two compression levels should be consistent across layers. This uniformity simplifies the optimization process, allowing for the free exchange of compression levels, as we will demonstrate for unstructured sparsity. However, this restriction is not essential for the search procedure to be effective. In the context of quantization, we will demonstrate a relaxation of this requirement, where compression steps are uniform only across layers of the same size.

\paragraph{Fitness Environment.}
Given the specified database, any compressed model is characterized by its compression level per unit (e.g. per layer). With $n$ units, each available in $m$ compression levels, our objective is to find
\begin{displaymath}
    \hat{M}^* = \argmax_{v \in [m]^n} f(\hat{M_v}) \quad \text{subject to} \quad g(\hat{M_v}) \leq C,
\end{displaymath}
where we are searching over the set of $n$-tuples over $[m]$. Assessing the performance of a model in practice typically involves benchmark tasks, which have limited scope and require lengthy evaluation. We address these challenges by using the base model as the gold standard and focusing solely on the relative degradation of our compressed models. To quantify this degradation, we measure the Kullback-Leibler (KL) divergence between the two models, as it has proven particularly robust with limited data. Empirically, we observed that already around 64K tokens of calibration data (corresponding to $8$ full sample sequences for Llama-3-8B) are sufficient to reliably determine the quality of the lightweight model. To avoid confusion, we will refrain from inverting the fitness function and from now on consider the minimization problem
\begin{displaymath}
    \hat{M}^* = \argmin_{v \in [m]^n} D_{KL}(P_M \parallel Q_{\hat{M}_v}) \; \, \text{subject to} \; \, g(\hat{M_v}) \leq C,
\end{displaymath}
where we speak of \emph{higher fitness} whenever the KL-Divergence is \emph{lower}.

\begin{algorithm2e} 
\small
    \LinesNotNumbered
    \SetKwInput{Init}{Initialization}
    \SetKwInput{Mut}{Mutation}
    \SetKwInput{Sel}{Selection}
    \SetKwInput{Opt}{Optimization}
    \SetKwInput{Eli}{Elitism}
    \caption{\looseness=-1\methodname{}: A $(1+\lambda)$-Evolutionary Algorithm with Level-Switch Mutation and Multi-Step Selection for Maximizing $f: [m]^n \rightarrow \mathbb{R}$.}   \label{alg:1+lambda}
    \Init{ 
        $\id{candidates} \gets []$ \;
        \For{$i \gets 1$ \KwTo \id{initialCandidates}}{ 
            $\id{candidate} \gets \id{sampleUniformly}()$\; 
            $\id{candidates}.\id{append}(\id{candidate})$\;
        }
        $x^{(1)} \gets \id{selectTopKFittest}(\id{candidates}, \newline \id{initialTokens}, K=1)$\;
    }
    \Opt{
        \For{$t \gets 1$ \KwTo $\infty$}{
            $\id{offspring} \gets []$\;
            \Mut{
                \For{$i \gets 1$ \KwTo $\lambda$}{
                    $y_i \gets x^{(t)}$\;
                    $y_i \gets \id{LevelSwitchMutation}(y_i)$\;
                    $\id{offspring}.\id{append}(y_i)$\;
                }
            }
            \Sel{
                \For{$\id{step} \gets 1$ \KwTo \id{selectSteps}}{
                    \Eli{
                        \If{$\id{step} = \id{selectSteps}$}{
                            $\id{offspring}.\id{append}(x^{(t)})$\;
                        }
                    }
                    $\id{offs.} \gets \id{selectTopKFittest}(\id{offs.},\newline \id{tokens}[\id{step}], K = \id{survivors}[\id{step}])$\;
                }
            }
            $x^{(t+1)}\gets \id{offspring}[0]$\;
        }
    }
\end{algorithm2e}

\paragraph{Algorithm.} \methodname{} starts from upon the classic $(1+\lambda)$-evolutionary algorithm, which maintains a single search point at any given time. In each generation, $\lambda$ offspring are generated by copying the parent and then applying a mutation operator to each copy. The offspring are then evaluated on the fitness function, and the fittest one is selected. As an \emph{elitist} evolutionary algorithm, the \oplea replaces its parent only if the best offspring has superior fitness. 

We change this standard algorithm in two important ways. The first is by introducing \emph{level-switch mutation}, a simple mutation operator that ensures high locality while preserving the compression constraint. The operator involves first randomly selecting one unit and increasing its compression level. Next, a second unit is sampled until one with a matching level step size is found, and its compression level is decreased. This approach ensures that 1) the compression constraint is preserved, and 2) the offspring model maintains high similarity to the parent model -- an important feature for achieving rapid convergence. 

The second modification is that we employ a very aggressive form of \emph{multi-step selection}. In the first stage, all $\lambda$ offspring are evaluated using only a fraction of a full sample. From this, only a small subset of the fittest offspring are selected to compete in the next stage, where they are evaluated on a significantly larger sample size. This process is repeated once more, and in the final stage, the few remaining offspring are evaluated against the parent using a "full" minibatch, consisting of approximately 20-50 times the number of tokens used in the first stage. 

For initialization, we apply the target level directly if it matches an available setting (e.g., all layers at 70\% sparsity for an average of 70\% sparsity). If the target falls between two compression levels (e.g., for block dropping), we initialize by randomly sampling candidates with some units compressed to the next lower level, and others to the next higher level, selecting the fittest among them.
A high level overview of this  procedure can be found in Algorithm~\ref{alg:1+lambda}.

\paragraph{Design Considerations.} 
Randomized search heuristics are heavily influenced by the exploration-exploitation dilemma, i.e. the trade-off between exploring a broader solution space and intensifying the search around the currently-best solutions. Many applications utilize sophisticated search procedures, such as genetic algorithms, to enhance exploration, that often maintain a large population, introduce crossover operations, and adopt non-elitist strategies, where parents have no chance of survival into the next generation. However, implementing these approaches for LLM compression would come with significant computational costs.

Crossover, for instance, is only effective if population diversity is preserved, for example measured by the sum of pairwise Hamming distances between individuals \citep{jansen_crossover_algorithmica, lengler_crossover_gecco24}. While this promotes a more thorough exploration of the search space, it requires allocating resources to less promising regions, which slows down the progress toward optimal solutions. Similarly, non-elitist algorithms, despite their ability to escape local optima \citep{lehre_escaping_optima, polynomialSlowdown1+lambda, exponentialSlowdown1+lambda}, also incur costs by frequently discarding potentially useful individuals. Consequently, these approaches should be reserved for situations where the fitness landscape is  rugged, and escaping local optima is critical to finding better solutions.

\paragraph{Convergence.} Contrary to many real-world problems, dynamic model compression with a carefully designed level database induces a notably smooth fitness environment, where small changes in the compressed model tend to lead to small changes in performance. A key insight into the effectiveness of evolutionary approaches is that, although the search space expands exponentially with the number of units considered, the maximum Hamming distance between any two search points in the search space increases only linearly. Therefore, as long as we receive a ``signal'' indicating the direction of improvement, even with seemingly limited progress per generation, we can converge rapidly to a high-quality solution.

To illustrate this, we consider the problem of removing pairs of consecutive blocks of Llama-3-8B. We perform a brute-force search over all possible $8008$ block removal configurations, where six pairs of blocks are removed. Our method identifies the optimal configuration by the 6th generation, having evaluated only 16 candidates for initialization and 8 candidates per generation. Figure~\ref{fig:block-removal-optimal} illustrates how the algorithm approaches the optimum in Hamming distance. 

Consequently, \methodname{} is heavily exploitation-focused: we rely on elitism, introduce minimal mutation, maintain only a single offspring, and therefore employ zero population diversity. 
We present ablations and a short discussion on these choices in Appendix~\ref{app:evo_search_ablations}. 
\methodname{} excels at optimizing smooth fitness environments, a capability we theoretically support by proving rapid convergence under an $\ell_1$-constraint for the class of linear functions. (Here, one bit corresponds to two compression levels, while each weight of the linear function corresponds to the ``saliency''. The $\ell_1$-constraint is now equivalent to a compression constraint.)
\begin{theorem}\label{theo:multipleOffspring}
    Let $n,k \in \mathbb{N}$ with $k \le n$ and consider the \oplea with $\lambda \in O(n/\log(n))$ and level-switch mutation. Then any linear fitness function  $f: \{\mathbf{x}\;|\; \mathbf{x} \in \{0,1\}^n, \| \mathbf{x} \|_1=n-k \} \rightarrow \mathbb{R}$ is optimized in expected  
    \begin{displaymath}
        O\left(k\cdot (n-k) \cdot \frac{1}{\lambda}\right) \textnormal{ generations.}
    \end{displaymath}
\end{theorem}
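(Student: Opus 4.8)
The plan is to run a \emph{drift analysis} on a suitable potential function measuring how far the current search point $x^{(t)}$ is from an optimum. For a linear function $f(\mathbf{x}) = \sum_i w_i x_i$ restricted to the slice $\|\mathbf{x}\|_1 = n-k$, an optimal configuration simply keeps the $n-k$ coordinates with the largest weights set to $1$ and the $k$ smallest set to $0$ (assume WLOG the weights are sorted $w_1 \ge w_2 \ge \dots \ge w_n$, so that the unique optimum up to ties is $\mathbf{x}^* = (1^{\,n-k}, 0^{\,k})$). A level-switch mutation picks a $1$-coordinate and flips it to $0$, and a $0$-coordinate and flips it to $1$; this is exactly a transposition, so it preserves the constraint, and it strictly improves fitness iff the $0\to1$ coordinate has larger weight than the $1\to0$ coordinate. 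The natural potential is the number of ``misplaced'' ones, i.e. the number of indices $i > n-k$ with $x_i = 1$; call it $D(x) \in \{0,1,\dots,\min(k,n-k)\}$. Since the algorithm is elitist and only accepts fitness-improving (or neutral) offspring, $D$ is non-increasing along the run, and reaches $0$ exactly when $x^{(t)}$ is optimal.

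\textbf{Key steps.} First I would show that a \emph{single} level-switch mutation reduces $D$ by one with probability $\Omega(D (n-D)/n^2) \ge \Omega(D(n-k)/n^2)$: one must select a misplaced one (there are $D$ of them, out of the $n-k$ ones) \emph{and} an ``empty slot'' among the top $n-k$ positions (there are $D$ such zero-coordinates among the $k$ zeros), and the level-step-matching requirement holds for a constant fraction of pairs by the level-database design. Crucially, such a mutation is always accepted, because swapping a misplaced one into a top slot strictly increases $f$. Second, I would lift this to the $(1+\lambda)$ setting: with $\lambda$ independent offspring, the probability that \emph{at least one} of them achieves a $D$-decreasing step is $1 - (1 - p)^\lambda \ge \min(1/2,\, \lambda p /2)$ for $p = \Theta(D(n-k)/n^2)$, and since selection keeps the fittest offspring, the potential drops whenever any offspring improves. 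This yields a multiplicative drift $\mathbb{E}[D(x^{(t)}) - D(x^{(t+1)}) \mid D(x^{(t)}) = d] \ge \Omega(\lambda \cdot d (n-k)/n^2)$ as long as $\lambda d (n-k)/n^2 = O(1)$, and a drift of $\Omega(1)$ once that product exceeds a constant. Third, apply the multiplicative drift theorem (with the standard additive-phase correction once drift saturates at $\Omega(1)$): summing $n^2/(\lambda (n-k) d)$ over $d = 1,\dots, D_0$ with $D_0 \le \min(k, n-k)$ gives $O\!\big(\frac{n^2 \log k}{\lambda(n-k)}\big)$; a tighter bookkeeping that tracks the number of ones below/above the threshold separately (the ``coupon-collector''-style count of $d(n-d)$ terms) tightens $\log k$ to a factor absorbing into $O(k(n-k)/\lambda)$ after noting $n^2/(n-k) \le 2n$ in the relevant regime and $D_0 \le k$. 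I should be careful to present the bound in the form $O(k(n-k)/\lambda)$ claimed: this is the ``fitness-level''/coupon-collector bound $\sum_{d=1}^{D_0} \frac{1}{\text{(per-step success prob)}}$, where the per-step success probability at potential $d$ is $\Theta(\lambda \min(1, d(n-d)/n^2))$, and $\sum_{d=1}^{\min(k,n-k)} \frac{n^2}{\lambda\, d (n-d)} = O(k(n-k)/\lambda)$ is not literally true unless one splits into the two regimes $d \le n/2$ vs.\ not and uses $n - d = \Theta(n)$ when $d \le n/2$ — I'll need to reconcile the constants here, most likely the intended statement uses the cruder per-step bound $\Omega(\lambda/n \cdot \min(d,n-k) \cdot \ldots)$ rather than multiplicative drift, giving the stated $O(k(n-k)/\lambda)$.

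\textbf{Main obstacle.} The genuinely non-trivial part — flagged by the authors themselves — is handling the $\lambda$ offspring correctly inside the selection step. Two subtleties arise: (i) the multi-step selection sub-procedure uses \emph{fractional} and \emph{increasing} sample sizes, so the ``fitness'' compared at intermediate sub-steps is a noisy estimate of the true $f$; for the theorem I will assume the idealized noiseless regime (i.e., enough tokens that comparisons are exact, consistent with the theorem treating $f$ as the true linear function), so this reduces to plain $(1+\lambda)$ selection with the best offspring retained. (ii) The offspring are \emph{not} independent in the sense that matters for a clean union bound on the drift — they share the same parent, but their mutations are drawn independently, so the events ``offspring $i$ makes a $D$-improving move'' \emph{are} independent, and the $1 - (1-p)^\lambda$ bound is valid; the care is in showing that conditioning on ``some offspring improves'', the accepted one has $D$ value at most $d - 1$ (true, since the fittest accepted offspring is at least as fit as any improving one, and for linear $f$ on this slice, fitness at least that of a $D=d-1$ point forces $D \le d-1$ — this uses that all $D$-decreasing single swaps from a fixed parent yield comparable fitness gains only up to constants, so one must instead argue directly that $\arg\max$ fitness over the offspring has minimal $D$, which requires the monotone relationship between $f$ and $D$ along accepted trajectories). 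Resolving (ii) cleanly — i.e. that \emph{selecting by fitness} is at least as good as \emph{selecting by potential} — is where I expect to spend the most effort, and is presumably why the authors call the drift adaptation ``quite non-trivial.'' The constraint $\lambda \in O(n/\log n)$ enters precisely to ensure $(1-p)^\lambda$ is bounded away from degenerate behavior and that the $\Omega(1)$-drift phase does not ``overshoot'', so I would verify that bound is used exactly once, in the saturation regime.
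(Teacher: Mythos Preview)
Your potential $D$ (the number of misplaced ones) is exactly the potential the paper uses in its \emph{warm-up} argument for $\lambda = 1$, and the paper explicitly explains why that argument fails for $\lambda > 1$. The gap is in your step (ii): your claim that ``fitness at least that of a $D = d-1$ point forces $D \le d-1$'' is simply false for general linear $f$, and with it the inference ``the potential drops whenever any offspring improves''. Concretely, take $n=8$, $k=4$, increasing weights $(1,2,3,4,5,6,7,1000)$ (so the optimum is $0^4 1^4$), and parent $x = (1,1,0,0,1,1,0,0)$ with $D(x)=2$. The swap $5\leftrightarrow 8$ produces $(1,1,0,0,0,1,0,1)$ with $D=2$ and fitness $1009$; the swap $1\leftrightarrow 7$ produces $(0,1,0,0,1,1,1,0)$ with $D=1$ and fitness $20$. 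If those are two of the offspring, selection picks the first and $D$ stays at $2$, even though a $D$-reducing offspring was present. So ``some offspring reduces $D$'' does \emph{not} imply the survivor has smaller $D$, and your drift bound on $D$ collapses. (Incidentally, the single-offspring success probability under level-switch mutation is $d^2/(k(n-k))$, not $\Theta(d(n-d)/n^2)$; but that is minor compared to the selection issue.)

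The paper's resolution is to change the potential, not to rescue $D$. It tracks the \emph{number of inversions} $X^{(t)} = \sum_i (1-x_i^{(t)})\,i - k(k+1)/2$. Because every fitness-improving swap is precisely the resolution of an inversion, $X^{(t)}$ is non-increasing under elitist selection no matter \emph{which} fitter offspring wins. The difficulty then shifts to bounding how much $X^{(t)}$ drops, since the fittest offspring does not resolve a uniformly random inversion. This is handled by a two-phase argument. In Phase~2 (at most $k(n-k)/\lambda$ inversions) one conditions on \emph{exactly one} offspring beating the parent, which happens with probability $\Omega(\lambda s/(k(n-k)))$ at $s$ inversions; that single fitter offspring then resolves a uniformly random inversion, and an auxiliary combinatorial lemma shows the average inversion has spread $\Omega(\sqrt{s})$, giving variable drift $\Omega(\lambda s^{3/2}/(k(n-k)))$ and hence $O(k(n-k)/\lambda)$ generations by Johannsen's theorem. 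Phase~1 (more than $k(n-k)/\lambda$ inversions) is the harder part: it runs a separate multiplicative-drift argument on per-zero-bit potentials $Z_j$ (one plus the number of $1$-bits to the left of the $j$-th zero), applies a multiplicative-drift tail bound to each $Z_j$, and union-bounds over all $k$ zeros. That union bound is where the assumption $\lambda \in O(n/\log n)$ is actually consumed, not in any ``saturation regime'' as you guessed.
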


\paragraph{Discussion.} The proof is based on stochastic drift analysis and can be found in Appendix~\ref{app:convergence_proof}.
 Notably, by increasing the number of offspring per generation, we can reduce the number of generations required for convergence, with the reduction scaling proportionally to $\lambda$ up to a reasonably large value. Since our approach uses a highly aggressive form of multi-step selection, the benefit is not simply a zero-sum trade-off. Evaluating many offspring in each generation incurs a significantly lower per-offspring computational cost, leading to a substantial speedup in convergence time. This makes the algorithm highly efficient in relatively smooth fitness environments.

\section{Experiments}\label{sect:experiments}

We now validate the effectiveness of \methodname{} for determining the optimal layer-wise compression across three approaches: (1) \textbf{layer dropping}, where the goal is to isolate the ``optimal'' set of blocks to drop given a target ratio, (2) \textbf{non-uniform unstructured sparsity} and (3) \textbf{non-uniform quantization}, where we are given a set of compression options per layer (sparsities or bit-widths), and the goal is to find the ``optimal'' configuration that matches a certain model size. 
We focus on LLM compression, given the major interest in the reduction of their model size and inference latency, but our method is general and can be applied to any neural network architecture and application domain. 

\paragraph{Experimental Setup.}
We consider base models from the Llama-2 and Llama-3~\citep{touvron2023llama} families, Mistral-v0.3~\citep{jiang2023mistral}, and the instruction-tuned Phi3-Medium-instruct-128k model~\citep{abdin2024phi3}. We adopt KL-divergence as our fitness function as it provides a stronger and more robust signal compared to perplexity, reflecting the predictive distribution of the original model. We present ablations to validate this choice in Appendix~\ref{app:evo_search_ablations_quantization}.

Concretely, our algorithm works as follows: Initially, for the case of quantization between available bit widths (e.g. 2.5 bit) and block dropping, we produce a number of initial configurations (around 32), evaluate them on a few data samples, and take the fittest one. For quantization with available target bitwidth and unstructured sparsity, we simply initialize using the uniform configuration. Then, we generate new offspring in each generation by making a small number of random switches in compression levels, where the number of switches is sampled from \texttt{min(randint(1,3), randint(1,3))} and compression levels are exchanged in such a way that the overall compression ratio is maintained. We perform selection in multiple steps by iteratively choosing only the best configurations for survival, where each round uses progressively more tokens and has fewer survivors. To ensure elitism, we add the current parent to the candidate pool in the last stage of selection. Finally, after two or three of such stages, we take the last remaining configuration and adopt it as the population for the next round. We selected the number of generations, offspring, and tokens based on the search space size but found the search to be highly robust. Even drastic changes in hyperparameter settings yield similar results (see, e.g., Figure~\ref{fig:convergence_pruning}). The detailed parameter setting is described in Appendix~\ref{app_sec:hyperparameter_setting}. For our main results we used a fixed number of generations for optimization, which was chosen conservatively to better understand convergence behavior. In practice, stopping the search early can significantly reduce runtime requirements. We discuss a simple stopping criteria and provide runtime comparisons in Appendix~\ref{app:sec:running_time_comp}.

To perform per-layer compression via unstructured sparsity and quantization, we adopt the data-aware compression methods SparseGPT~\citep{frantar2023massive} and
GPTQ~\citep{frantar2022gptq}. For this purpose, we use Fineweb-Edu \citep{penedo2024finewebdatasetsdecantingweb} as a source of clean and diverse calibration data. Following \citet{egiazarian2024extreme}, we fix the total number of calibration tokens to 8 million (8M). For a fair comparison, all competitive methods employ the same calibration data.% The code and all configurations found by \methodname{} are available at  \href{https://github.com/IST-DASLab/EvoPress}{https://github.com/IST-DASLab/EvoPress}.

\paragraph{Evaluation.} We follow a standard  evaluation protocol~\citep{frantar2022gptq}, measuring perplexity  on the WikiText-2 \citep{merity2016pointer} and C4 \citep{raffel2019exploring} datasets for language performance and accuracy on zero-shot evaluations on standard benchmarks:  WinoGrande~\citep{DBLP:journals/cacm/winogrande2021}, PiQA~\citep{tata2003piqa}, HellaSwag~\citep{DBLP:conf/acl/hellaswag2019}, ARC-easy and ARC-challenge~\citep{arc_allenai} via the LM Eval Harness \citep{eval-harness}.  

\subsection{Application 1: Depth Pruning}\label{subsect:block_dropping}
We first apply \methodname{} on Depth Pruning. Although removing entire transformer blocks generally results in large accuracy losses, this approach recently attracted attention in the context of initializing smaller models, as it guarantees speedups proportional to the sparsity  \citep{weight_subcloning, shortened_llama}. Additionally, block dropping provides insights into the capabilities of transformer models, making it relevant for interpretability.
We will compare against the following baselines: (1) \textbf{Shortened Llama} \citep{shortened_llama}, which scores blocks on the perplexity change after removal; (2)  \textbf{ShortGPT} \citep{shortgpt}, where blocks are scored based on the average cosine similarity between input and output embeddings, including the residual stream; (3) \textbf{Weight Subcloning} \citep{weight_subcloning}, where blocks are scored using the ratio $||f(x)||/||f(x)+x||$, where $x$ is the input embedding and $f(x)$ is the block's output, excluding the residual stream; (4) \textbf{Sliding Window Cosine Similarity} \citep{ineffect_of_deeper}, where sets of consecutive blocks are scored based on the cosine similarity between embeddings before and after the blocks, including the residual stream.
While \citet{ineffect_of_deeper} directly score entire removal configurations, the other approaches determine block removals based on their isolated scores.

\begin{figure}[htb]
     \includegraphics[width=\linewidth]{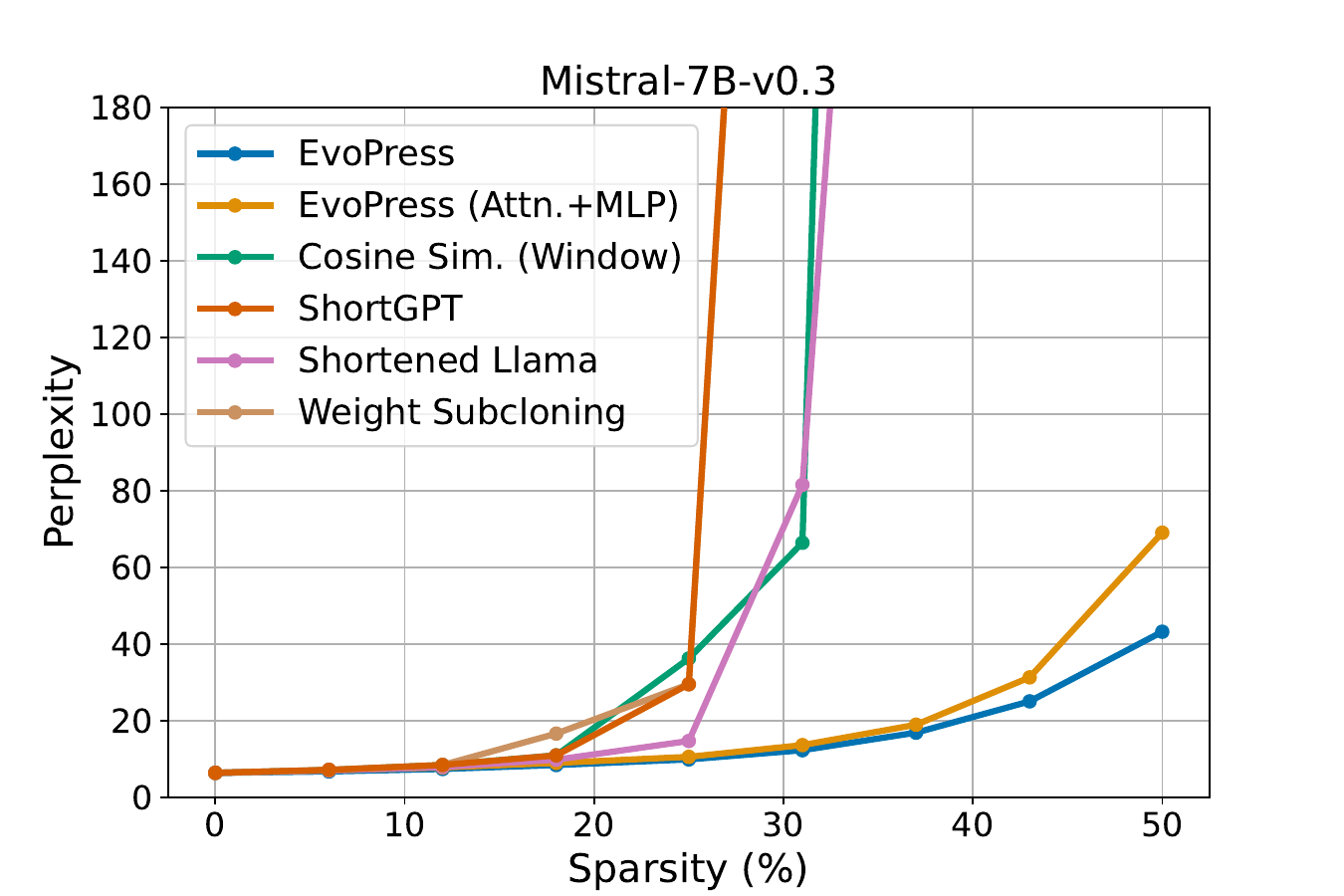}
\caption{Depth pruning results, on Mistral-7B-v0.3. Relative to all prior methods, \methodname{} shows significantly lower PPL gap relative to the uncompressed model, with remarkably large gaps at medium compression rates.}
\label{fig:dropping}
\vspace{-3mm}
\end{figure}

\begin{figure*}[htb]
\centering
\begin{minipage}{0.48\textwidth}
    \includegraphics[width=\linewidth]{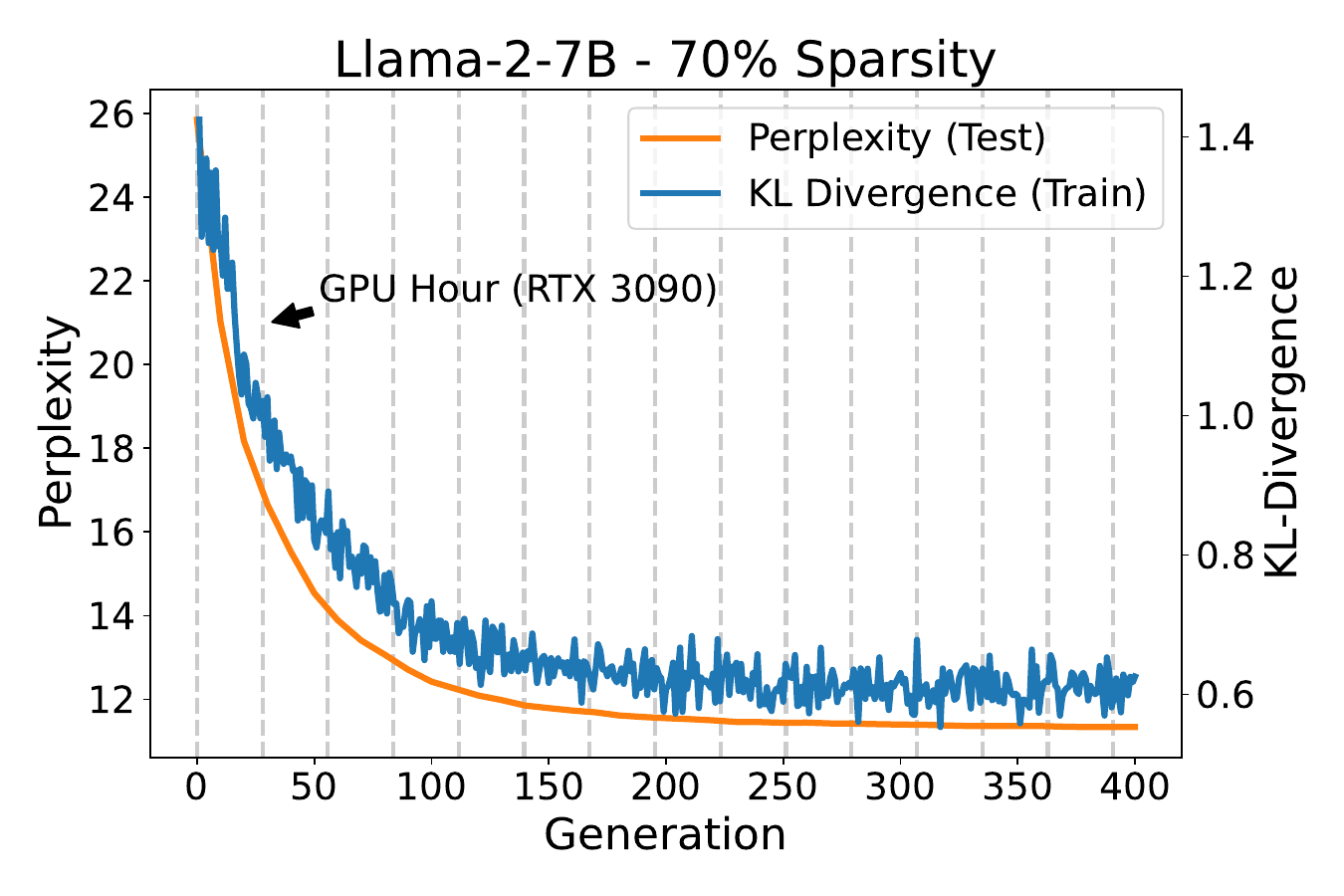}
\end{minipage}
\hfill
\begin{minipage}{0.48\textwidth}
    \includegraphics[width=\linewidth]{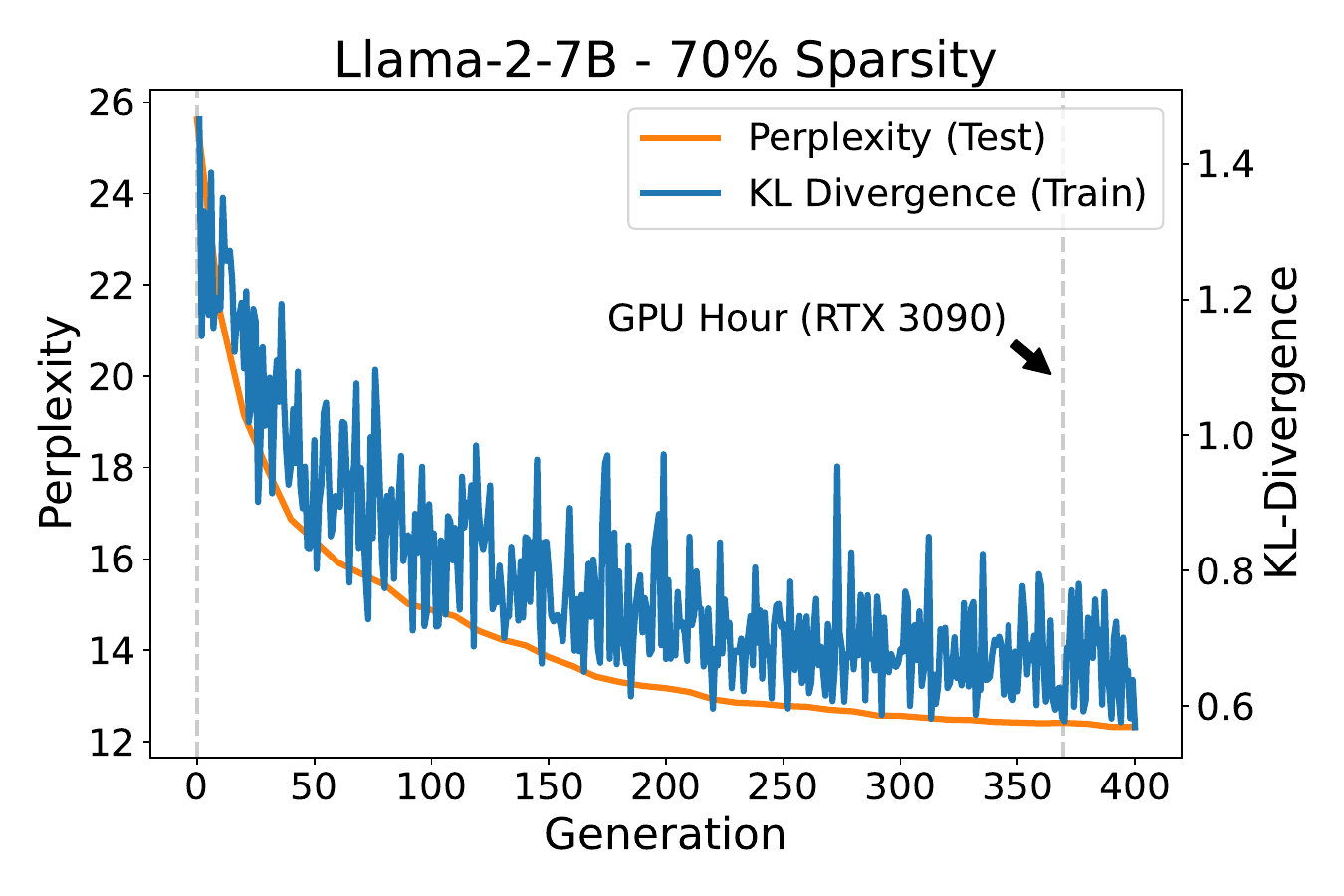}
\end{minipage}
\caption{\textbf{Left}: The convergence of \methodname{} vs. number of generations and wall-clock time (on a single RTX 3090 GPU with 24GB RAM) for Llama-2-7B. We observe convergence close to optimum in 5-6h; \textbf{Right}: Convergence of the ``super-fast'' version which reduces the number of tokens used for each evaluation. It converges to similar accuracy in little over one hour, in the same setting. The KL-Divergence corresponds to the fitness of the survivor in each generation, which is measured on a random minibatch of the entire training dataset. The perplexity is computed on the entire test dataset. }
\vspace{-2mm}
\label{fig:convergence_pruning}
\end{figure*}

\paragraph{Search Space.}
In our approach, attention and MLP modules are treated independently rather than as a single unit. For each module, there are two options: either retain it or remove it. To achieve a target sparsity/depth, we initially remove an equal number of attention and MLP modules. During mutation, we allow compression level adjustments only between modules of the same type. We leave it open for future research to remove this constraint to allow flexibility in the number of removed attention and MLP modules. 

\paragraph{Results.}
Figure~\ref{fig:dropping} compares our method with baselines from previous work on Mistral-7B-v0.3. For a better comparison, we also included results where only entire transformer blocks are removed (Attn.+MLP). \methodname{} consistently outperforms all previous methods, showing significant improvements even at medium sparsity levels. While all baseline methods fail entirely beyond 31.25\% sparsity, \methodname{} identifies functional submodels even when removing half of the model. To our knowledge, this is the first method to achieve such results. We observed similar collapses in Llama-2-7B, Llama-3-8B and Llama-3.1-8B. Overall, \methodname{} consistently outperforms all baselines across all tested models and sparsities (see Appendix~\ref{app:depth_full_perplexity} for full results), and does so in a matter of minutes (Appendix~\ref{appendix:sec:conv_dropping}). We provide runtime comparisons as well as additional comparisons to the iterative search methods SLEB \cite{song2024sleb} and BlockPruner \cite{zhong2024blockpruner} in Appendix~\ref{app:sec:running_time_comp}.

All four previous methods rely on human-crafted scoring methods to identify the optimal combination of transformer blocks to remove. This is not only suboptimal, but also prone to bias, as their results may reflect the characteristics of the method itself rather than the model's true behavior. Specifically, we found that most scoring methods tend to favor deeper blocks, resulting in highly similar removal configurations across different prior scoring methods (Appendix Table~\ref{tab:appendix_removal_order}). This likely occurs because methods that bias towards deeper blocks generally perform better than those that focus on earlier blocks, although neither may be optimal. In contrast, \methodname{} employs an unbiased approach, offering more accurate and meaningful insights into the model. 
%As shown in Figure~\ref{fig:dropping}, we found that the deeper layers are not necessarily the least important, which contradicts conclusions drawn in prior work \citep{ineffect_of_deeper, shortgpt}. 

\subsection{Application 2: Unstructured Sparsity}\label{subsect:sparsity}

Next, we examine performance for \emph{unstructured sparsity}, which offers more fine-grained compression. The standard approach is to allocate sparsity \emph{uniformly across layers}. However, some layers may be more sensitive to sparsity, which can significantly impact the model's output. To address this, OWL \citep{owl} introduces the Layer Outlier Distribution (LOD) metric as a measure of layer saliency, and computes a sparsity profile that is weighted by LOD. 
We compare \methodname{} with both uniform sparsity and OWL. For OWL we used the same hyperparameter grid as the original work and took the configuration yielding the best perplexity for each model.

\paragraph{Search Space.} 
Sparsity levels are generated as follows: For each layer, we first produce the base level corresponding to the targeted average sparsity. Then, we generate both higher and lower compression levels, where the difference between two levels corresponds to a fixed number of weights. In our experiments, we used a ``step size'' of 1M weights uniformly. This approach enables the mutation of compression levels across all layers, independently of their size. We adopt SparseGPT \citep{frantar2023massive} for layer pruning. We provide results of \methodname{} with Wanda pruning \citep{wanda} in Appendix~\ref{app:sec:running_time_comp}.

\begin{table}[htb]
\footnotesize
\centering
\setlength\tabcolsep{3pt}
\caption{Performance of various methods at 70\% average sparsity. \methodname{} outperforms prior methods both in terms of validation perplexity (PPL) and zero-shot accuracy.   }\label{tab:sparsity_70}
\resizebox{\linewidth}{!}{
\begin{tabular}{l|l|cc|c}
    \toprule
    \bf{Model} & \bf{Method} & \bf{Wiki2$\downarrow$} & \bf{C4$\downarrow$} & \bf{Task Avg.$\uparrow$} \\
    \midrule
    \multirow{5}{*}{Mistral-7B-v0.3} 
    & Dense & 4.82 & 7.72 & 68.7 \\
    \cmidrule{2-5}
    & Uniform & 23.08 & 30.03 & 49.9 \\
    & OWL & 17.22 & 21.66 & 51.9 \\
    & EvoPress &  \bf{14.42} & \bf{16.46} & \bf{53.8} \\
    \midrule
    \multirow{5}{*}{Llama-3-8B} 
    & Dense & 5.54 & 7.10 & 68.6  \\
    \cmidrule{2-5}
    & Uniform & 85.84 & 98.35 & 44.1  \\
    & OWL & 48.07 & 52.32 & 48.4  \\
    & EvoPress & \bf{28.76} & \bf{33.72} & \bf{50.8}  \\
    \midrule
    \multirow{5}{*}{Phi-3-Medium-14B}
    & Dense & 4.02 & 8.31 & 73.2 \\
    \cmidrule{2-5}
    & Uniform & 16.66 & 24.73 & 56.5 \\
    & OWL & 15.66 & 23.38 & 55.4 \\
    & EvoPress & \bf{13.83} & \bf{19.13} & \bf{59.8} \\
  \bottomrule
\end{tabular}
}
\end{table}

\paragraph{Experimental Results.} 
We compare different methods for pruning to 50\%, 60\% and 70\% unstructured sparsity. 
We report the 70\% results in Table \ref{tab:sparsity_70}; the  
50\% and 60\% results can be found in Appendix Tables \ref{tab:sparsity_50} and \ref{tab:sparsity_60}, respectively.
As illustrated in Table \ref{tab:sparsity_70}, \methodname{} successfully finds better profiles than uniform sparsity and noticeably outperforms competitive methods
on PPL and zero-shot average accuracy by large margins on all models.

Examining sparsity profiles (Appendix Figures~\ref{fig:sparsity_distribution_block_Meta-Llama-3_1-8B} and~\ref{fig:sparsity_distribution_proj_Meta-Llama-3_1-8B}), we observe that \methodname{} prunes the first blocks less aggressively, blocks in the beginning of the second half of the model more aggressively while keeping the even deeper blocks relatively dense. Notably, \methodname{} assigns high importance to the \texttt{v\_proj} matrices, reducing its sparsity to below 45\%, compared to an overall average of 70\%.

\paragraph{Running Time.} \methodname{} is also  time-efficient. Figure~\ref{fig:convergence_pruning} illustrates the rapid convergence of our method vs. iterations and time, with steady improvements in test perplexity. 
Moreover, by reducing the number of tokens used in the multi-step selection evaluation, by $4\times$ in the first step and $8\times$ in the last step, and making each generation have fewer offspring, we can significantly speed up the search. This ``super-fast'' version converges in a little over one GPU hour to similar test PPL (Figure~\ref{fig:convergence_pruning}, right), demonstrating the robustness of \methodname{}, which can lead to further gains.

\begin{table}[htb]
\footnotesize
\centering
\setlength\tabcolsep{3pt}
\caption{Performance of various profiles at 3 bit quantization, for PPL and avg. zero-shot accuracy.}\label{tab:quant_3bit}
\resizebox{\linewidth}{!}{
\begin{tabular}{c|c|cc|c}
    \toprule
    \bf{Model} & \bf{Method} & \bf{Wiki2$\downarrow$} & \bf{C4$\downarrow$} & \bf{Task Avg.$\uparrow$} \\
    \midrule
     \multirow{5}{*}{Mistral-7B-v0.3} & Dense
    &  4.82 & 7.72 & 68.7 \\
    \cmidrule{2-5}
    & Uniform &  5.54 & 8.57 & 66.3 \\
    & DP & 5.79	& 8.84	& 66.0 \\
    & EvoPress  & \bf{5.21} & \bf{8.42} & \bf{67.1} \\
    \midrule
    \multirow{5}{*}{Llama-3-8B} & Dense
    & 5.54 & 7.10 & 68.6 \\
    \cmidrule{2-5}
    & Uniform & 12.19 & 15.76 & 60.2 \\
    & DP & 29.00 & 20.03 & 61.3 \\
    & EvoPress & \bf{7.49} & \bf{12.03} & \bf{64.3} \\
    \midrule
    \multirow{5}{*}{Phi-3-Medium-14B} & Dense
    & 4.02 & 8.31 & 73.2 \\
    \cmidrule{2-5}
    & Uniform & 5.18 & 9.05 & 70.0 \\
    & DP & 5.72 & 9.71 & 69.1 \\
    & EvoPress & \bf{5.09} & \bf{9.00} & \bf{70.8} \\
  \bottomrule
\end{tabular}
}
\end{table}

\subsection{Application 3: Quantization}\label{subsect:quantization}

Finally, we apply \methodname{} to the more challenging task of non-uniform neural network quantization, where the widely adopted baseline is \emph{uniform per-layer quantization} \citep{frantar2022gptq, lin2023awq, chee2023quip}. Additionally, we consider a DP-based approach for comparison.
% However, one could expect that different layers exhibit varying sensitivity to quantization, similar to the behavior observed with unstructured sparsity. Furthermore, most quantization techniques are restricted to certain bitwidths; for example, scalar quantization methods like GPTQ are limited to integral bitwidths, with only a few of these bitwidths being practical for current hardware implementations. Therefore, non-uniform quantization is critical for achieving a broader range of compression targets.
%As baselines, we consider uniform quantization and DP search. 
(While OWL has also been applied to quantization, the authors found that it underperforms even relative to uniform per-layer quantization~\citep{owl}.) The DP search is very similar to SPDY~\citep{frantar2022spdy}, where the goal is to minimize the Normalized Mean Squared Error (NMSE), defined as $\mathrm{NMSE} = \Vert \hat{Y} - Y\Vert_2^{2} /\Vert Y \Vert_2^{2}$, where $Y$ represents the original model output at a layer, and $\hat{Y}$ the output of the compressed model. Then, the optimal compression allocation can be determined via a dynamic programming (DP) approach. The full SPDY method applies a second iterative random search step, which is very expensive to implement at LLM scale, and is therefore omitted. 
%We create configurations with varying bitwidths and run EvoPress to determine the optimal configuration for the target compression ratio.

\paragraph{Search Space.} 
For each linear layer, we produce different configurations via GPTQ \citep{frantar2022gptq} with a standard  group  size of $128$. In each step of the evolutionary search, the bitwidth of some layers is increased while the bitwidth of others is decreased. To facilitate uniform transitions between compression levels, quantization options differ by integral  bits (1 bit in the following). Since different layers may have different sizes, we allow swaps only between projections with the same number of weights.

\begin{figure}
    \centering
    \includegraphics[width=\linewidth]{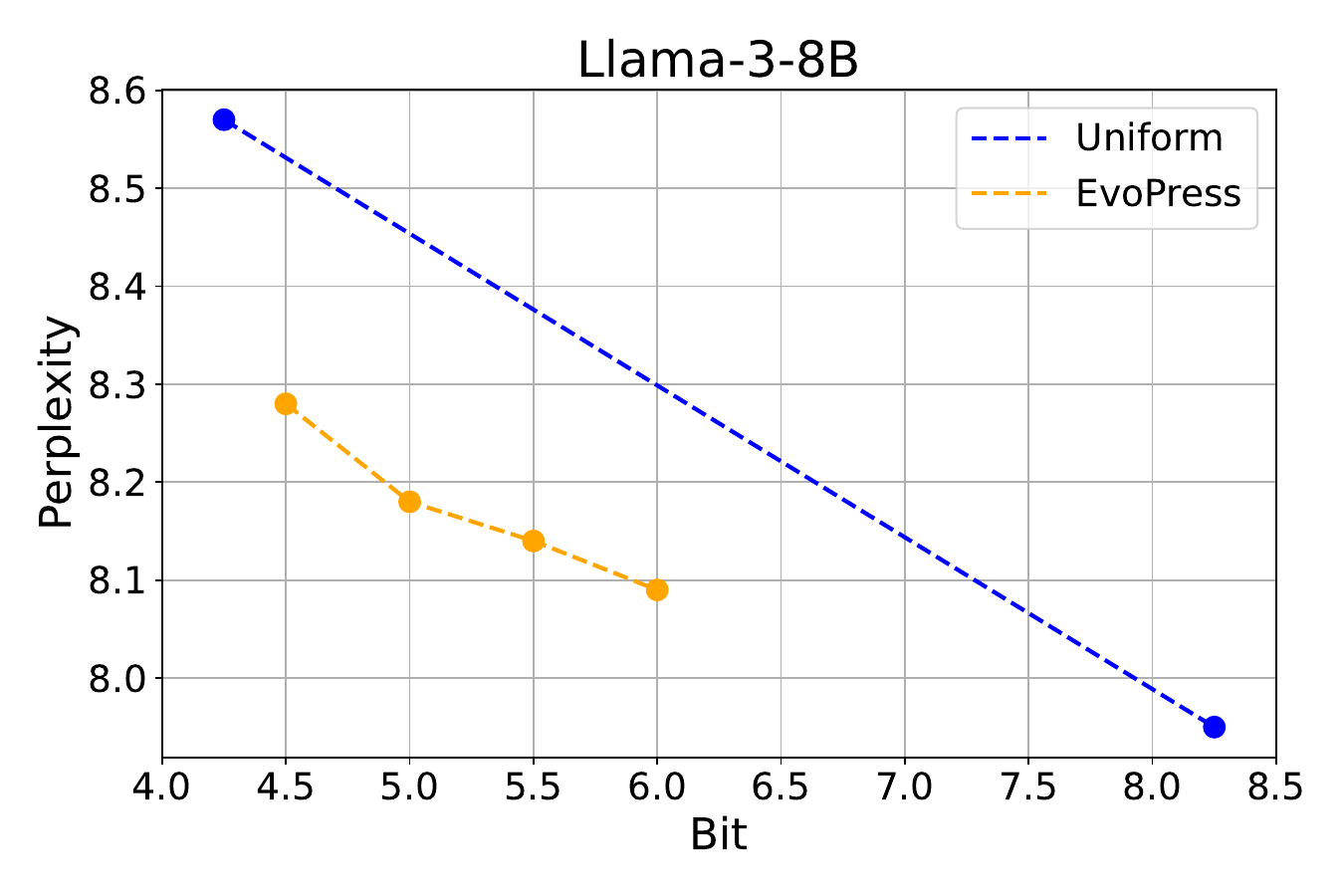}
    
    \caption{\methodname{} enables non-uniform quantization with end-to-end speedups in vLLM. vLLM supports only 4-bit and 8-bit quantization, while requiring the query/key/value matrices as well as the up/gate projections to share the same bitwidth. The displayed average bitwidths include the overhead from groupwise scales.}
    \label{fig:vllm}
\end{figure}

\paragraph{Experimental Results.} 
To validate the effectiveness of \methodname{}, we consider the challenging problem of  quantization to 3 bits and below.
For this compression rate, uniform GPTQ quantization faces significant performance drops, motivating a dynamic quantization bitwidth allocation. We produce quantization levels at 2, 3, 4, 5, and 6 bits and search for an optimal compression profile with respect to the fitness function. The results in Table~\ref{tab:quant_3bit} indicate that non-uniform quantization with \methodname{} produces superior models compared to the baseline methods. The improvements are even more pronounced at 2.25 bit and 2.5 bit quantization, as detailed in Appendix~\ref{app:subsec:225and25}. To show the real-world applicability of \methodname{}, we additionally tested a restricted search limited to vLLM-supported options. vLLM allows only 4-bit and 8-bit quantization, and requires uniform bitwidth for query/key/values matrices and up/gate projections within a block. Figure~\ref{fig:vllm} demonstrates strong mixed precision quantization even under these constraints.

We visualize the configurations found by \methodname{} for Llama-3.1-8B in Appendix Figures~\ref{fig:quantization_distribution_block_Meta-Llama-3_1-8B} and~\ref{fig:quantization_distribution_proj_Meta-Llama-3_1-8B}. We observe that the second and final blocks are compressed less aggressively, while the first block undergoes the highest compression. This contrasts with unstructured sparsity, where the first block is among the least compressed. Therefore, dynamic compression allocation must account for the specific compression method used, which underscores the importance of automated compression allocation.

Overall, we observe that  \methodname{} yields significant accuracy improvements (e.g., 4.1 on the zero-shot averages for Llama-3-8B), compared to the uniform profile. Moreover, the improvement over the next-best method is always significant, both in terms of perplexity and zero-shot accuracy.

\section{Conclusion}
\label{sec:conclusion}

We have presented \methodname{}, an optimization framework for non-uniform compression. \methodname{} is based on a new evolutionary search algorithm with low sample and iteration complexity, especially well-suited to loss landscapes in LLM compression. Specifically, we have shown that \methodname{} can converge extremely fast to  accurate configurations for various non-uniform LLM compression problems, and is also fast to execute in practice. We also emphasize the breadth of our study, our method was implemented and tested on three different compression approaches, relative to prior work which largely focused on a single application. 
Interesting directions we did not investigate are 1) combining \emph{different compression approaches} into the same search space, and 2) finer-grained structured pruning. We plan to investigate this in future work. 

\section*{Impact Statement}
This paper presents work whose goal is to advance the field of Machine Learning. There are many potential societal consequences of our work, none of which we feel must be specifically highlighted here.

\bibliography{example_paper}
\bibliographystyle{icml2025}

%%%%%%%%%%%%%%%%%%%%%%%%%%%%%%%%%%%%%%%%%%%%%%%%%%%%%%%%%%%%%%%%%%%%%%%%%%%%%%%
%%%%%%%%%%%%%%%%%%%%%%%%%%%%%%%%%%%%%%%%%%%%%%%%%%%%%%%%%%%%%%%%%%%%%%%%%%%%%%%
% APPENDIX
%%%%%%%%%%%%%%%%%%%%%%%%%%%%%%%%%%%%%%%%%%%%%%%%%%%%%%%%%%%%%%%%%%%%%%%%%%%%%%%
%%%%%%%%%%%%%%%%%%%%%%%%%%%%%%%%%%%%%%%%%%%%%%%%%%%%%%%%%%%%%%%%%%%%%%%%%%%%%%%
\newpage
\appendix
\onecolumn

\section{Convergence Proof of \methodname{}}
\label{app:convergence_proof}

\subsection{A Warm-Up Argument for a Single Offspring }

The overall goal of this section is to prove Theorem~\ref{theo:multipleOffspring}. As the main argument is quite complex, relying heavily on stochastic drift analysis, we begin with a warm-up, namely by presenting a simpler proof for the restricted case where $\lambda=1$.

Unlike the practical application of Algorithm~\ref{alg:1+lambda}, this section assumes that each fitness evaluation returns the exact, or 'true,' fitness value, ignoring any noise introduced by minibatching. Additionally, our results hold for any initialization.  To align with standard notation in the runtime analysis of evolutionary algorithms, we will count generations starting from zero (i.e., using 0-based indexing).

\begin{theorem}[Single offspring] \label{theo:singleOffspring}
    Let $n,k \in \mathbb{N}$ with $k \le n$ and consider the \opoea with level-switch mutation. Then any linear fitness function  $f: \{\mathbf{x}\;|\; \mathbf{x} \in \{0,1\}^n, \| \mathbf{x} \|_1=n-k \} \rightarrow \mathbb{R}$ is optimized in expected  
    \begin{displaymath}
        O(k\cdot (n-k)) \textnormal{ generations.}
    \end{displaymath}
    
\end{theorem}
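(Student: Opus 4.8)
The plan is to analyze the $(1+1)$-EA with level-switch mutation as a random walk on the set of feasible points $\{\mathbf{x} \in \{0,1\}^n : \|\mathbf{x}\|_1 = n-k\}$, and to track the Hamming distance to the unique optimum $\mathbf{x}^*$ via stochastic drift analysis. Since $f$ is linear, write $f(\mathbf{x}) = \sum_i w_i x_i$; the optimum $\mathbf{x}^*$ sets to $1$ exactly the $n-k$ coordinates with the largest weights. Without loss of generality assume the weights are distinct (perturb ties, or argue the bound is only improved by ties), and relabel so that $w_1 > w_2 > \dots > w_n$; then $x^*_i = 1$ iff $i \le n-k$. First I would observe that level-switch mutation picks a uniformly random $1$-coordinate $j$ and a uniformly random $0$-coordinate $j'$ and swaps them (all level steps are equal here, so the ``matching step size'' condition is vacuous), so one mutation changes the Hamming distance by $0$ or $\pm 2$, and elitism means a swap is accepted iff it does not decrease $f$, i.e. iff $w_{j'} \ge w_j$.

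The key structural observation is that, because of elitism with a linear function, the current point $\mathbf{x}^{(t)}$ always has exactly $d/2$ coordinates that are ``wrongly'' $1$ (indices in $\{n-k+1,\dots,n\}$) and $d/2$ coordinates that are ``wrongly'' $0$ (indices in $\{1,\dots,n-k\}$), where $d$ is the current Hamming distance; call $B$ the set of wrong $1$s and $G$ the set of wrong $0$s, $|B| = |G| = d/2 =: i$. A strictly improving move is any swap of some $j \in B$ with some $j' \in G$ — there are $i^2$ such ordered pairs out of $(n-k)\cdot k$ total — and each such move decreases $d$ by exactly $2$. (Swaps that raise $f$ but only exchange a wrong-$1$ for a correct-$0$, or a correct-$1$ for a wrong-$0$, do happen, but one can check they keep $d$ unchanged; I would verify this bookkeeping carefully as it is where the ``exactly $i$ wrong on each side'' invariant is used.) Hence from a state with $d = 2i > 0$, the probability of decreasing $d$ in one generation is at least $i^2 / (k(n-k))$, and $d$ never increases. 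This is the main step and the main obstacle: getting the invariant and the case analysis of which accepted moves change $d$ exactly right.

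Given that, I would finish with a standard multiplicative/additive drift argument. Let $T_i$ be the expected number of generations to go from distance $2i$ to distance $2(i-1)$; since each generation independently succeeds with probability $\ge i^2/(k(n-k))$, we get $\mathbb{E}[T_i] \le k(n-k)/i^2$. The total expected runtime from the worst case $d = 2i_{\max}$ with $i_{\max} \le \min(k, n-k)$ is then
\begin{displaymath}
    \mathbb{E}[T] \le \sum_{i=1}^{i_{\max}} \frac{k(n-k)}{i^2} \le k(n-k) \cdot \frac{\pi^2}{6} = O(k(n-k)),
\end{displaymath}
which is the claimed bound. (Alternatively one can phrase this directly as multiplicative drift on $d^2$ or on a potential like $\sum_{j \in B \cup G}$-type quantity, but the level-by-level sum is cleanest.) I would also note for completeness that the result is independent of the initialization precisely because $d$ is non-increasing and the per-step success probability depends only on the current $d$, and that the warm-up already exhibits the two ingredients — the ``$i$ wrong on each side'' invariant and summing $1/i^2$ — that the general $\lambda$ proof must combine with an analysis of the best of $\lambda$ offspring.
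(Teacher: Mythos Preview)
Your proposal is correct and essentially identical to the paper's proof: the paper tracks the potential $X^{(t)}=$ number of $1$'s in the first $k$ positions (with weights sorted increasingly and optimum $0^k1^{n-k}$), which is exactly your $d/2$, shows it is non-increasing with $\mathbb{P}[X^{(t+1)}=j-1\mid X^{(t)}=j]=j^2/(k(n-k))$, and sums the geometric means $k(n-k)\sum_j 1/j^2$. One small remark on your case analysis: you explicitly treat the two accepted-but-$d$-preserving swaps (wrong-$1$ $\leftrightarrow$ correct-$0$ and correct-$1$ $\leftrightarrow$ wrong-$0$), but to conclude ``$d$ never increases'' you also need the fourth case --- correct-$1$ swapped with correct-$0$ --- which would increase $d$ by $2$; this swap has $j\le n-k<j'$ and hence $w_{j'}<w_j$, so it is rejected, completing the invariant.
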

\begin{proof}
Let $w \in \mathbb{R}^n$ be the weights associated with the linear function such that $f(x) = \sum_{i=1}^n x_i \cdot w_i$. To derive an upper bound, we can assume that no two weights are equal\footnote{Formally, this can be shown using stochastic domination, which involves coupling the potentials in both cases and proving that, given the same randomness, one is always at least as large as the other.}. Furthermore, assume without loss of generality that these weights are sorted increasingly, meaning $w_1 < w_2 < ... < w_n$, and that $k \le (n-k)$, as the other case follows from symmetry. Since $f$ is defined on the bit strings with exactly $k$ 0's its unique optimum is now given by $x_{\text{opt}}=0^{k} 1^{n-k}$. Denote by $x^{(t)}$ the search point at step $t$ and let 
\begin{displaymath}
T = \inf\{ t \geq 0 \mid x^{(t)} = x_{\text{opt}}\}
\end{displaymath}
be the number of generations required until the optimum is found. \\
Define $X^{(t)} = \sum_{j=1}^k x^{(t)}_j$ as the random variable that captures the number of $1$'s in the first $k$ bits of the search point at step $t$. We observe the following:
\begin{enumerate}
    \item $X^{(t)} = 0 \Leftrightarrow x^{(t)} = x_{\text{opt}}$;
    \item $X^{(t)}$ is non-increasing;
    \item $X^{(t)}-X^{(t+1)} \le 1$;
    \item $X^{(0)} = \sum_{j=1}^k x^{(0)}_j$.
\end{enumerate}
It follows that given the initial search point $x^{(0)}$ we can decompose $T$ into $s=\sum_{j=1}^k x^{(0)}_j$ stages $T_1, T_2, ..., T_s$, where $T_j = \inf(\{t \ge 0 \;|\; X^{(t)}=j-1\})- \inf(\{t \ge 0 \;|\; X^{(t)} = j\})$ captures the number of generations spent at stage $j$. By the linearity of expectation, we have
\begin{displaymath}
    \mathbb{E}[T \;| \;X^{(0)} = s] = \sum_{j=1}^{s} \mathbb{E}[T_j].
\end{displaymath}
It remains to bound the expected time spent at each stage. Each offspring is generated by copying the parent, selecting a $1$-bit uniformly at random, selecting a $0$-bit uniformly at random, and finally flipping both bits. At stage $j$ exactly $j$ of the $k$ $0$-bits are among the last $n-k$ positions and exactly $j$ of the $n-k$ $1$-bits are among the $k$ first positions. Hence, $j^2$ out of the total $k(n-k)$ ($1$-bit position, $0$-bit position)-pairs advance the optimization to the next stage, yielding
\begin{displaymath}
        \mathbb{P}[X^{(t+1)} = j-1 \; | \; X^{(t)}=j] = \frac{j^2}{k(n-k)}.
\end{displaymath}
Therefore, $T_j \sim \text{Geometric}(\frac{j^2}{k(n-k)})$ and
\begin{displaymath}
    \mathbb{E}[T_j] = \frac{k(n-k)}{j^2}.
\end{displaymath}

To obtain an upper bound, we can make a worst-case assumption by setting $X(x^{(0)}) = k$. We conclude
  \begin{displaymath}
      \mathbb{E}[T] \le \mathbb{E}[T | X^{(0)} = k] = \sum_{j=1}^k \mathbb{E}[T_j] = k(n-k) \sum_{j=1}^k \frac{1}{j^2} \in O(k(n-k)).
  \end{displaymath} 
\end{proof}

\paragraph{Discussion.} Observe that, under the assumption that the probability of initializing at the optimum is sufficiently small, the proof is tight up to a constant factor of 2.

It is important to note that the above proof relies on the key assumption that whenever one of the $j^2$ ``good'' pairs is selected during mutation, the resulting offspring is the fittest among all candidates. This condition holds naturally when there is only a single offspring, as the offspring produced by flipping one of the $j^2$ pairs will have higher fitness than the parent. However, in the case of multiple offspring, this approach breaks down, as an offspring produced by flipping one of the $j^2$ ``good'' pairs might still have lower fitness than another offspring that was not generated by flipping one of these $j^2$ ``good'' pairs.

\subsection{The Main Argument}

Drift analysis, originally developed to study all kinds Markov chains, has become the most widely used technique for analyzing the runtime of evolutionary algorithms in recent years. It works by first defining a potential function $X^{(t)}$ that measures the progress over each step $t$ of the optimization. By estimating how this potential changes at each step in expectation, i.e., computing the \emph{drift} in $X^{(t)}$, one can then make probabilistic statements about the  number of steps required until the potential reaches a certain threshold, also called the \emph{hitting time}. To this end, a variety of drift theorems have been established, two of which will be employed in our proof. For a more thorough introduction to drift analysis, we refer to \citet{drift_analysis_lengler}.

First of all, we will utilize the the Multiplicative Drift Theorem, more specifically a tail bound introduced by Doerr and Goldberg, which is applicable when the potential decreases by a constant fraction in each step.
\vspace{3mm}
\begin{theorem}[Multiplicative Drift, Tail Bound \citep{doerr_tail_bounds}] \label{the:MDT,Tail}
Let $(X^{(t)})_{t\ge0}$ be a sequence of non-negative random variables over a finite state space $S \subset \mathbb{R}_0^+$. Assume that $X^{(0)} \le b$ and let $T$ be the random variable that denotes the first point in time $t \in \mathbb{N}$ for which $X^{(t)} \le a$, for some $a \le b$. Suppose that there exists $\delta > 0$ such that for all $t < T$, 
\begin{displaymath}
    \mathbb{E}[X^{(t)}-X^{(t+1)} \; | \; X^{(t)}] \ge \delta X^{(t)}
\end{displaymath}
Then,
\begin{displaymath}
    \mathbb{P}[T > \frac{t+\log(b/a)}{\delta}]\le e^{-t}.
\end{displaymath}
\end{theorem}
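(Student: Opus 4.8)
The plan is to reduce the tail statement to a single application of Markov's inequality, fed by a geometric decay estimate obtained by iterating the drift hypothesis only along the part of the trajectory that has not yet been absorbed below $a$. Write $\mathcal{F}_t$ for the natural filtration of $X^{(0)},\dots,X^{(t)}$ and read the hypothesis as $\mathbb{E}[X^{(t+1)}\mid\mathcal{F}_t]\le(1-\delta)X^{(t)}$ holding on the event $\{t<T\}$. The central quantity will be the survival-weighted expectation $\mathbb{E}[X^{(t)}\mathbf{1}_{\{T>t\}}]$, which follows the process precisely as long as it stays strictly above the target level $a$; the whole argument is to show this decays like $(1-\delta)^t$ and then to trade that decay for a probability bound on $T$.

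First I would prove, by induction on $t$, that
\begin{displaymath}
    \mathbb{E}\!\left[X^{(t)}\mathbf{1}_{\{T>t\}}\right]\le b\,(1-\delta)^{t}.
\end{displaymath}
The base case holds since $\mathbb{E}[X^{(0)}\mathbf{1}_{\{T>0\}}]\le\mathbb{E}[X^{(0)}]\le b$. For the inductive step I would use the nesting $\{T>t+1\}\subseteq\{T>t\}$ together with $X^{(t+1)}\ge0$ to bound $X^{(t+1)}\mathbf{1}_{\{T>t+1\}}\le X^{(t+1)}\mathbf{1}_{\{T>t\}}$, then condition on $\mathcal{F}_t$, pull out the $\mathcal{F}_t$-measurable indicator $\mathbf{1}_{\{T>t\}}$, and apply the drift hypothesis, which is licensed exactly because $\{T>t\}=\{t<T\}$ is the event on which it is assumed to hold. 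This yields $\mathbb{E}[X^{(t+1)}\mathbf{1}_{\{T>t+1\}}]\le(1-\delta)\,\mathbb{E}[X^{(t)}\mathbf{1}_{\{T>t\}}]$ and closes the induction. Finally I would relax $(1-\delta)^{t}\le e^{-\delta t}$.

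The second half converts decay into a tail. Because $T$ is the first time the process reaches $a$ or below, on $\{T>t\}$ we still have $X^{(t)}>a$, so a Markov-type estimate gives
\begin{displaymath}
    a\,\mathbb{P}[T>t]\le\mathbb{E}\!\left[X^{(t)}\mathbf{1}_{\{T>t\}}\right]\le b\,e^{-\delta t},
\end{displaymath}
i.e. $\mathbb{P}[T>t]\le(b/a)\,e^{-\delta t}$ for every integer $t\ge0$. Substituting the threshold $t=(s+\log(b/a))/\delta$ makes the right-hand side collapse to $(b/a)\,e^{-(s+\log(b/a))}=e^{-s}$, which is exactly the claimed bound once $s$ is renamed to the free parameter in the statement.

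The step I expect to demand the most care is not any inequality but the bookkeeping around conditioning: one must justify that the hypothesis, written as a conditional expectation given $X^{(t)}$, may be invoked on the random event $\{t<T\}$ and combined with the indicator under the tower property. The clean resolution is to treat $(X^{(t)})$ as adapted to its natural filtration, interpret the drift as holding $\mathcal{F}_t$-almost surely on $\{t<T\}$, and note that finiteness of $S$ guarantees all expectations are finite so the manipulations are valid. A secondary point is that $T$ is integer-valued while the stated threshold is real; the integer bound $\mathbb{P}[T>t]\le(b/a)e^{-\delta t}$ already gives the result exactly when the threshold is an integer, and the general case follows by monotonicity of $t\mapsto\mathbb{P}[T>t]$ together with the standard rounding of the real threshold to an integer.
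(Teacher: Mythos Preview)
The paper does not prove this statement. Theorem~\ref{the:MDT,Tail} is quoted from \citet{doerr_tail_bounds} as an external tool, invoked later in the proof of Theorem~\ref{theo:multipleOffspring}; no argument for it appears anywhere in the paper. Consequently there is no ``paper's own proof'' to compare against.

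That said, your proposal is sound and is essentially the standard argument for this tail bound: establish $\mathbb{E}[X^{(t)}\mathbf{1}_{\{T>t\}}]\le b(1-\delta)^t$ by induction via the drift condition, convert to a tail probability through the observation $X^{(t)}>a$ on $\{T>t\}$, then substitute the threshold. You correctly flag the one delicate point, namely that the hypothesis is stated as a conditional expectation given $X^{(t)}$ while your induction needs the $\mathcal{F}_t$-measurable indicator $\mathbf{1}_{\{T>t\}}$; resolving this by reading the drift as holding on $\{t<T\}$ with respect to the natural filtration is exactly how the result is treated in the source literature.
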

\vspace{3mm}
Additionally, we will employ Johannsen's Variable Drift Theorem. This theorem provides more flexibility compared to the Multiplicative Drift Theorem, as it can be applied when the drift is bounded by any \emph{increasing} function of the potential. This often occurs naturally, as optimization typically becomes more difficult approaching the optimum.
\vspace{3mm}
\begin{theorem}[Variable Drift Theorem \citep{johannsen_phd_thesis}] \label{the:VDT}
Let $(X^{(t)})_{t\ge0}$ be a sequence of non-negative random variables over a finite state space $S \subset \mathbb{R}_0^+$. Let $s_{\text{min}} := \min(S \setminus \{0\})$, let $T := \inf\{t\ge0 \; |\; X^{(t)} = 0\}$, and for $s \in S$ let $\Delta^{(t)}(s) := \mathbb{E}[X^{(t)} - X^{(t+1)} \; |\; X^{(t)} =s]$. If there is an increasing function $h: \mathbb{R}^+ \rightarrow \mathbb{R}^+$ such that for all $s \in S \setminus \{0\}$ and all $t \ge 0$,
\begin{displaymath}
    \Delta^{(t)}(s) \ge h(s),
\end{displaymath}
then
\begin{displaymath}
    \mathbb{E}[T] \le \frac{s_{\text{min}}}{h(s_\text{min})} + \mathbb{E} \left[ \int_{s_\text{min}}^{X^{(0)}} \frac{1}{h(\sigma)}d\sigma \right],
\end{displaymath}
where the expectation on the latter term is over the random choice of $X^{(0)}$.
\end{theorem}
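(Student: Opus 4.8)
Following the warm-up, I would first reduce to the generic case of distinct weights (and, by the same stochastic-domination coupling, distinct pairwise differences $w_q - w_p$), sort them increasingly as $w_1 < \cdots < w_n$, and identify the unique maximizer $x_{\text{opt}} = 0^k 1^{n-k}$; by the symmetry argument of \Cref{theo:singleOffspring} I may assume $k \le n-k$. I keep the warm-up potential $X^{(t)} = \sum_{j=1}^{k} x_j^{(t)}$, the number of misplaced $1$'s. The first task is to verify that the two structural facts survive the passage to $\lambda$ offspring: that $X^{(t)}$ is non-increasing and drops by at most one per generation. Both follow from elitism together with the observation that any level-switch mutation which \emph{raises} $X$ must move a $1$ from a high-weight position ($> k$) to a low-weight position ($\le k$), hence strictly lowers fitness and can never be selected over the parent; since each offspring performs a single swap, $X^{(t)} - X^{(t+1)} \in \{0,1\}$. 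Consequently $\mathbb{E}[X^{(t)} - X^{(t+1)} \mid X^{(t)} = j] = \mathbb{P}[X^{(t+1)} = j-1 \mid X^{(t)} = j]$, and the entire proof reduces to lower-bounding this one-generation drift at each stage $j$.

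\textbf{The drift lemma and the main obstacle.} At stage $j$ there are exactly $j^2$ swaps that decrease $X$ (misplaced-$1$ to misplaced-$0$) out of $k(n-k)$ total, so a single offspring is $X$-decreasing with probability $j^2 / (k(n-k))$, exactly as in \Cref{theo:singleOffspring}. The difficulty, flagged after the warm-up, is that with several offspring the \emph{fittest} one may instead be an $X$-preserving improving swap, i.e.\ a rightward move inside the first block or inside the last block. The structural fact I would exploit is a domination property: the globally best single swap always moves the lowest-weight occupied bit to the highest-weight unoccupied bit, and because the weights are sorted these are precisely the lowest misplaced $1$ and the highest misplaced $0$, so the global best swap is itself $X$-decreasing; moreover every $X$-preserving improving swap has strictly smaller gain than it. I would use this to show that, whenever the sample contains enough $X$-decreasing pairs, the maximizer of the sampled gains is $X$-decreasing with constant probability, boosting the per-generation drift to
\begin{displaymath}
\mathbb{P}[X^{(t+1)} = j-1 \mid X^{(t)} = j] \;\ge\; c \cdot \min\!\left(1,\ \frac{\lambda\, j^2}{k(n-k)}\right)
\end{displaymath}
for an absolute constant $c > 0$. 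I expect this to be the genuinely hard step: recovering the full quadratic factor $j^2$ (a merely linear dependence on $j$ would cost an extra logarithmic factor in the final bound) requires arguing that many of the $j^2$ misplaced pairs, not just the single dominating one, are competitive against the strongest sampled $X$-preserving swap. This is where I would control the order statistics of the sampled $1$-bits and $0$-bits, and it is precisely here that the restriction $\lambda \in O(n/\log(n))$ enters, guaranteeing via a union bound — handled by the Multiplicative Drift Theorem tail bound stated above — that the relevant extreme sampled weights concentrate throughout the run.

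\textbf{Assembling the runtime.} With the drift lemma in hand I would invoke the Variable Drift Theorem (\Cref{the:VDT}) with $h(\sigma) = c \cdot \min(1, \lambda \sigma^2 / (k(n-k)))$, which is increasing as required. Writing $\sigma_0 = \sqrt{k(n-k)/\lambda}$ for the saturation threshold, the unsaturated range gives
\begin{displaymath}
\int_{1}^{\sigma_0} \frac{d\sigma}{h(\sigma)} \;=\; \frac{k(n-k)}{c\,\lambda} \int_{1}^{\sigma_0} \frac{d\sigma}{\sigma^2} \;\le\; \frac{k(n-k)}{c\,\lambda},
\end{displaymath}
the saturated range $[\sigma_0, k]$ contributes at most $k/c = O(k(n-k)/\lambda)$ (absorbed since $\lambda \le n-k$ under the standing assumptions), and the starting term $s_{\min}/h(s_{\min}) = k(n-k)/(c\lambda)$ is of the same order; together they yield $\mathbb{E}[T] = O(k(n-k)/\lambda)$. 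This is the multi-offspring analogue of the stage-sum $\sum_j \mathbb{E}[T_j]$ computation in the warm-up, with $\sum_j 1/j^2$ replaced by the corresponding integral. The assembly via \Cref{the:VDT} and the use of the tail bound to certify concentration under $\lambda \in O(n/\log(n))$ are comparatively routine, so I would concentrate the bulk of the technical effort on the drift lemma of the preceding paragraph.
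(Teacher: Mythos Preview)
There is a mismatch: the statement you were handed is the Variable Drift Theorem (\Cref{the:VDT}), which the paper quotes from Johannsen and does not prove. Your write-up is a proof attempt for \Cref{theo:multipleOffspring}. I will compare against the paper's proof of \emph{that} theorem.

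\textbf{Different potential, and why it matters.} You keep the warm-up potential $X^{(t)}=\sum_{i\le k} x^{(t)}_i$ (number of misplaced $1$'s). The paper instead switches to the inversion count $\sum_i (1-x^{(t)}_i)\,i - k(k+1)/2$. The decisive advantage of the inversion potential is that \emph{every} offspring strictly fitter than the parent has strictly smaller potential, because an improving swap is exactly the resolution of an inversion. With your potential this fails: the improving swaps that stay entirely inside $\{1,\dots,k\}$ or entirely inside $\{k+1,\dots,n\}$ (your types (A) and (D)) leave $X$ unchanged. That is precisely why the paper abandons the warm-up potential once $\lambda>1$.

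\textbf{Your drift lemma is the gap.} You assert $\mathbb{P}[X^{(t+1)}=j-1\mid X^{(t)}=j]\ge c\,\min(1,\lambda j^2/(k(n-k)))$, but the sketch does not produce the factor $j^2$. Only the single pair $(p_{\min},q_{\max})$ is guaranteed to dominate every $X$-preserving improving swap. Take $w_i=i$ for $i\le k$ and $w_i=k+\epsilon(i-k)$ for $i>k$ with small $\epsilon$, and the configuration $1^j 0^{k}1^{\,n-k-j}$: only the $j$ type-(B) pairs with $p=1$ beat the type-(A) pair $(1,k)$, and any type-(B) pair with $p\ge 2$ loses to it. The resulting drift is $\Theta(\lambda j/(k(n-k)))$, which yields only $O(k(n-k)\log k/\lambda)$---the extra logarithm you yourself flag. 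Your appeal to the Multiplicative Drift tail bound for ``concentration of sampled weights'' is a mis-application: that tail bound controls hitting times of a drift process, not order statistics of the $\lambda$ sampled pairs.

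\textbf{How the paper gets around it.} With the inversion potential the paper runs two phases. In Phase~1 (until inversions fall to $k(n-k)/\lambda$) it does \emph{not} lower-bound the drift of the main potential; instead it tracks each $0$-bit's position $Z_j$ separately, shows a multiplicative drift $\mathbb{E}[Z_j^{(t+1)}\mid Z_j^{(t)}, S_j]\ge Z_j^{(t)}/2$ conditional on that bit being selected, applies the tail bound of \Cref{the:MDT,Tail} to each $Z_j$, and union-bounds over the $k$ bits. The condition $\lambda\in O(n/\log n)$ enters exactly here, to ensure $4k\log n\le C\,k(n-k)/\lambda$. In Phase~2 the paper uses \Cref{the:VDT}, but with drift $\Omega(\sqrt{s}\cdot \lambda s/(k(n-k)))$ coming from an auxiliary combinatorial lemma that the average spread of $s$ inversions is $\Omega(\sqrt{s})$; the resulting integral is $\int \sigma^{-3/2}\,d\sigma$, not your $\int\sigma^{-2}\,d\sigma$. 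Your final assembly via \Cref{the:VDT} is arithmetically fine, but it rests on a drift estimate you have not established and which appears to fail for adversarial weights.
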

\vspace{3mm}

We will first prove an auxiliary lemma, which will play a central role in bounding the drift. For this purpose, we define an \emph{inversion} in a bit string $x\in \{0,1\}^n$ as a pair of indices $(i,j)$ such that $i<j$ and $x_i > x_j$. The distance between these indices, $j-i$, will be referred to as the \emph{spread} of this inversion.
\vspace{3mm}
\begin{lemma}
\label{lem:avgSpread}
    Let $x \in \{0,1\}^n$ be an arbitrary bit string with $k$ $0$-bits and denote by $s$ the number of inversions in $x$. Then, the average spread of these inversions is at least $\sqrt{s}/16$.
\end{lemma}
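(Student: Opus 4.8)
The plan is to turn the statement into a quantitative claim about a single combinatorial object. Write $s$ for the number of inversions and $T:=\sum_{(i,j)\text{ inv.}}(j-i)$ for the total spread; then the average spread is $T/s$, so ``average spread $\ge\sqrt s/16$'' is exactly ``$T\ge s^{3/2}/16$''. For small $s$ (say $s\le 18$) every inversion has spread $\ge 1$, hence average spread $\ge 1\ge\sqrt s/16$, so I may assume $s$ is large and concentrate on proving $T\ge s^{3/2}/16$.

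The crucial step is an exact identity for $T$ coming from the Young-diagram structure of the inversion set. Listing the $1$-positions increasingly as $\iota_1<\iota_2<\cdots$ and the $0$-positions as $\zeta_1<\zeta_2<\cdots$, a pair $(\iota_a,\zeta_b)$ is an inversion iff $\iota_a<\zeta_b$; after reversing the order of the $\zeta$'s, this incidence pattern is a Young diagram $D$ with $|D|=s$, whose row lengths are the positive values of $\rho_i:=\#\{0\text{'s after }i\}$ and whose column lengths $\lambda'$ are the positive values of $\pi_j:=\#\{1\text{'s before }j\}$. Writing each spread as $j-i=1+\#\{\text{positions strictly between }i\text{ and }j\}$ and splitting the interior positions according to their bit, one gets $T=s+N_{110}+N_{100}$, where $N_{110}$ (resp.\ $N_{100}$) counts ordered position-triples carrying the pattern $1,1,0$ (resp.\ $1,0,0$). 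Counting each triple through the unique $0$ (resp.\ $1$) forced to an extreme position yields $N_{110}=\sum_j\binom{\lambda'_j}{2}$ and $N_{100}=\sum_i\binom{\lambda_i}{2}$, and hence the clean identity
\[
T=\tfrac12\Big(\textstyle\sum_i\lambda_i^2+\sum_j(\lambda'_j)^2\Big).
\]

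It then remains to lower-bound the right-hand side in terms of $s=|D|$ alone. Since $\sum_i\lambda_i^2=\sum_{(i,j)\in D}\lambda_i$ and likewise for columns, $\sum_i\lambda_i^2+\sum_j(\lambda'_j)^2=\sum_{(i,j)\in D}(\lambda_i+\lambda'_j)$; and any cell $(i,j)\in D$ satisfies $\lambda_i\ge j$ and $\lambda'_j\ge i$, so $\lambda_i+\lambda'_j\ge i+j\ge\max(i,j)$. Thus $2T\ge\sum_{(i,j)\in D}\max(i,j)$. As at most $(v-1)^2$ lattice cells have both coordinates $<v$, the number of cells of $D$ with $\max(i,j)\ge v$ is at least $s-(v-1)^2$, so summing over $v$ gives $\sum_{(i,j)\in D}\max(i,j)\ge\sum_{u\ge 0}\max(0,\,s-u^2)\ge\tfrac23 s^{3/2}-O(s)$. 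Hence $T\ge\tfrac13 s^{3/2}-O(s)$, which is $\ge\tfrac1{16}s^{3/2}$ once $s$ exceeds a small explicit constant; combined with the small-$s$ case this proves the lemma (indeed with large slack in the constant $16$).

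The main obstacle is the identity in the second paragraph: recognizing that the spreads of a bit string are governed by a single Young diagram — insensitive to all-$1$ prefixes or all-$0$ suffixes — and that the total spread is precisely $\tfrac12(\sum\lambda_i^2+\sum(\lambda'_j)^2)$. A naive route, trying to show directly that at least half the inversions are ``long'' (spread $\ge\sqrt s/8$), is in fact where elementary arguments get stuck; the exact identity is what makes the problem tractable, after which the residual estimate is just an easy extremal computation (the minimizing diagram is triangular).
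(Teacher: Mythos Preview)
Your proof is correct and genuinely different from the paper's. The paper does not use the Young-diagram identity $T=\tfrac12(\sum\lambda_i^2+\sum(\lambda'_j)^2)$; instead it argues directly that at least $s/4$ of the inversions have spread at least $\sqrt s/4$. It does this by a case split on whether at least half the inversions involve a $0$-bit inserted after the $\sqrt s$-th $1$-bit: if so, the $j\ge\sqrt s$ spreads from any such $0$-bit are distinct positive integers, so half of them exceed $\sqrt s/2$; if not, more than $s/2$ inversions use one of the first $\sqrt s$ $1$-bits, and since any fixed $1$-bit is in at most $\sqrt s/4$ inversions of spread $\le\sqrt s/4$, at most $s/4$ of those can be short. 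Either way one gets total spread $\ge s^{3/2}/16$.

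Your route buys a sharper constant (your bound $2T\ge\sum_{(i,j)\in D}(i+j)$ actually yields roughly $T\ge\tfrac13 s^{3/2}$, identifying the staircase/triangular diagram as extremal) and a clean structural statement, at the cost of the algebraic identity. The paper's route is shorter and more elementary---a two-case pigeonhole---but gives only the constant $1/16$ that the lemma actually states. One small correction: your closing remark that the ``naive route'' of showing many inversions are long ``is in fact where elementary arguments get stuck'' is mistaken; that is precisely the paper's argument, and it goes through without difficulty. Also, your small-$s$ cutoff can be taken as large as $s\le 256$ (since $\sqrt{256}/16=1$), not $18$; this is harmless but worth tightening if you keep the case split.
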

\begin{proof}
    Consider the bit string $1^{n-k}$ containing all $1$-bits of $x$. We can now generate an arbitrary bit string $x \in \{0,1\}^n$ with $k$ $0$-bits and $s$ inversions by adding $k$ $0$-bits in such a way that $s$ inversions are generated. Observe that adding a $0$-bit after the $j$'th $1$-bit results in exactly $j$ additional inversions, regardless of the other $0$-bits. This means that the order in which the $0$-bits are added does not affect the outcome. We proceed by a case distinction depending on how the inversions are generated.
    
    \vspace{2mm}
    \textbf{Case 1:} at least $s/2$ inversions are generated by adding $0$-bits after the $ \sqrt{s} $'th $1$-bit. 
    
    For each $0$-bit that is added after the $ \sqrt{s} $'th $1$-bit, at least half of the resulting inversions have spread at least $\sqrt{s}/2$. Consequently, this implies that there are at least $s/4$ inversions having spread at least $\sqrt{s}/2$ in total. 

    \vspace{2mm}
    \textbf{Case 2:} fewer than $s/2$ inversions are generated by adding $0$-bits after the $ \sqrt{s} $'th $1$-bit. 
    
    It follow that more than $s/2$ inversions are generated by adding $0$-bits not after the $\min(n-k,  \sqrt{s} )$'th $1$-bit. Observe that each $1$-bit can participate in at most $j$ inversions with spread at most $j$. More specifically, each $1$-bit can be part of at most $\sqrt{s}/4$ inversions with spread at most $\sqrt{s}/4$. Because all of the $s/2$ inversions that are added contain one of the first $\min(n-k,\sqrt{s})$ $1$-bits, at most $s/4$ of these inversions can have spread at most $\sqrt{s}/4$. Therefore, we conclude that the average spread of all inversions must be at least $\sqrt{s}/16$. 
\end{proof}
\vspace{3mm}
We continue to prove the final result. 

\vspace{3mm}
\paragraph{Proof of Theorem~\ref{theo:multipleOffspring}}

\begin{proof}
    As in the proof of Theorem~\ref{theo:singleOffspring} let $ w \in \mathbb{R}^n $ represent the weights associated with a linear function of the form $ f(x) = \sum_{i=1}^n x_i \cdot w_i $. To establish an upper bound, we can again assume that no two weights are equal. Additionally, without loss of generality, assume that the weights are ordered in increasing value, i.e., $ w_1 < w_2 < \dots < w_n $, and that $ k \leq n - k $, as the other case follows by symmetry. Let $ x^{(t)} $ denote the search point at step $ t $, and define 
    \begin{displaymath}
    T = \inf\{ t \geq 0 \mid x^{(t)} = 0^{k}1^{n-k} \}
    \end{displaymath}
    as the number of generations required to reach the optimal solution. \\
    Consider the potential function 
    \begin{displaymath}
    X^{(t)} = \sum_{i=1}^n (1-x^{(t)}_i) \cdot i - \frac{k \cdot (k+1)}{2},
    \end{displaymath} which captures the number of inversions at step $t$. Since $x_{\text{opt}} = 0^k 1^{n-k}$ is the only bit string with $k$ $0$-bits without inversions, we have $X^{(t)} = 0$ if and only if $x^{(t)}=x_{\text{opt}}$. At the same time, no bit string with $k$ $0$-bits has more than $k(n-k)$ inversions, hence, $X^{(t)} \le k(n-k)$ at all times. 
    During mutation, each of the $\lambda$ offspring is generated independently by copying the parent $x^{(t)}$, choosing uniformly at random one of the $1$-bits, choosing uniformly at random one of the $0$-bits and finally flipping both bits. This flipping can also be viewed as switching both bits, so that bits ``move'' across the search point in consecutive generations. We will use this abstraction in a later step of the proof. 
    
    As we assume the weights to be ordered increasingly, an offspring is fitter than its parent if and only if the chosen $1$-bit was to the left of the chosen $0$-bit, meaning, the chosen pair during mutation was an inversion. Since there are $k(n-k)$ possible pairs in total, we have for each offspring $y_1, ..., y_{\lambda}$
    \begin{displaymath}
        \mathbb{P}[f(y_j) > f(x^{(t)}) \;|\; X^{(t)}= s] = \frac{s}{k(n-k)}.
    \end{displaymath}
    At the same time, switching two bits corresponding to an inversion decreases the number of inversions by the difference in their positions, which we call the \emph{spread} of an inversion. This implies that any offspring fitter than its parent must have fewer inversions than its parent and therefore, $X^{(t+1)} \le X^{(t)}$ for all $t$. Note that we cannot make the same statement about the entire group of offspring, meaning, the fittest offspring is not guaranteed to have the fewest inversions. Since $X^{(t)}$ is non-increasing we can decompose $T$ into the number of steps required until for the first time the current search point $x^{(t)}$ has at most  $ k(n-k)/\lambda$ inversions and the number of steps required from there until the optimum is found. By linearity of expectation
    \begin{displaymath}
        \mathbb{E}[T] = \mathbb{E}[T_1] + \mathbb{E}[T_2],
    \end{displaymath}
    where
    \begin{displaymath}
    T_1 = \inf\{ t \ge 0 \mid X^{(t)} \le  \frac{k(n-k)}{\lambda}\}
    \end{displaymath}
    and
    \begin{displaymath}
        T_2= \inf\{ t \ge 0 \mid X^{(t)} = 0 \} - \inf\{ t \ge 0 \mid X^{(t)} \le \frac{k(n-k)}{\lambda} \}.
    \end{displaymath}

    In the remainder of this proof we will demonstrate that each of these two phases requires only an expected $O(k(n-k)/\lambda)$ generations.
    
    \vspace{4mm}
    We begin by bounding the expected number of steps until the search point has at most $k(n-k)/\lambda$ inversions. As computed previously, a single offspring is fitter than its parent with probability $\frac{s}{k(n-k)}$. Since any fitter offspring has fewer inversion than its parent, the potential decreases in a given step, if and only if, at least one of the offspring is fitter. By using that each offspring is generated independently and that $s \ge \frac{k(n-k)}{\lambda}$ for this phase we get that 
    \begin{displaymath}
        \mathbb{P}[X^{(t+1)} < X^{(t)} \; | \;X^{(t)}=s] = 1-(1-\frac{s}{k(n-k)})^\lambda \ge 1-e^{\frac{-\lambda s}{k(n-k)}} \ge 1 - e^{-1}.
    \end{displaymath}
    This means, in phase $1$ we have a constant probability of decreasing the potential every step. However, the resulting constant drift only provides an upper bound of $O(k(n-k))$ via the Additive Drift Theorem \citep{he_ADT}. Improving this constant drift bound is challenging because we must establish a lower bound on the expected reduction in the number of inversions, given the existence of a fitter offspring. The number of inversions in an offspring is not independent of its fitness, and there is no guarantee that a fitter offspring will have fewer inversions than a less fit one. This issue is mitigated when there is only a single fitter offspring (as demonstrated in the proof of phase 2), but it becomes problematic when multiple offspring are fitter than the parent with high probability. For example, consider the bit string $1^10^{10}1^{100}0^1$ with corresponding weights $w_1=1,w_2=1002,w_3=1003, ..., w_{112}=1112$. If $\lambda$ is reasonably large it becomes very likely that at least one of the children will have the first $1$-bit chosen in mutation. This offspring is guaranteed to be the fittest one, but at the same time (assuming the chosen $0$-bit is not the last one) it decreases the number of inversions very little compared to sampling one of the inversions for mutation uniformly at random. We will resolve this difficulty by a separate drift argument.

    Let $B_C$ be the event that, within the next 
    \begin{displaymath}
    \frac{2C}{1-e^{-1}} \frac{k(n-k)}{\lambda}
    \end{displaymath}
    steps, the number of inversions in $x^{(t)}$ falls below the threshold of 
    \begin{displaymath}
    \frac{k(n-k)}{\lambda}.
    \end{displaymath}
    Here, $ C $ is chosen such that $ \lambda \leq \frac{C}{8} \frac{n}{\log(n)} $. If we can demonstrate that $ B_C $ occurs with a probability of at least some constant, then the proof of the first phase is established, as $ B_C $ is expected to occur after a constant number of repetitions.
    
    Henceforth, we will implicitly condition on $ s \geq k(n-k) /\lambda $, since otherwise, the conclusion follows immediately. By the Chernoff bound over round events, the probability that the potential decreases at most 
    \begin{displaymath}
    C \frac{k(n-k)}{\lambda}
    \end{displaymath}
    times within the next 
    \begin{displaymath}
    \frac{2C}{1-e^{-1}} \frac{k(n-k)}{\lambda} 
    \end{displaymath}
    rounds is sub-constant. We will condition on the event that the potential decreases at least 
    \begin{displaymath}
    C \frac{k(n-k)}{\lambda}
    \end{displaymath}
    times, and from now on, we will only consider such potential-reducing generations.

    If we regard mutation as swapping the $1$-bit with the $0$-bit, we can enumerate all $0$-bits from $1$ to $k$ and denote by $i_j$ the current position of the $j$'th $0$-bit, which will be referred to as $0_j$. Note that this enumeration stays fixed across generations, meaning that the relative order can change and $0_j$ is not necessarily the $j$'th $0$-bit in $x^{(t)}$. Now define 
    \begin{displaymath}
    Z^{(t)}_j = 1+\sum_{l=1}^{i_j} x^{t} 
    \end{displaymath}
    as the random variable that captures the number of $1$-bits before $0_j$ plus one, or in other words, one plus the number of inversions this specific $0$-bit is part of. Let $S_j$ denote the event that the fittest offspring was generated by a mutation that selected $0_j$ and this offspring is fitter than the parent. We continue to show that
    \begin{displaymath}
    \mathbb{E}[Z^{(t+1)}_j \;|\; Z_j^{(t)}=s, S_j] \ge \frac{s}{2}.
    \end{displaymath}
    We achieve this by systematically revealing the randomness in each generation. First, uncover which $0$-bit flip produced the fittest offspring\footnote{More precisely, we must uncover which $0$-bit flip resulted in the offspring selected during the selection process. This accounts for scenarios where multiple offspring have the same highest fitness, in which case one of the fittest candidates is typically chosen uniformly at random. As the occurrence of multiple equally fit offspring is a mere technicality, we have largely omitted further discussion of this case.}. Assume this bit is $0_j$. Next, reveal all offspring that were generated by flipping other $0$-bits than $0_j$. Let $m$ be the number of offspring that were not uncovered yet, i.e., the number of offspring where $0_j$ was switched. Now enumerate all $1$-bits to the left of $0_j$ in $x^{(t)}$ from right to left (here, relative order matters). Let $l$ be the smallest integer such that when switching the $l$'th $1$-bit to the left of $0_j$ with $0_j$ the resulting offspring of $x^{(t)}$ has higher fitness than all $\lambda-m$ previously uncovered offspring. Denote by $D_l$ the corresponding event. Such $l$ must exists, since we condition on the event that some offspring with bit $0_j$ flipped (switched) is the fittest among all offspring. Because the weights are sorted increasingly it must hold that switching the $l+1$'th $1$-bit with $0_j$ will also result in an offspring with higher fitness than the other $\lambda - m$ offspring, while switching the $l-1$'th $1$-bit with $0_j$ will result in an offspring with lower fitness than the other $\lambda-m$ offspring. Next, uncover all offspring where bit $0_j$ was switched with one the first up to $(l-1)$'th bit to the left of $0_j$. Let $m'$ denote the number of yet uncovered offspring. Now each of the remaining $m'$ offspring is generated by flipping $0_j$ with one of the $l$'th to $(s-1)$'th $1$-bits to the left of $0_j$. Observe that the fittest among them will be the one with the leftest $1$-bit chosen. Therefore, 
    \begin{displaymath}
    \mathbb{E}[Z^{(t+1)}_j \;|\; Z_j^{(t)}=s, S_j, D_l, \text{$m'$ offspring not uncovered}] = s - \mathbb{E}\left[\max_{i=1,...,m'} U_i\right],
    \end{displaymath}
    where $U_i \sim \text{Uniform}(l, s-1)$. Given that we are conditioning on $S_j$, we know that the fittest offspring was produced by flipping $0_j$, which implies $m' \ge 1$. As $l \ge 1$ it follows that
    \begin{displaymath}
     \mathbb{E}[Z^{(t+1)}_j \;|\; Z_j^{(t)}=s,\; S_j] \ge s/2.
    \end{displaymath}

     Denote by $\hat{T_j}$ the number of steps required until $Z_j$ reaches $1$, only counting steps where $Z_j$ is decreased. Using a tail bound for the Multiplicative Drift Theorem (Theorem~\ref{the:MDT,Tail}) we have that 
    \begin{displaymath}
        \mathbb{P}[\hat{T_j} > 2(\log(n)+\log(n-k))] \le \frac{1}{n}.
    \end{displaymath}
    As $k < (n-k)$ we conclude by a union bound that with probability at least $1/2$ each potential $Z_j$ will reach $1$ within at most $4 \log(n)$ steps. Therefore, with probability at least $1/2$, after $4k\log(n)$ generations where some offspring is fitter than the parent, there must be $0$ inversions in $x^{(t)}$. Note that in practice, there will not actually be $0$ inversions in $x_t$, as the condition $s \ge k(n-k)/\lambda$ is violated earlier, leading the optimization process to enter the second phase. Using the fact that $\lambda \le \frac{C}{8}  \frac{n}{\log(n)}$ and $n-k \ge n/2$ we obtain
    \begin{displaymath}
        4k\log(n) \le \frac{8k(n-k) \log(n)}{n} \le C\frac{k(n-k)}{\lambda}.
    \end{displaymath}
    Finally, as the probability of having less than $C k (n-k) / \lambda$ ``successful'' generations in the considered time period is sub-constant, we conclude via another union bound that there exists a constant $C'$ such that event $B_{C}$ occurs with probability at least $1/C'$. Consequently, we have 
    \begin{displaymath}
        \mathbb{E}[T_1] \le C \cdot C' \cdot \frac{k(n-k)}{\lambda} \in O\left(\frac{k(n-k)}{\lambda}\right).
    \end{displaymath}

    \vspace{4mm}
    \looseness=-1
    To compute $\mathbb{E}[T_2]$ we first bound the probability that exactly one of the generated offspring is fitter than the parent. Denote by 
    \begin{displaymath}
    A_i = \left\{ \left| \left\{ j \in \{1, \dots, \lambda\} \mid f(y_j) > f(x^{(t)}) \right\} \right| = i \right\}
    \end{displaymath}
    the event that exactly $i$ of the offspring are fitter than the parent $x^{(t)}$. As shown earlier, the probability that a given offspring is fitter than its parent is exactly $\frac{s}{k(n-k)}$, where $s$ represents the number of inversions in $x^{(t)}$. Given that each offspring is generated independently, we have for $s \le k(n-k)/\lambda$
    \begin{align*}
    \mathbb{P}[A_1 \mid X^{(t)} = s] &= \lambda \cdot \frac{s}{k(n-k)} \cdot \left(1 - \frac{s}{k(n-k)}\right)^{\lambda-1} \\
    &\geq \lambda \cdot \frac{s}{k(n-k)} \cdot \left(1 - \frac{s}{k(n-k)}\right)^{\frac{k(n-k)}{s}-1} \\
    &\geq \lambda \cdot \frac{s}{k(n-k)} \cdot \frac{1}{e}.
\end{align*}

    Lemma \ref{lem:avgSpread} indicates that when selecting an offspring uniformly at random from all those with higher fitness than the parent (i.e., those generated by flipping an inversion), the expected number of inversions in that offspring is at least $\sqrt{s}/16$ fewer than in the parent. We can now reveal the randomness in two steps. First, we only uncover how many of the generated offspring are fitter than the parent. Given that there is only a single fitter offspring, i.e., conditioned on $A_1$, we then uncover its number of inversions. Clearly, this single fitter offspring is now sampled uniformly at random from all offspring with higher fitness than $x^{(t)}$; thus, for $s \le k(n-k)/\lambda$
    \vspace{1mm}
    \begin{align*}
    \Delta^{(t)}(s) &= \mathbb{E}[X^{(t+1)} - X^{(t)} \mid X^{(t)} = s] \\
    &= \sum_{k=0}^{\lambda} \mathbb{E}[X^{(t+1)} - X^{(t)} \mid X^{(t)} = s, A_k] \cdot \mathbb{P}[A_k \mid X^{(t)} = s] \\ 
    &\geq \mathbb{E}[X^{(t+1)} - X^{(t)} \mid X^{(t)} = s, A_1] \cdot \mathbb{P}[A_1 \mid X^{(t)} = s] \\
    &\geq \frac{\sqrt{s}}{16} \cdot \lambda \cdot \frac{s}{k(n-k)} \cdot \frac{1}{e}.
    \end{align*}

    Finally, applying Johannsen's Variable Drift Theorem \citep{johannsen_phd_thesis} (Theorem \ref{the:VDT}) yields
    \begin{align*}
    \mathbb{E}[T_2] &\leq  16e \frac{k(n-k)}{\lambda} + \mathbb{E}\left[\int_1^{X^{(0)}} 16e \frac{k(n-k)}{\lambda \sigma^{3/2}} \, d\sigma \right] \\
    &\leq 16e \frac{k(n-k)}{\lambda} \left( 1 + \int_1^{\frac{k(n-k)}{\lambda}} \frac{1}{\sigma^{3/2}} \, d\sigma \right) \\
    &\in O\left(\frac{k(n-k)}{\lambda}\right).
\end{align*}

\end{proof}

\section{Evolutionary Search Parameter Ablations}
\label{app:evo_search_ablations}
\subsection{Mutation Rate (Depth Pruning)}
\label{app:evo_search_ablations_dropping}

The mutation rate plays a crucial role in balancing exploration and exploitation. A higher mutation rate allows for broader exploration of the search space; however, this space grows exponentially with the number of mutations. As a result, when trying to approach the optimum in terms of Hamming distance, the proportion of ``good'' offspring decreases significantly with an increasing mutation rate. Consequently, in a smooth fitness landscape, we expect significantly faster optimization with a lower mutation rate.

To provide some mathematical intuition, consider optimizing over the $200$-dimensional hypercube $\{0,1\}^{200}$, where the current search point is $x^{(t)}=0^{200}$ and the global optimum is $x_{\text{opt}} = 1^{20} 0^{180}$. For this illustration we use a mutation operator that randomly selects a subset of $k$ bits to flip. Flipping $k$ bits corresponds to selecting a bitstring from the $k$'th Hamming layer of $x^{(t)}=0^{200}$ uniformly at random, where the $k$'th Hamming layer consists of all bitstrings with a Hamming distance of $k$ from $x^{(t)}$. Similarly, the Hamming ball of radius $k$ includes all bitstrings with a Hamming distance at most $k$. Assume that any bitstring closer to the optimum in terms of Hamming distance has higher fitness than our current search point\footnote{While this assumption does not hold in practice, it serves as a useful intuition in a reasonably smooth fitness landscape.}. The probability of improving the fitness via a mutation of $k$ bits equals the fraction of points in the $k$'th Hamming layer of $x^{(t)}$ that are also in the Hamming ball of radius $19$ around $x_{\text{opt}}$. When the current search point is reasonably close to the optimum, this ratio is maximized for $k=1$. For the described setting, we can compute the probabilities of each event $A_k$, where $A_k$ represents the event that mutating $k$ bits of $x^{(t)}$ results in a decrease in the Hamming distance from $x_{\text{opt}}$. These probabilities are given by: 
\[
\begin{array}{ll}
\mathbb{P}[A_1] = \frac{\binom{20}{1}}{\binom{200}{1}} = 0.1 
& \mathbb{P}[A_2] = \frac{\binom{20}{2}}{\binom{200}{2}} \approx 0.0095 \\  [10pt]
\mathbb{P}[A_3] = \frac{\binom{20}{3}+\binom{20}{2}\cdot \binom{180}{1}}{\binom{200}{3}} \approx 0.0269 &
\mathbb{P}[A_4] = \frac{\binom{20}{4}+\binom{20}{3}\cdot \binom{180}{1}}{\binom{200}{4}} \approx 0.0032 \\ [10pt]
\mathbb{P}[A_5] = \frac{\binom{20}{5}+\binom{20}{4}\cdot \binom{180}{1}+\binom{20}{3} \cdot \binom{180}{2}}{\binom{200}{5}} \approx 0.0076 & 
\mathbb{P}[A_6]= \frac{\binom{20}{6}+\binom{20}{5}\cdot \binom{180}{1} + \binom{20}{4} \cdot \binom{180}{2}}{\binom{200}{6}} \approx 0.0010
\end{array}
\]
 Note that accounting for the potentially greater Hamming distance gained for higher mutation rates (i.e., calculating the drift in the Hamming distance) has only marginal effect. This is because for an odd number of mutations, most of the conditional probability mass is concentrated on the case where the Hamming distance is reduced by just one bit. Similarly, for an even number of mutations, most of the conditional probability mass is concentrated on cases where the reduction in the Hamming distance is only two bits. The advantage of a low mutation rate becomes even more pronounced as the search process nears the optimum. For instance, when the Hamming distance between $x^{(t)}$ and $x_{\text{opt}}$ is 5, mutating a single bit results in a 16-fold greater drift in Hamming distance compared to any other mutation rate. 

To study the empirical impact of the mutation rate on our search process, we tested various distributions from which the number of mutations is sampled. Table~\ref{tab:ablation_mutation} illustrates the effects of these distributions for selecting the optimal twelve blocks to drop for Mistral-7B-v0.3. The results confirm our intuition: higher mutation rates generally reduce performance. However, sampling from the minimum of two uniform distributions ensures a reasonably high probability of choosing a low number of mutations. These offspring, with fewer mutations, then drive the optimization process, yielding comparably lower performance drops. Conversely, when we eliminate this sampling and instead use a high, constant mutation rate, we lose the locality that is crucial for evolutionary algorithms, leading to a significant drop in performance.

\begin{table}[htb]
\footnotesize
\centering
\setlength\tabcolsep{3pt}
\caption{Effect of varying the distribution determining the number of mutations.}

\resizebox{0.5\textwidth}{!}{
\begin{tabular}{c|ccc}
  \toprule
   \bf{Number of Mutations} & \bf{Wiki2$\downarrow$} & \bf{C4$\downarrow$} & \bf{FW$\downarrow$} \\
  \midrule
 
 \multicolumn{1}{l|}{$\min(U_1, U_2), \quad U_1, U_2 \sim U(1, 3)$}& \bf{17.52} & 21.60 & 16.79 \\
  \multicolumn{1}{l|}{$\min(U_1, U_2), \quad U_1, U_2 \sim U(1, 7)$} & 21.49 & 22.41 & 17.65 \\
 \multicolumn{1}{l|}{$\min(U_1, U_2), \quad U_1, U_2 \sim U(1, 15)$}& 18.65 & 22.67 & 17.63 \\
   \multicolumn{1}{c|}{$1$} & 18.12 & \bf{21.12} & \bf{16.33} \\
   \multicolumn{1}{c|}{$3$} & 22.09 & 25.42 & 19.25 \\
  \multicolumn{1}{c|}{$7$}& 25.06 & 26.52 & 19.65   \\
     \multicolumn{1}{c|}{$15$} & 27.01  & 28.19  & 22.03 \\
  \bottomrule
\end{tabular}
\label{tab:ablation_mutation}
}
\end{table}

A low mutation rate carries the risk of getting trapped in local optima. However, as discussed in Section~\ref{sec:method}, we expect dynamic model compression to exhibit a smooth fitness landscape with few local optima. Moreover, fitness evaluations in our context are relatively expensive. Increasing the mutation rate would only be beneficial if the smaller search space had already been thoroughly explored. In our case, though, even a small neighborhood cannot be fully explored within a feasible time frame.

A widely used strategy for balancing the advantages and disadvantages of different mutation rates involves self-adjusting mutation rates, which have been shown to be effective both theoretically and in practice \citep{kern_one_fifth_update_rule, doerr_lengler_self_adjusting_mutation}. These methods decrease the mutation rate when progress is relatively ``easy'', and increase it when progress becomes difficult, offering a greater chance of escaping local optima.

\subsection{Multi-Step Selection (Unstructured Sparsity)}
\label{app:evo_search_ablations_sparsity}
We will use this subsection to ablate the impact of hyperparameters for the multi-step selection, namely, the number of tokens and survivors. As discussed earlier in Section~\ref{subsect:sparsity}, the default hyperparameters we chose for our unstructured sparsity search were quite conservative. The following experiments will be conducted based on the ``super-fast'' version, which uses two steps of selection. It first generates $16$ offspring, evaluates them on $512$ tokens, and compares only the fittest one with the parent on another $8192$ tokens. 

Table~\ref{tab:ablation_tokens} shows the impact of adapting the number of tokens in the first selection step. Note that reducing tokens is only reasonable up to a certain degree, as fitness evaluation has constant overhead independent of the number of tokens (e.g., for loading the levels). Table~\ref{tab:ablation_offspring} ablates the number of offspring in each generation. All perplexities were measured after $400$ generations. 

\begin{table}[htb]

\footnotesize
\centering
\setlength\tabcolsep{3pt}
\caption{Effect of varying the number of tokens in first preselection step.}
\resizebox{0.6\textwidth}{!}{
\begin{tabular}{ccc|ccc}
  \toprule
  \bf{Offspring} & \bf{Stage 1: Tokens} & \bf{Stage 2: Tokens} & \bf{Wiki2$\downarrow$} & \bf{C4$\downarrow$} & \bf{FW$\downarrow$}  \\
  \midrule
  16 & 1024 & 8192 & 16.22 & \textbf{17.93} & \textbf{12.26} \\
  16 & 512 & 8192 & \textbf{15.87} & 18.28 & 12.38 \\
  16 & 256 & 8192 & 17.25 & 18.51 & 12.52 \\
  16 & 128 & 8192 & 16.01 & 18.99 & 12.72 \\
  16 & 64 & 8192 & 15.89 & 19.35 & 12.98 \\
  
  \bottomrule
\end{tabular}
\label{tab:ablation_tokens}
}
\end{table}

\begin{table}[htb]
\footnotesize
\centering
\setlength\tabcolsep{3pt}
\caption{Effect of varying the number of offspring.}
\resizebox{0.6\textwidth}{!}{
\begin{tabular}{ccc|ccc}
  \toprule
  \bf{Offspring} & \bf{Stage 1: Tokens} & \bf{Stage 2: Tokens} & \bf{Wiki2$\downarrow$} & \bf{C4$\downarrow$} & \bf{FW$\downarrow$} \\
  \midrule
    64 &  512 &  8192 & 16.35 & 18.27 &  \textbf{12.36} \\
    32 &  512 &  8192 & 16.65 & \textbf{18.22} & 12.44 \\
    16 &  512 &  8192 & \textbf{15.87} & 18.27 & 12.38 \\
    8 &  512 &  8192 & 16.37 & 18.74 & 12.64 \\
    4 &  512 &  8192 & 17.87 & 18.97 & 12.72 \\
  \bottomrule
\end{tabular}
\label{tab:ablation_offspring}
}
\end{table}

In a similar vein to the discussion in Appendix~\ref{app:evo_search_ablations_dropping}, the number of offspring can also be dynamically adapted. Ideally, the number of offspring should increase to the point where the computational effort is compensated by the number of generations, as outlined in Theorem~\ref{theo:multipleOffspring}. Methods such as the Self-Adjusting $(1,\lambda)$-EA have recently gained significant theoretical interest and have been shown to automatically determine ``ideal'' offspring sizes on specific problems \citep{sudholt_self_adjusting, kaufmann_self_adjusting}.

\subsection{Fitness Environment (Quantization)}
\label{app:evo_search_ablations_quantization}

We explored an alternative fitness function by testing perplexity as opposed to KL-Divergence. One advantage of using perplexity is the reduced memory requirement, as it does not necessitate storing the logits, which can be particularly burdensome for large vocabularies. However, perplexity relies solely on the information from the ground truth token, while KL-Divergence takes into account the entire distribution. This distinction is significant only if the selection decisions vary between the two metrics. Generally, we expect KL-Divergence to perform at least as well as perplexity; however, in many instances, their performances are similar. This observation could indicate that KL-Divergence might be using more tokens than necessary to assess fitness effectively. Although in the context of quantization KL-Divergence yielded slightly better results (Table~\ref{tab:ablation_fitness}, Figure~\ref{fig:ablation_fitness_klVsPPl} left), both metrics showed comparable performance when applied to unstructured sparsity (Figure~\ref{fig:ablation_fitness_klVsPPl} right).

\begin{figure}[htb]
    \centering
    % Left figure
    \includegraphics[width=0.48\linewidth]{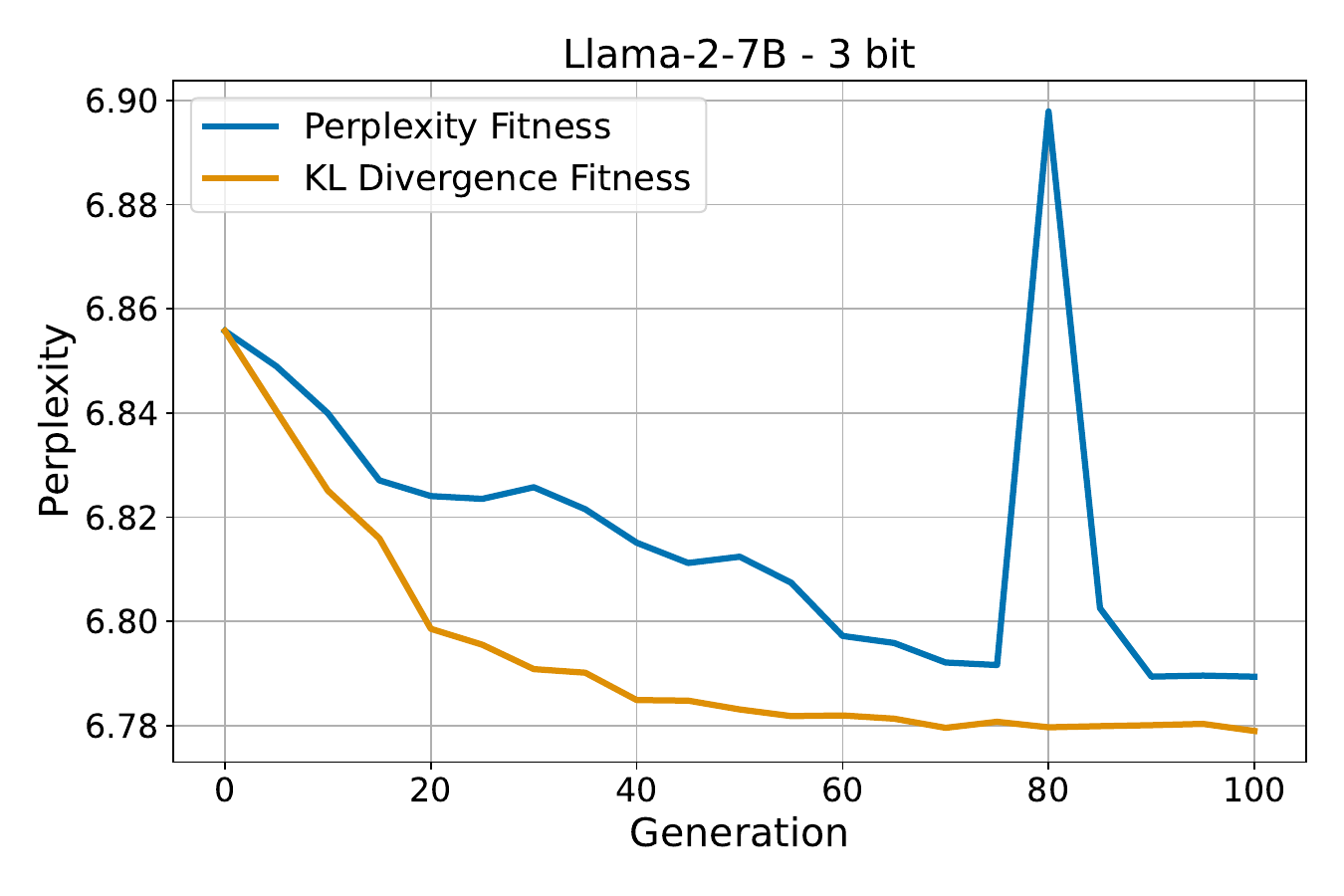}
    \hfill
    % Right figure
    \includegraphics[width=0.48\linewidth]{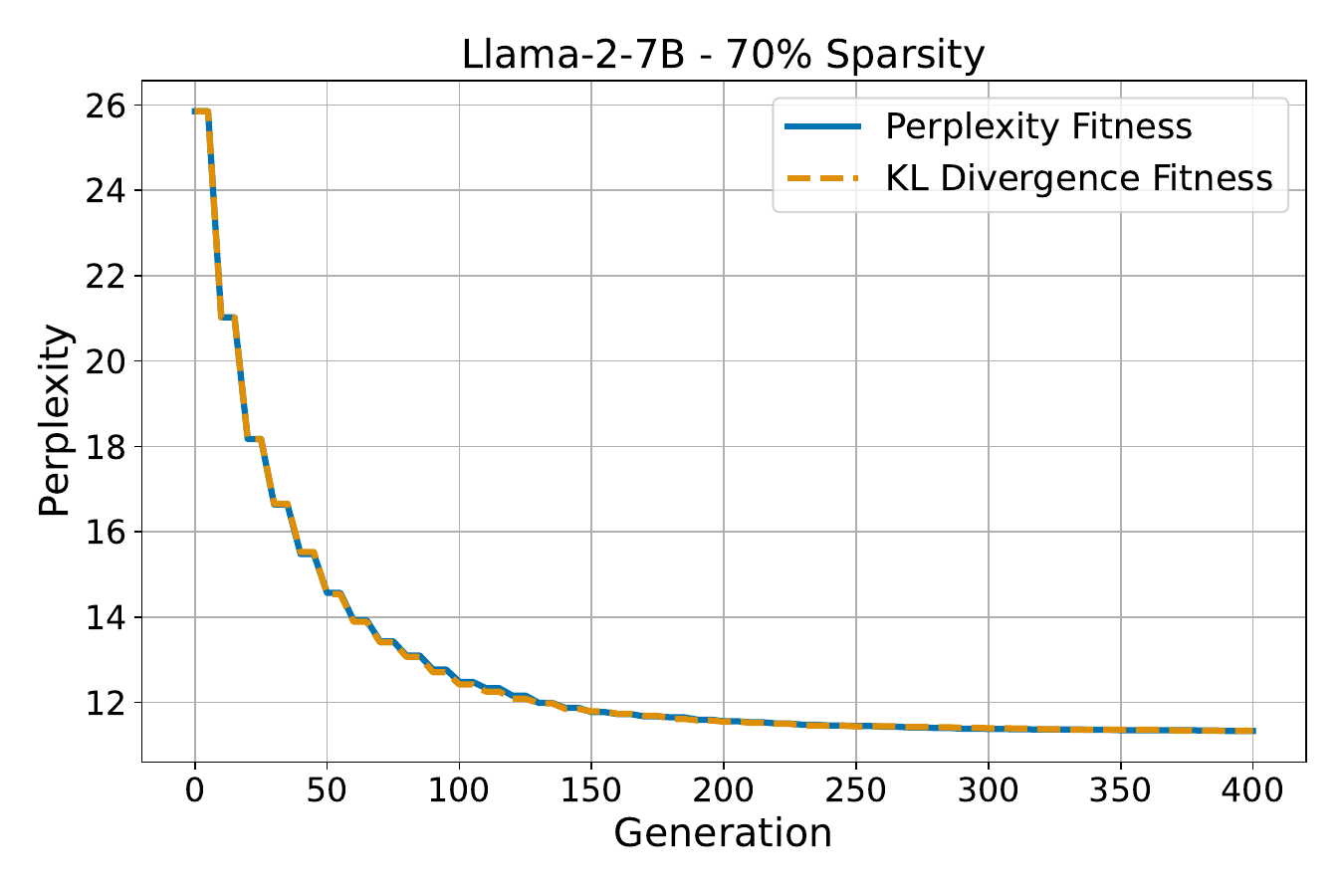}

    \caption{Convergence of \methodname{} for unstructured sparsity (left) and quantization (right) for different fitness functions.}
    \label{fig:ablation_fitness_klVsPPl}
\end{figure}
\begin{table}[htb]
\footnotesize
\centering
\setlength\tabcolsep{3pt}
\caption{Comparison of using KL-Divergence vs. Perplexity as fitness function.}
\resizebox{0.55\textwidth}{!}{
\begin{tabular}{c|c|c|ccc}
\toprule
\textbf{Model} & \textbf{\# Bits} & \textbf{Method} & \textbf{Wiki2$\downarrow$} & \textbf{C4$\downarrow$} & \textbf{FW$\downarrow$} \\ 
\midrule
\multirow{6}{*}{Llama-3-8B}  & \multirow{3}{*}{3} & Uniform & 12.19 & 15.76 & 11.47 \\
                        
                            &  & EvoPress (PPL) & 8.17  & 12.15 & 9.64  \\
                            &  & EvoPress (KL)  & \bf{7.49}  & \bf{12.03} & \bf{9.56}  \\
                            \cmidrule{2-6}
                            & \multirow{3}{*}{4} & Uniform & 6.48  & 9.50  & 8.46  \\
                          
                            & & EvoPress (PPL) & \bf{5.86}  & 9.46  & 8.23  \\
                            & & EvoPress (KL)  & \bf{5.86}  & \bf{9.44}  & \bf{8.22}  \\ 
\midrule
\multirow{6}{*}{Llama-2-7B}  & \multirow{3}{*}{3} & Uniform & 6.16  & 7.96  & 6.86  \\
                             
                            &  & EvoPress (PPL) & 5.74  & 7.90  & 6.79  \\
                            &  & EvoPress (KL)  & \bf{5.70}  & \bf{7.87}  & \bf{6.76} \\
                            \cmidrule{2-6}
                            & \multirow{3}{*}{4} & Uniform & 5.48  & 7.10  & 6.40  \\
                              
                            &  & EvoPress (PPL) & 5.25  & 7.09  & 6.37  \\
                            & & EvoPress (KL)  & \bf{5.22}  & \bf{7.07}  & \bf{6.34}  \\ 
\midrule
\multirow{6}{*}{Mistral-7B-v0.3} & \multirow{3}{*}{3} & Uniform & 5.54  & 8.57  & 6.96  \\
                            
                                 &  & EvoPress (PPL) & 5.23  & 8.45  & 6.87  \\
                                 & & EvoPress (KL)  & \bf{5.21}  & \bf{8.42}  & \bf{6.86}  \\
                                 \cmidrule{2-6}
                                 & \multirow{3}{*}{4} & Uniform & 5.10  & 7.87  & 6.50  \\
                                   
                                 &  & EvoPress (PPL) & 4.85  & 7.86  & 6.49  \\
                                 & & EvoPress (KL)  & \bf{4.84}  & \bf{7.84}  & \bf{6.48}  \\ 
\bottomrule
\end{tabular}
\label{tab:ablation_fitness}
}
\end{table}

\newpage
\section{Experimental Setup}
\subsection{Hyperparameter Setting}
\label{app_sec:hyperparameter_setting}
Here, we provide an overview of the hyperparameters used in our experiments. As shown in Table~\ref{tab:hyperparameters}, we employed different choices for the number of tokens, offspring, and generations for different applications to account for the size of the respective search space. However, the search is very robust with respect to these choices, and using one set of hyperparameters for another application yields similar results. This is also demonstrated through the ``super-fast'' version, which uses drastically different hyperparameters, but achieves comparable performance (see Figure~\ref{fig:convergence_pruning} in main text). 

Across all applications, we sampled the number of mutations from the distributions $\min(U_1, U_2)$ with $U_1,U_2 \sim Unif(1,3)$, which closely mimics the behavior of using only one mutation (see the ablation study in Appendix~\ref{tab:ablation_mutation}).

For \emph{Depth Pruning}, where each block has only two choices and significantly fewer blocks are present compared to layers in other methods, we leveraged the insight from Theorem~\ref{theo:multipleOffspring}, which suggests that the number of required generations scales proportionally to $k(n-k)$, where $k$ represents the number of removed blocks and $n$ the total number of blocks. 

For \emph{Unstructured Sparsity}, the search space is considerably larger, with more than 10 choices per layer\footnote{If needed, one could increase the step size and reduce the number of compression levels to load.}. As a result, more generations are necessary to converge because each generation only makes small improvement in terms of Hamming distance from the optimum.

For \emph{Quantization}, the search space is somewhat smaller since fewer ``natural'' compression levels are available. However, the fitness landscape is less smooth, with significantly larger step sizes in compression levels, motivating the use of a higher number of tokens.

For all these applications, we adopted a conservative approach to the number of generations to better understand convergence. In practice, we need significantly fewer generations to converge close to optimum, as demonstrated in Section~\ref{subsect:sparsity}, Appendix~\ref{appendix:sec:conv_dropping}, Appendix~\ref{subsec_app_quant_conv}, and Appendix~\ref{app:evo_search_ablations_quantization}. Additionally, we showed a much faster version (in terms of time per iteration) that uses significantly less tokens and has fewer offspring. 

\begin{table}[htb]
\footnotesize
\centering
\setlength\tabcolsep{3pt}
\caption{Employed hyperparameters for different applications.}
\resizebox{0.99\textwidth}{!}{
\begin{tabular}{c|c|c|cc|cc|cc}
  \toprule
  \bf{Application} & \bf{Generations}   & \bf{Offspring} & \bf{Survivors (1)} & \bf{Tokens (1)} & \bf{Survivors (2)} & \bf{Tokens (2)} & \bf{Survivors (3)} & \bf{Tokens (3)} \\
  \midrule
    Depth Pruning & $k(n-k)/1.5$ & 32 & 2 & 2048 & 1 & 32768 & N/A & N/A \\
    Unstr. Sparsity & 400 &  64 & 8 & 2048 &  2 & 16384 & 1 & 65536 \\
    Quantization & 150 & 128 & 16 & 2048 & 4 & 16384 & 1 & 131072 \\
    Super-Fast & 400 & 16 & 1 & 512 & 1 & 8192 & N/A & N/A \\
  \bottomrule
\end{tabular}
\label{tab:hyperparameters}
}
\end{table}

\subsection{Robustness to Random Seed}
To evaluate the robustness of EvoPress, we conducted 16 independent runs with different random seeds. Specifically, we used the ``super-fast'' variant to determine the optimal compression allocation for Llama-3-8B at 70\% sparsity, assessing perplexity scores on the C4, Wikitext2, and hold-out Fineweb-Edu datasets. The results indicate that \methodname{} is highly robust, as reflected by the low standard deviation observed across the hold-out metrics (Figure~\ref{fig:robustness_convergence}). For example, after 1000 generations of the ``super-fast'' variant, the configurations found achieve a mean C4 perplexity of 33.82 with a standard deviation of 0.61, compared to 52.32 for the next best method, OWL. This improvements achieved by \methodname{} are therefore statistically relevant. Furthermore, as shown in Figure~\ref{fig:robustness_similarity}, the configurations identified across different runs are highly similar, which is expected to improve further with additional generations.

\begin{figure}[htb]
    \centering
    \includegraphics[width=0.92\linewidth]{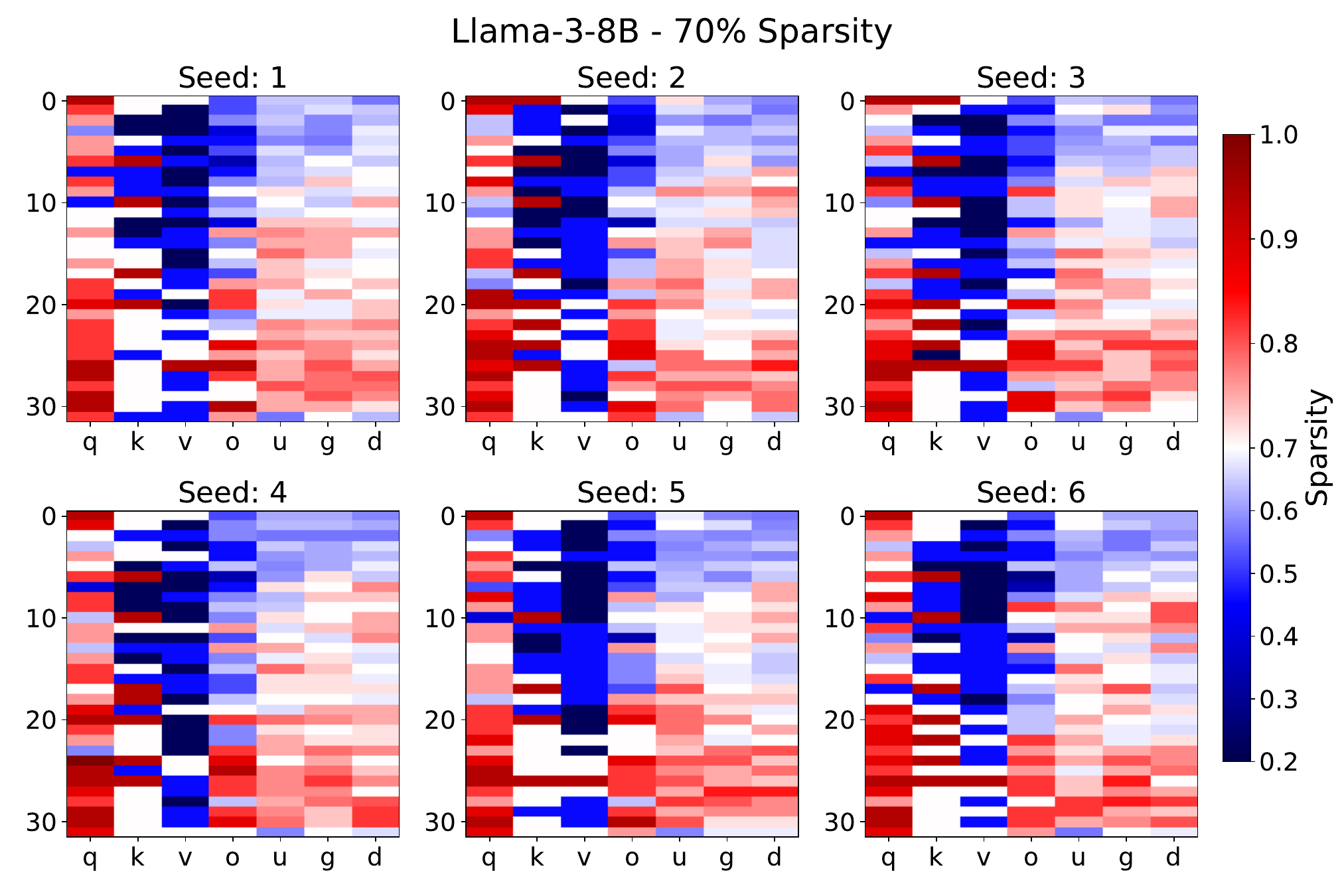}
    \caption{Configurations identified by \methodname{} on Llama-3-8B after 1000 generations show high similarity across different seeds. The y-axis represents the depth of the respective transformer block, while the x-axis denotes the corresponding layer (q: query, k: key, v: value, o: output, u: MLP up, g: MLP gate, d: MLP down).}
    \label{fig:robustness_similarity}
\end{figure}
\begin{figure}[htb]
    \centering
    \includegraphics[width=0.55\linewidth]{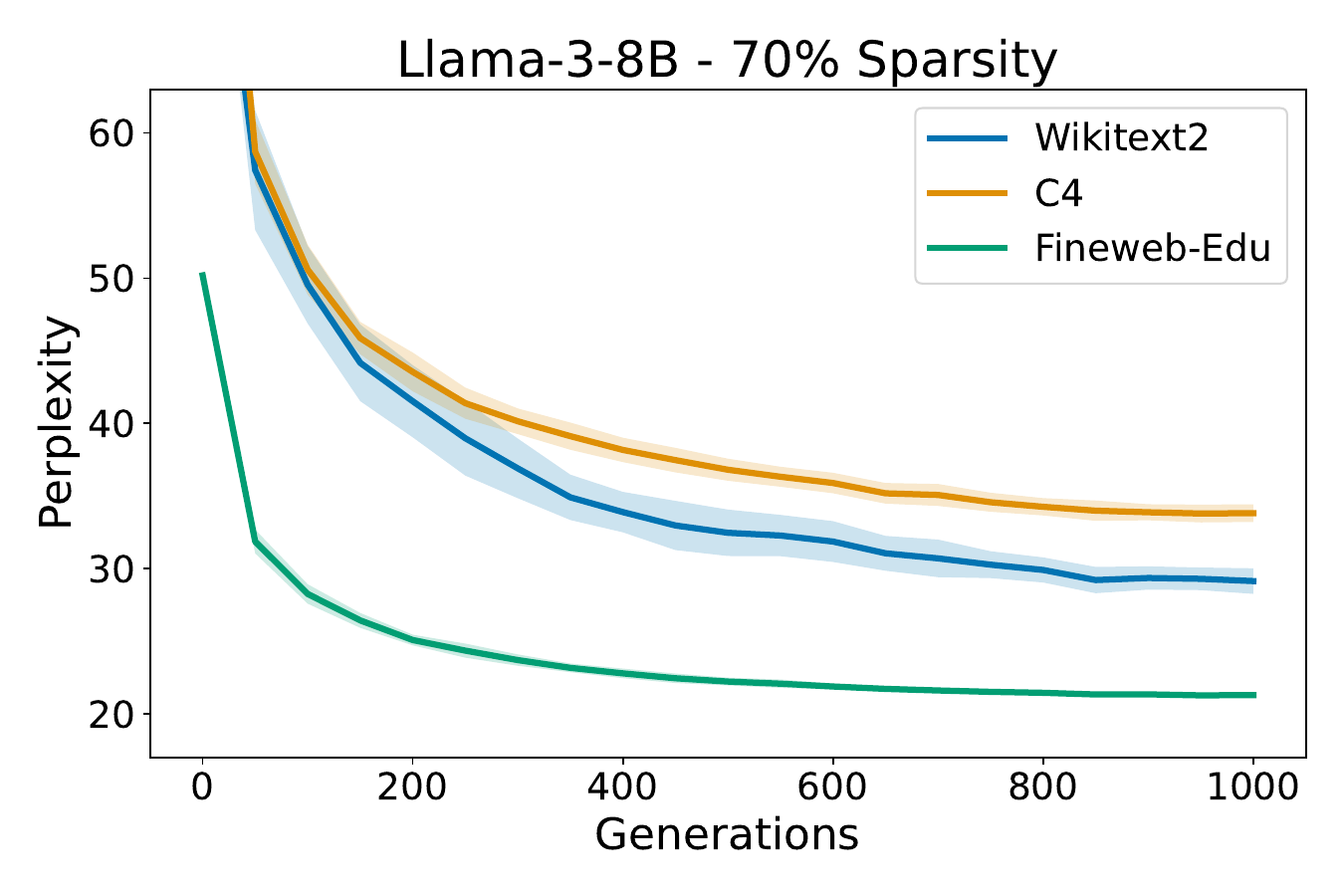}
    \vspace{-2mm}
    \caption{Convergence behavior of the ``super-fast'' variant across 16 independent runs. The extremely low standard deviation (shaded area) demonstrates the robustness of the method, which suggests that local optima do not pose significant challenges to the search.}
    \label{fig:robustness_convergence}
\end{figure}

\FloatBarrier
\section{Additional Depth Pruning Results}
\subsection{Full Results}
\label{app:depth_full_perplexity}

Here, we present our additional results for depth pruning experiments on Mistral-7B-v0.3 (Table~\ref{tabl:appendix_dropping_mistral}), Llama-2-7B (Table~\ref{tab:llama2-depth-table}),  Llama-3-8B (Table~\ref{tab:llama3-depth-table}), and Llama-3.1-8B (Table~\ref{tab:llama31-depth-table}). Across all levels of sparsities, \methodname{} consistently outperforms previous methods. Additionally, Table~\ref{tabl:appendix_dropping_mistral} includes results where only entire transformer blocks are removed by \methodname{}. This showcases that the significant gains are not primarily due to this relaxation, and that our method performs better than baselines even when dealing with this coarser search space.

\begin{table}[htb]
\footnotesize
\centering
\setlength\tabcolsep{3pt}
\caption{Depth pruning of Llama-2-7B.}
\resizebox{0.58\textwidth}{!}{
\begin{tabular}{c|c|ccc}
\toprule
\textbf{Sparsity} & \textbf{Method} & \textbf{Wiki2$\downarrow$} & \textbf{C4$\downarrow$} & \textbf{FW$\downarrow$} \\ 
\midrule
0\% & Dense & 5.21 & 6.93 & 6.40 \\
\midrule
\multirow{5}{*}{12.5\%}  & EvoPress & \textbf{6.42} & \textbf{8.60} & \textbf{7.54} \\
                         & ShortGPT & 8.86 & 10.78 & 9.30 \\
                         & Cosine Similarity (Window) & 7.53 & 9.82 & 8.51 \\
                         & Weight Subcloning & 9.09 & 11.06 & 9.60 \\
                         & ShortenedLlama & 7.68 & 10.44 & 8.57 \\ 
\midrule
\multirow{5}{*}{25\%}  & EvoPress & \textbf{9.15} & \textbf{11.46} & \textbf{9.69} \\
                       & ShortGPT & 23.41 & 30.30 & 21.16 \\
                       & Cosine Similarity (Window) & 16.60 & 21.04 & 17.37 \\
                       & Weight Subcloning & 23.41 & 30.30 & 21.16 \\
                       & Shortened Llama & 13.86 & 14.08 & 11.81 \\ 
\midrule
\multirow{5}{*}{37.5\%} & EvoPress & \textbf{17.98} & \textbf{18.91} & \textbf{15.53} \\
                        & ShortGPT & 70.94 & 63.51 & 54.07 \\
                        & Cosine Similarity (Window) & 192.07 & 212.60 & 151.10 \\
                        & Weight Subcloning & 70.94 & 63.51 & 54.07 \\
                        & Shortened Llama & 35.37 & 26.07 & 20.37 \\ 
\midrule
\multirow{5}{*}{50\%} & EvoPress & \textbf{48.84} & \textbf{42.29} & \textbf{33.57} \\
                      & ShortGPT & 226.14 & 171.04 & 180.51 \\
                      & Cosine Similarity (Window) & 4570.15 & 2876.83 & 1861.06 \\
                      & Weight Subcloning & 226.14 & 171.04 & 180.51 \\
                      & Shortened Llama & 145.78 & 87.40 & 68.79 \\ 
\bottomrule
\end{tabular}
\label{tab:llama2-depth-table}
}
\end{table}

\begin{table}[htb]
\footnotesize
\centering
\setlength\tabcolsep{3pt}
\caption{Depth pruning of Llama-3-8B.}
\resizebox{0.58\textwidth}{!}{
\begin{tabular}{c|c|ccc}
\toprule
\textbf{Sparsity} & \textbf{Method} & \textbf{Wiki2$\downarrow$} & \textbf{C4$\downarrow$} & \textbf{FW$\downarrow$} \\ 
\midrule
0\% & Dense & 5.54 & 8.80 & 7.62 \\
\midrule
\multirow{5}{*}{12.5\%}  & EvoPress & \textbf{7.72} & \textbf{12.61} & \textbf{10.15} \\
                         & ShortGPT & 13.21 & 19.56 & 14.25 \\
                         & Cosine Similarity (Window) & 9.54 & 14.87 & 11.64 \\
                         & Weight Subcloning & 13.21 & 19.56 & 14.25 \\
                         & Shortened Llama & 9.42 & 15.09 & 11.57 \\ 
\midrule
\multirow{5}{*}{25\%}  & EvoPress & \textbf{13.99} & 22.83 & \textbf{15.84} \\
                       & ShortGPT & 5527.54 & 11589.93 & 2346.13 \\
                       & Cosine Similarity (Window) & 5519.95 & 11629.61 & 2342.91 \\
                       & Weight Subcloning & 5527.54 & 11589.93 & 2346.13 \\
                       & Shortened Llama & 16.59 & \textbf{20.81} & 16.28 \\ 
\midrule
\multirow{5}{*}{37.5\%} & EvoPress & \textbf{27.56} & \textbf{35.70} & \textbf{26.77} \\
                        & ShortGPT & 64281.36 & 13836.12 & 3789.09 \\
                        & Cosine Similarity (Window) & 64627.29 & 13890.14 & 3784.72 \\
                        & Weight Subcloning & 64381.36 & 13836.13 & 3789.09 \\
                        & Shortened Llama & 50.20 & 61.56 & 37.40 \\ 
\midrule
\multirow{5}{*}{50\%} & EvoPress & \textbf{84.99} & \textbf{87.86} & \textbf{66.41} \\
                      & ShortGPT & 1663.97 & 1740.04 & 1588.20 \\
                      & Cosine Similarity (Window) & 2053.19 & 1116.47 & 694.00 \\
                      & Weight Subcloning & 1663.97 & 1740.04 & 1588.20 \\
                      & Shortened Llama & 724.86 & 666.41 & 210.30 \\ 
\bottomrule
\end{tabular}
}
\label{tab:llama3-depth-table}
\end{table}

\begin{table}[htb]
\footnotesize
\centering
\setlength\tabcolsep{3pt}
\caption{Depth pruning of Llama-3.1-8B.}
\resizebox{0.58\textwidth}{!}{
\begin{tabular}{c|c|ccc}
\toprule
\textbf{Sparsity} & \textbf{Method} & \textbf{Wiki2$\downarrow$} & \textbf{C4$\downarrow$} & \textbf{FW$\downarrow$} \\ 
\midrule
0\% & Dense & 5.61 & 8.90 & 7.67 \\
\midrule
\multirow{5}{*}{12.5\%}  & EvoPress & \bf{7.58} & \bf{12.24} & \bf{10.00} \\
                         & ShortGPT & 12.54 & 19.21 & 13.76 \\
                         & Cosine Similarity (Window) & 12.54 & 19.21 & 13.76 \\
                         & Weight Subcloning & 12.54 & 19.21 & 13.76 \\
                         & Shortened Llama & 9.27 & 14.80 & 11.21 \\
\midrule
\multirow{5}{*}{25\%}  & EvoPress & \bf{11.59} & \bf{17.84} & \bf{13.96} \\
                       & ShortGPT & 4278.39 & 6754.92 & 1512.39 \\
                       & Cosine Similarity (Window) & 4278.39 & 6754.92 & 1512.39 \\
                       & Weight Subcloning & 4278.39 & 6754.92 & 1512.39 \\
                       & Shortened Llama & 20.41 & 20.33 & 16.12 \\
\midrule
\multirow{5}{*}{37.5\%} & EvoPress & \bf{24.98} & \bf{35.77} & \bf{25.93} \\
                        & ShortGPT & 123044.19 & 22071.51 & 6059.03 \\
                        & Cosine Similarity (Window) & 123044.19 & 22071.51 & 6059.03 \\
                        & Weight Subcloning & 123044.19 & 22071.51 & 6059.03 \\
                        & Shortened Llama &  41.34 & 43.53 & 31.00 \\
\midrule
\multirow{5}{*}{50\%} & EvoPress & \bf{105.84} & \bf{110.69} & \bf{61.25} \\
                      & ShortGPT & 1630.11 & 1680.21 & 1698.64 \\
                      & Cosine Similarity (Window) & 1881.54 & 1196.63 & 683.24 \\
                      & Weight Subcloning & 1630.11 & 1680.21 & 1698.64 \\
                      & Shortened Llama & 454.96 & 309.42 & 153.96 \\
\bottomrule
\end{tabular}
}
\label{tab:llama31-depth-table}
\end{table}

\begin{table}[htb]
\footnotesize
\centering
\setlength\tabcolsep{3pt}
\caption{Depth pruning of Mistral-7B-v0.3.}
\resizebox{0.58\textwidth}{!}{
\begin{tabular}{c|c|ccc}
\toprule
\textbf{Sparsity} & \textbf{Method} & \textbf{Wiki2$\downarrow$} & \textbf{C4$\downarrow$} & \textbf{FW$\downarrow$} \\ 
\midrule
0\% & Dense & 4.82 & 7.72 & 6.41 \\
\midrule
\multirow{5}{*}{12.5\%}  & EvoPress & \textbf{6.06} & \textbf{9.00} & \textbf{7.42} \\
& EvoPress (Attn.+MLP) & 6.33 & 9.44 & 7.80 \\
                         & ShortGPT & 7.19 & 10.18 & 8.46 \\
                         & Cosine Similarity (Window) & 7.19 & 10.18 & 8.46 \\
                         & Weight Subcloning & 7.19 & 10.18 & 8.46 \\
                         & Shortened Llama & 6.64 & 9.71 & 7.94 \\ 
\midrule
\multirow{5}{*}{25\%}  & EvoPress & \textbf{8.66} & \textbf{12.04} & \textbf{9.92} \\
& EvoPress (Attn.+MLP) & 9.46  & 13.02  & 10.59 \\
                       & ShortGPT & 43.26 & 40.16 & 29.54 \\
                       & Cosine Similarity (Window) & 33.75 & 54.07 & 36.26 \\
                       & Weight Subcloning & 43.26 & 40.16 & 29.54 \\
                       & Shortened Llama & 14.94 & 19.30 & 14.73 \\ 
\midrule
\multirow{5}{*}{37.5\%} & EvoPress & \textbf{17.52} & \textbf{21.60} & \textbf{16.90} \\
& EvoPress (Attn.+MLP) & 21.62  & 25.17 & 18.97  \\
                        & ShortGPT & 2898.98 & 2722.66 & 981.99 \\
                        & Cosine Similarity (Window) & 1034.09 & 2471.86 & 1050.56 \\
                        & Weight Subcloning & 2898.98 & 2722.66 & 981.99 \\
                        & Shortened Llama & 440.20 & 442.09 & 486.15 \\ 
\midrule
\multirow{5}{*}{50\%} & EvoPress & \textbf{61.75} & \textbf{54.15} & \textbf{43.23} \\
& EvoPress (Attn.+MLP) & 108.91  & 99.74 & 69.07 \\
                      & ShortGPT & 2422.72 & 2134.92 & 1083.51 \\
                      & Cosine Similarity (Window) & 3411.47 & 1934.16 & 1740.91 \\
                      & Weight Subcloning & 2422.72 & 2134.92 & 1083.51 \\
                      & Shortened Llama & 5241.76 & 3595.71 & 1953.14 \\ 
\bottomrule
\end{tabular}
\label{tabl:appendix_dropping_mistral}
}
\end{table}

 \FloatBarrier

\subsection{Practical Convergence}
\label{appendix:sec:conv_dropping}
\methodname{} identifies superior compression profiles in a highly efficient manner. Figure~\ref{fig:conv_dropping} displays that the evolutionary search produces better compressed models than previous techniques in a matter of minutes, with full convergence in around half an hour. 

\begin{figure}[htb]
    \centering
    \includegraphics[width=0.48\linewidth]{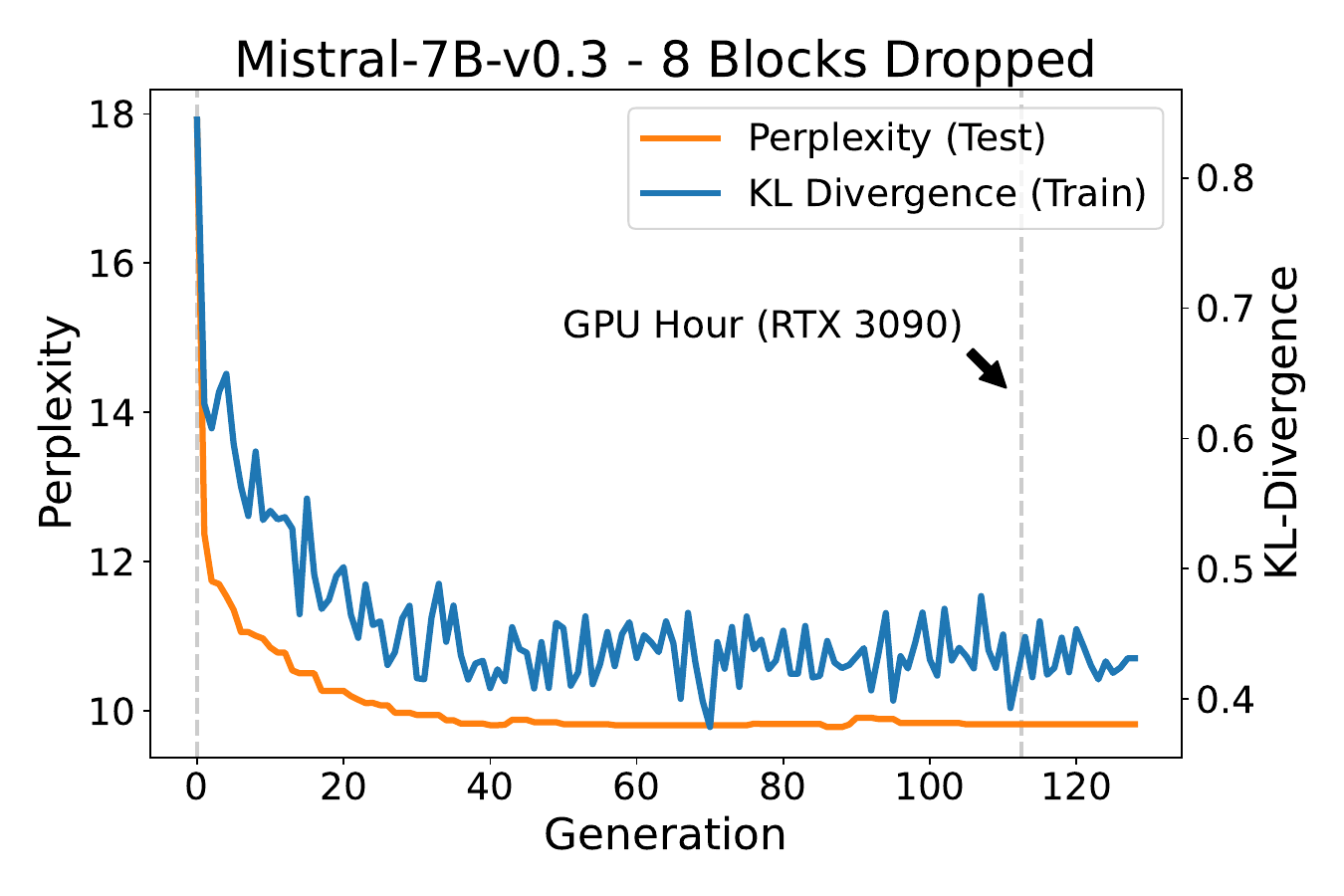}
    \includegraphics[width=0.48\linewidth]{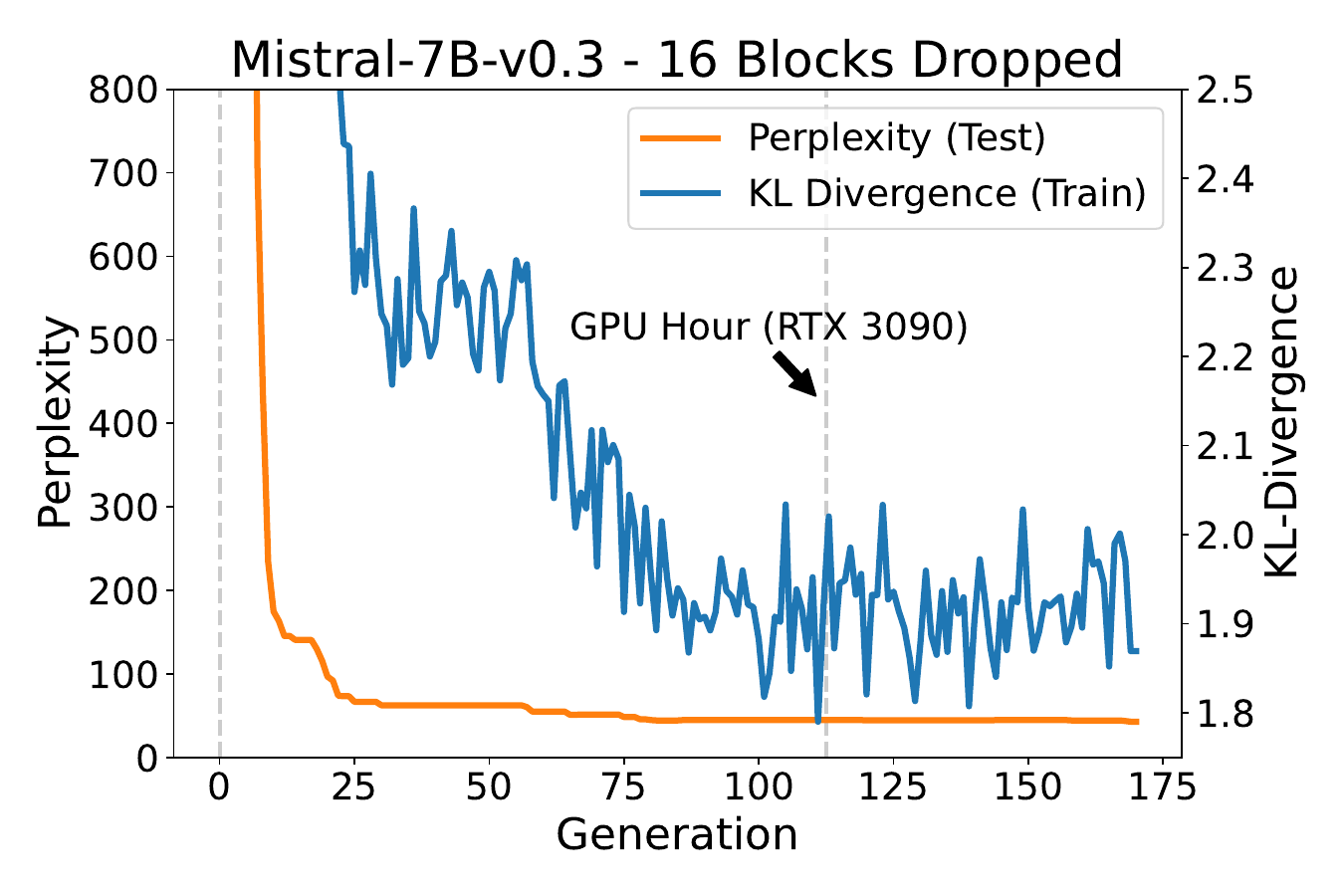}
    \caption{Convergence of \methodname{} when removing 8 transformer blocks (left) and 16 transformer blocks (right) of Mistral-7B-v0.3.} 
    \label{fig:conv_dropping}
\end{figure}

\subsection{Locality of Dropped Blocks}
\label{app:locality_dropped}
Prior research indicates that deeper layers, aside from the final ones, are generally less effective \citep{ineffect_of_deeper, shortgpt}. Figure~\ref{fig:drop_configs} illustrates the optimal removal configurations identified by \methodname{}. For comparison, Table~\ref{tab:appendix_removal_order} displays the removal order of prior scoring methods. While \methodname{} indeed removes some deeper layers across all sparsities, we also observe that certain shallow layers appear to be less important. Notably, a ``two hills'' pattern emerges in many cases, where blocks before and after the midpoint are pruned, yet the central blocks remain intact. Meanwhile, the first two blocks are never pruned. However, in contrast to a heuristic proposed by \citet{llmpruner}, we find that, in some instances, it is effective to prune the final block as well. 

\begin{figure}[htb]
    \centering
    \includegraphics[width=0.46\linewidth]{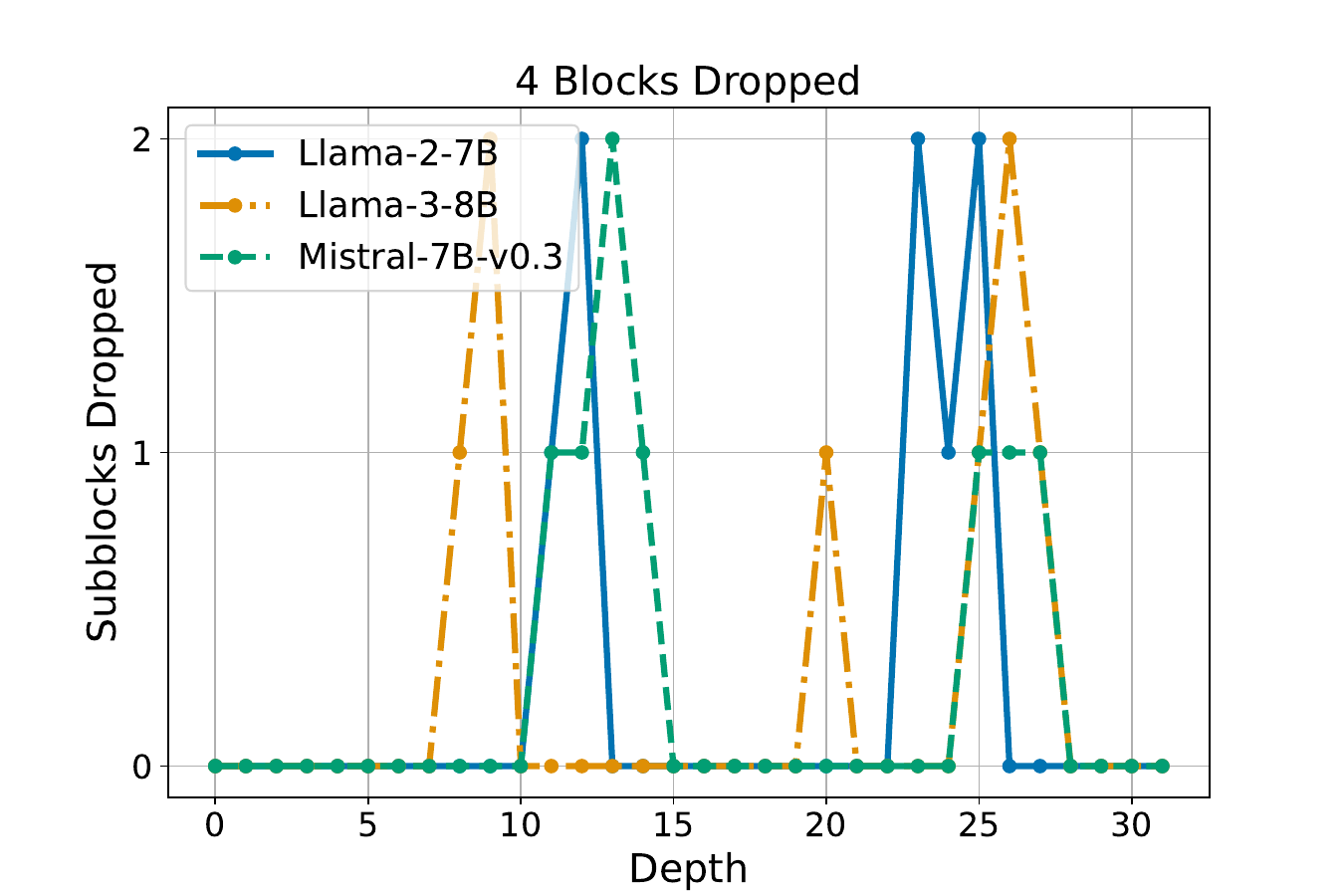}
    \hspace{0.02\linewidth}
    \includegraphics[width=0.46\linewidth]{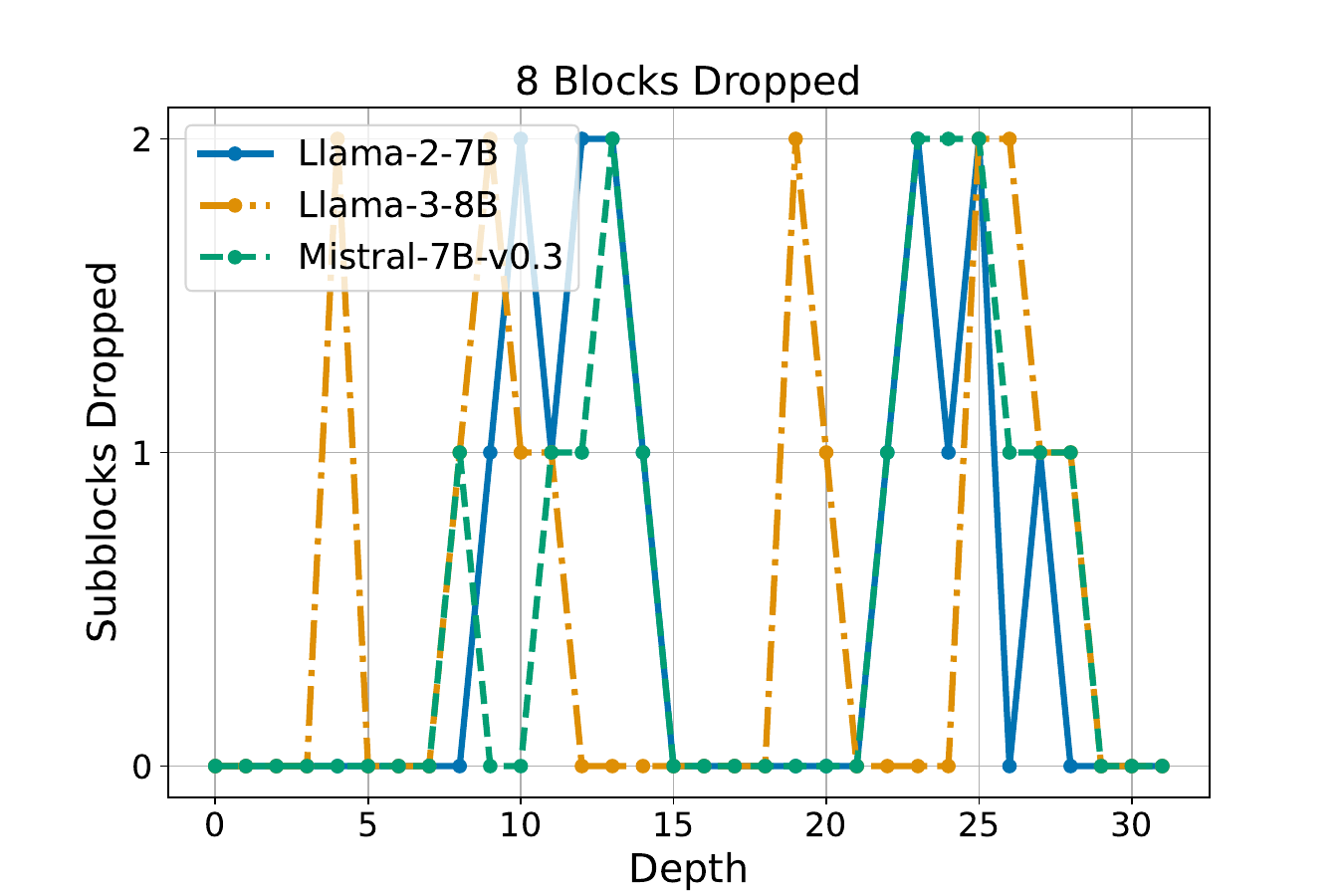}

    \vspace{0.2em}
    \includegraphics[width=0.46\linewidth]{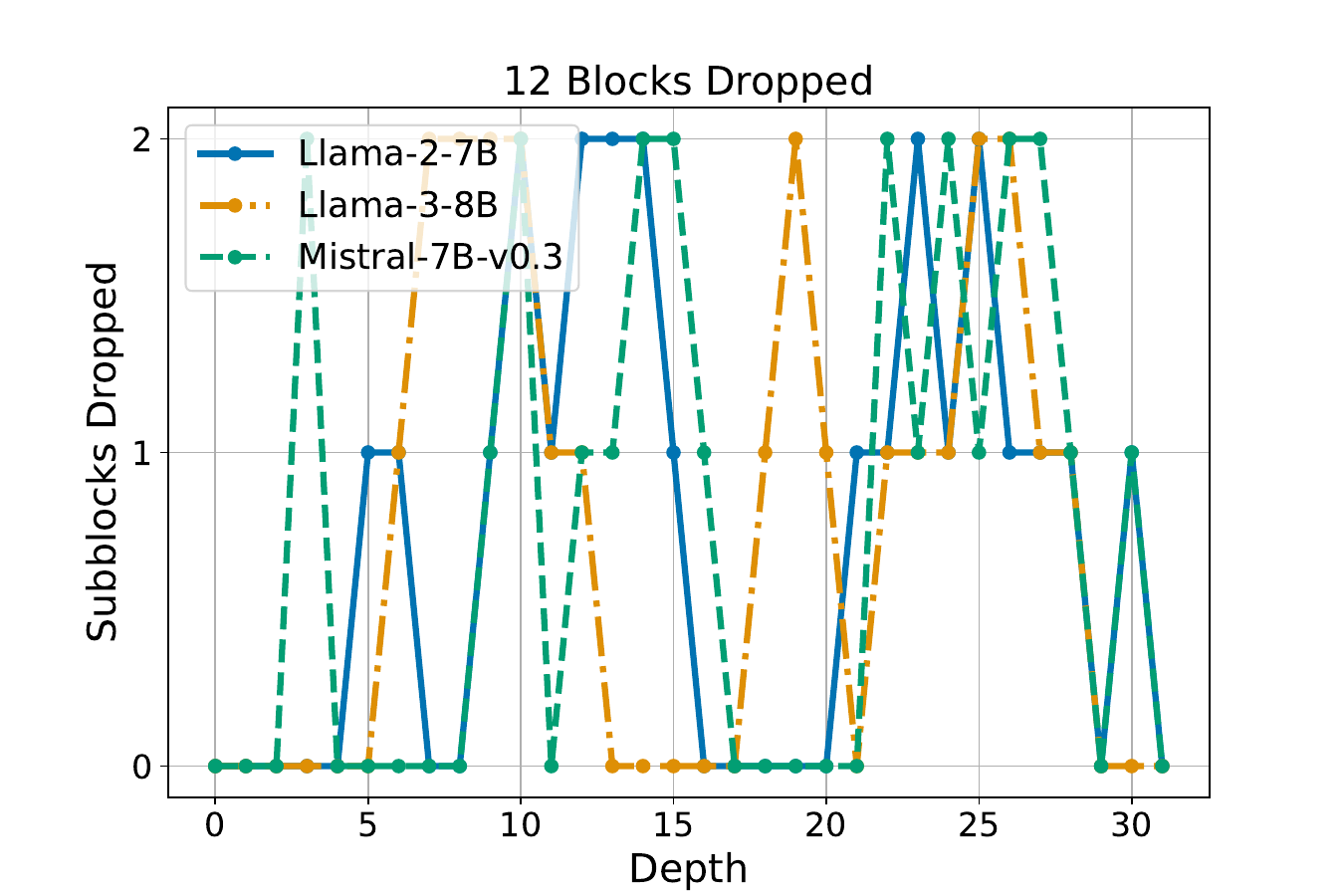}
    \hspace{0.02\linewidth}
    \includegraphics[width=0.46\linewidth]{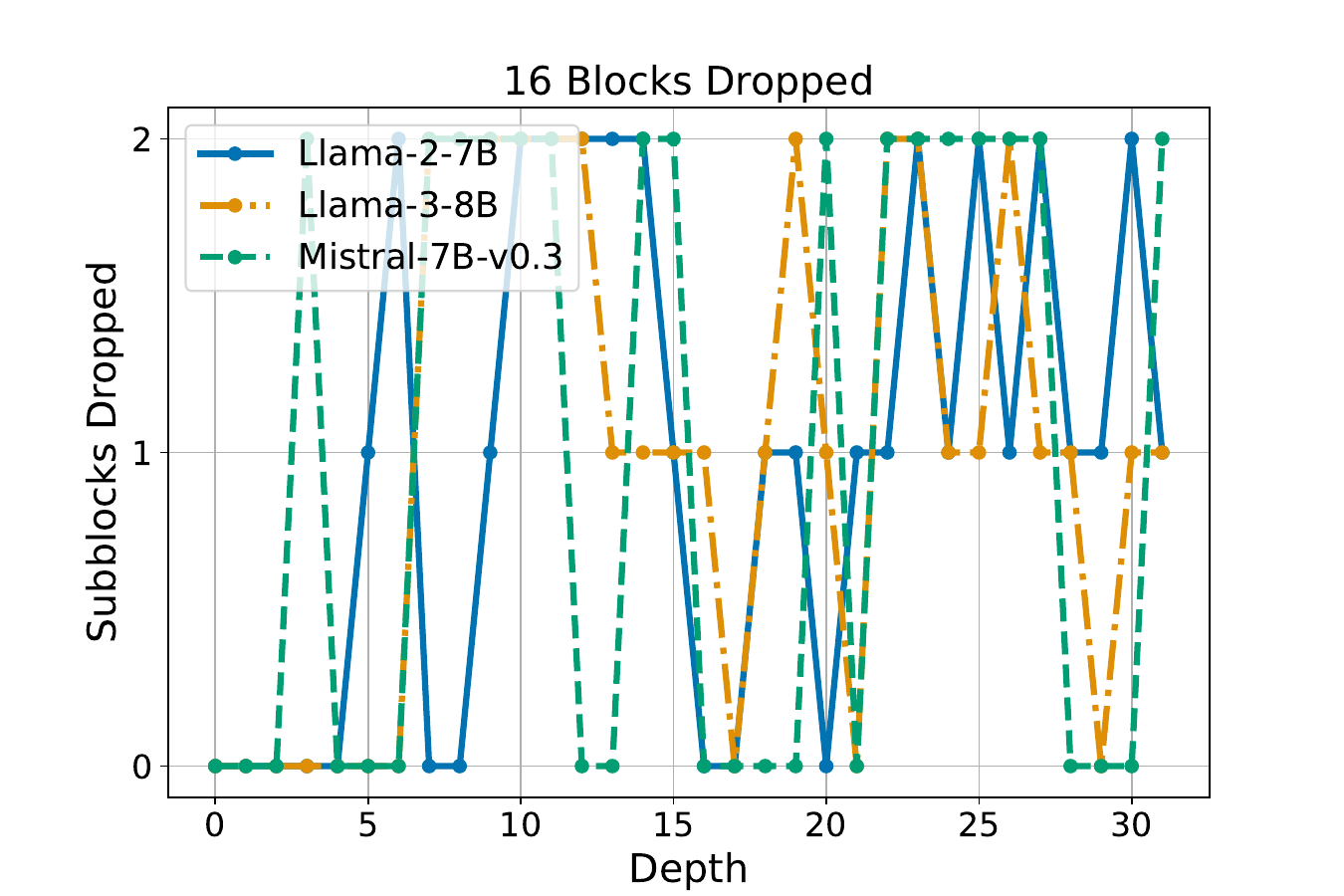}

    \caption{Optimal removal configurations identified by \methodname{} for different models.}
    \label{fig:drop_configs}
\end{figure}

\begin{table}[htb]
\footnotesize
\centering
\setlength\tabcolsep{3pt}
\caption{First 16 blocks in removal order of ShortGPT, Weight Subcloning and Shortened Llama on different models.}
\resizebox{0.8\textwidth}{!}{
\begin{tabular}{c|c|c}
\toprule
\textbf{Model} & \textbf{Method} & \textbf{Removal Order (Left to Right)}  \\ 
\midrule
\multirow{3}{*}{Mistral-7B-v0.3} & ShortGPT & 26, 25, 24, 27, 23, 22, 28, 30, 21, 29, 20, 19, 13, 17, 18, 12  \\
                    & Weight Subcloning & 26, 25, 24, 27, 23, 28, 22, 30, 21, 29, 20, 19, 13, 17, 12, 18 \\
                    & Shortened Llama & 10, 12, 13, 11, 08, 09, 14, 15, 07, 06, 04, 27, 24, 16, 25, 05 \\
\midrule
\multirow{3}{*}{Llama-2-7B} & ShortGPT & 27, 25, 26, 28, 29, 24, 23, 22, 21, 30, 20, 19, 18, 17, 15, 14 \\
                    & Weight Subcloning & 27, 25, 28, 29, 26, 24, 23, 22, 21, 19, 30, 20, 18, 17, 14, 15  \\
                    & Shortened Llama & 11, 12, 08, 09, 10, 06, 24, 25, 07, 14, 23, 13, 22, 21, 15, 27 \\
\midrule
\multirow{3}{*}{Llama-3-8B} & ShortGPT & 25, 26, 27, 24, 28, 23, 22, 29, 20, 21, 19, 18, 30, 17, 16, 11 \\
                    & Weight Subcloning & 25, 27, 26, 24, 28, 23, 22, 29, 20, 21, 19, 18, 30, 17, 16, 11  \\
                    & Shortened Llama & 10, 08, 09, 11, 26, 25, 12, 22, 24, 23, 14, 13, 28, 06, 19, 21 \\
\midrule
\multirow{3}{*}{Llama-3.1-8B} & ShortGPT &  25, 26, 24, 27, 23, 28, 22, 29, 20, 21, 19, 18, 17, 30, 16, 10 \\
                    & Weight Subcloning &  25, 27, 26, 24, 28, 23, 22, 29, 20, 21, 19, 18, 30, 17, 16, 10 \\
                    & Shortened Llama &  10, 09, 11, 08, 26, 25, 12, 24, 22, 23, 14, 28, 06, 13, 19, 21\\
\bottomrule
\end{tabular}
}
\label{tab:appendix_removal_order}
\end{table}

\FloatBarrier

\subsection{Correlation of Scores with Perplexity}
\looseness=-1
In this experiment, we first calculated the cosine similarity and squared error for each block by comparing activations before and after the block. Next, we randomly removed subsets of blocks (excluding the first and last two) and for each configuration, computed the average cosine similarity and squared error. The results are shown in Figure~\ref{fig:correlation_scores_with_ppl}. Initially, the average squared error exhibited a negative correlation, as the $\ell_2$-norm of the activations increased with depth. This led to configurations with early blocks removed having small average error. To mitigate this, we normalized the activations prior to computing the squared error, which significantly improved the correlation, resulting in performance comparable to cosine similarity. However, as sparsity increased, the correlation degraded significantly for both methods, offering insight into why removal techniques based on scoring fail even at moderate levels of sparsity. Meanwhile, when removing only a small number of blocks, the average perplexity when removing each block separatly is a strong predictor of the performance after removing an entire set of blocks, as depicted in Figure~\ref{fig:correlation_ppl_with_ppl}. We conclude that error monotonicity holds at smaller compression levels, but decays rapidly at medium sparsity. The experiments were done using 131,072 tokens from the Fineweb-Edu calibration dataset.

\begin{figure}

    \caption{Effect of removing random subsets of $2$, $4$, and $6$ blocks for Llama-3-8B. The x-axis depicts the average perplexity when removing each block separately, while the y-axis depicts the perplexity after removing the entire set of blocks. Error monotonicity holds for smaller compression levels, but decays with increasing sparsity.}
    \centering

\includegraphics[width=0.32\linewidth]{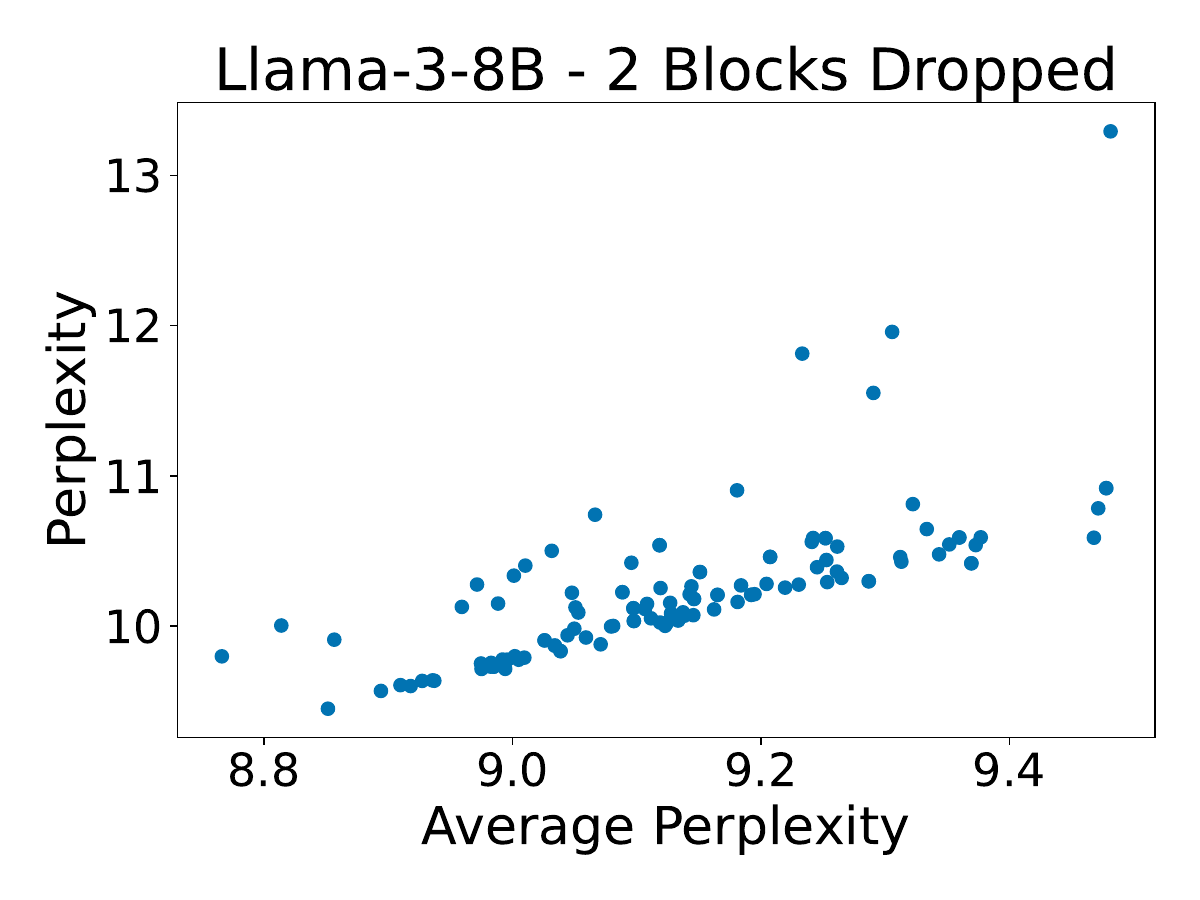}
\hfill
\includegraphics[width=0.32\linewidth]{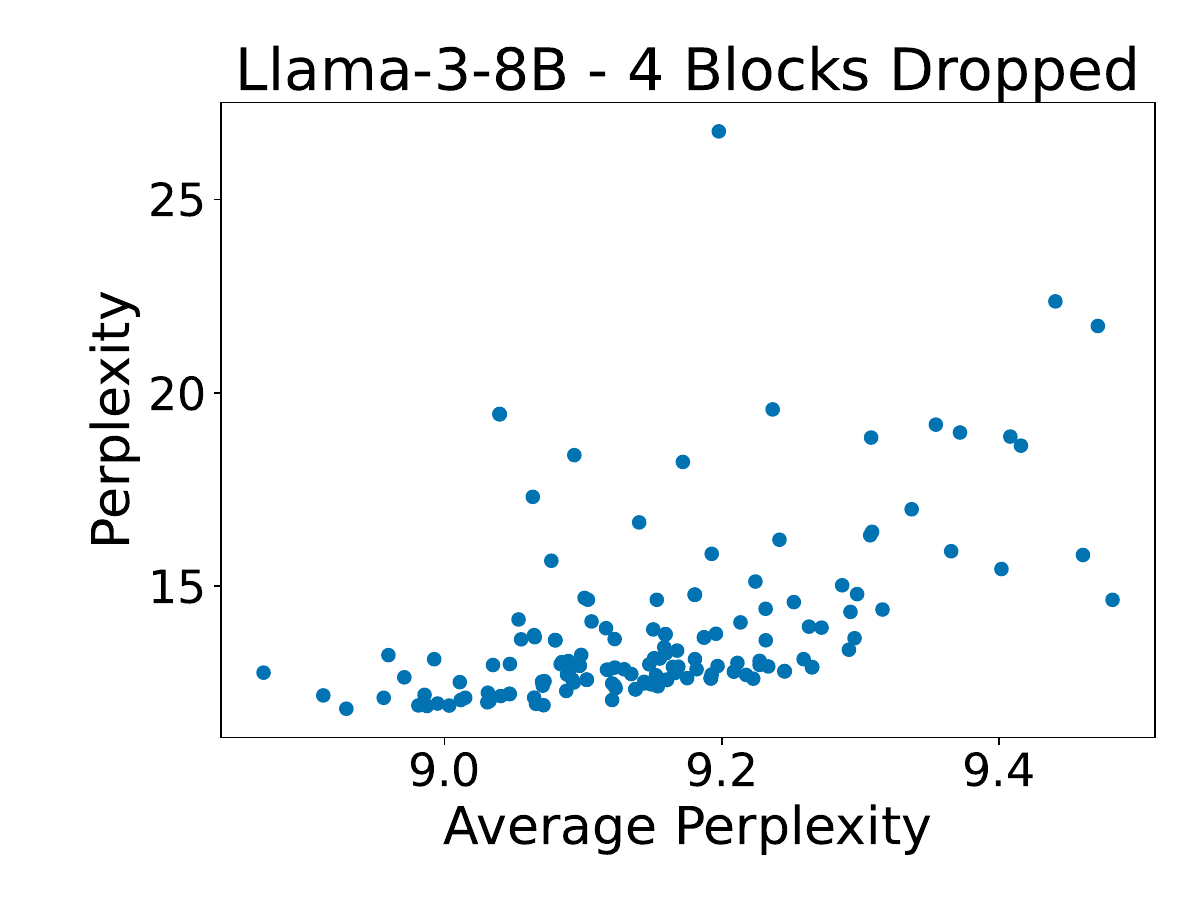}
\hfill
\includegraphics[width=0.32\linewidth]{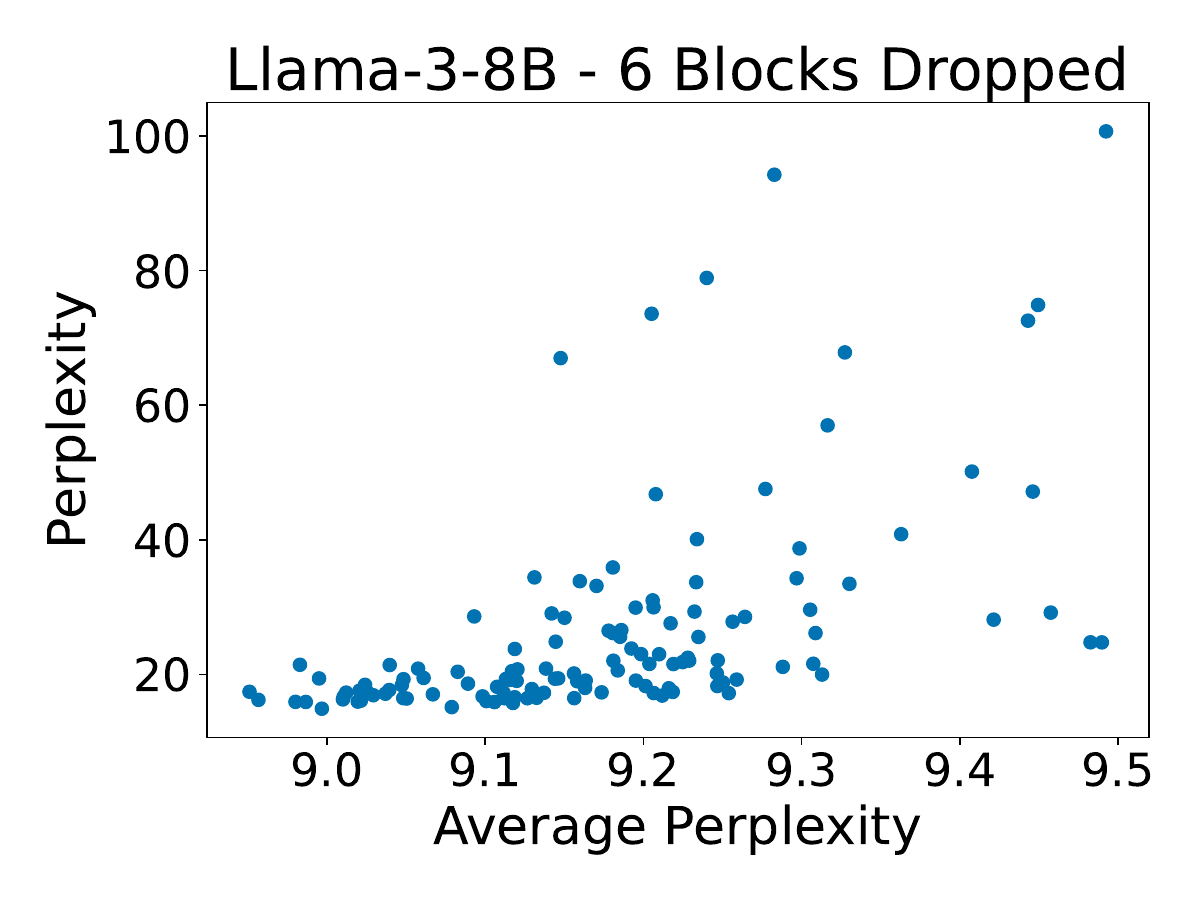}
    \label{fig:correlation_ppl_with_ppl}
\end{figure}
\begin{figure}[htb]
\centering

\includegraphics[width=0.32\linewidth]{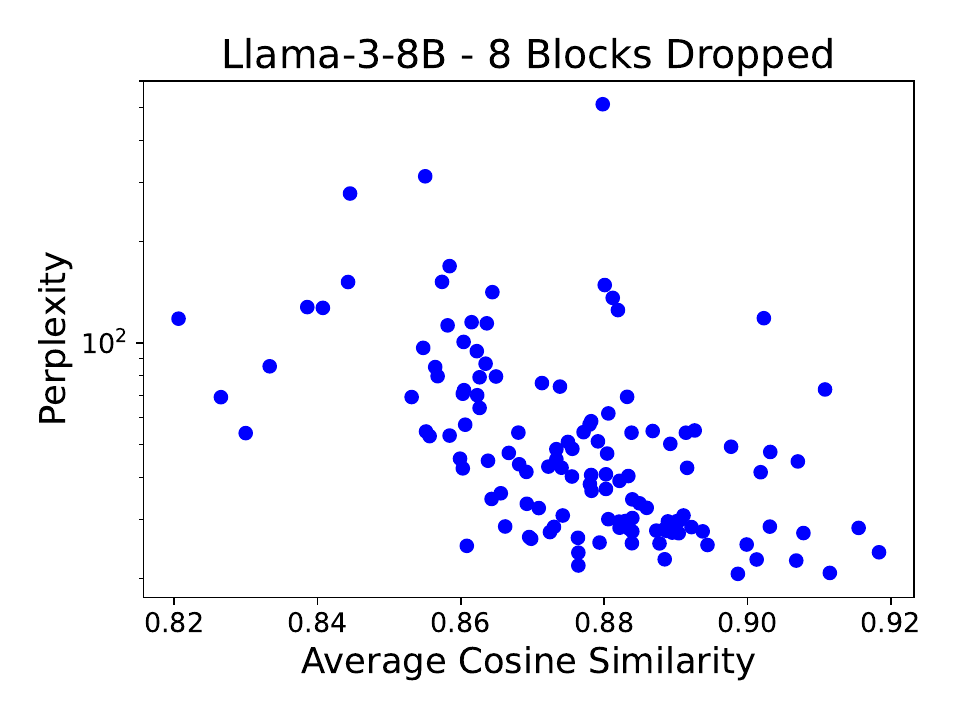}
\hfill
\includegraphics[width=0.32\linewidth]{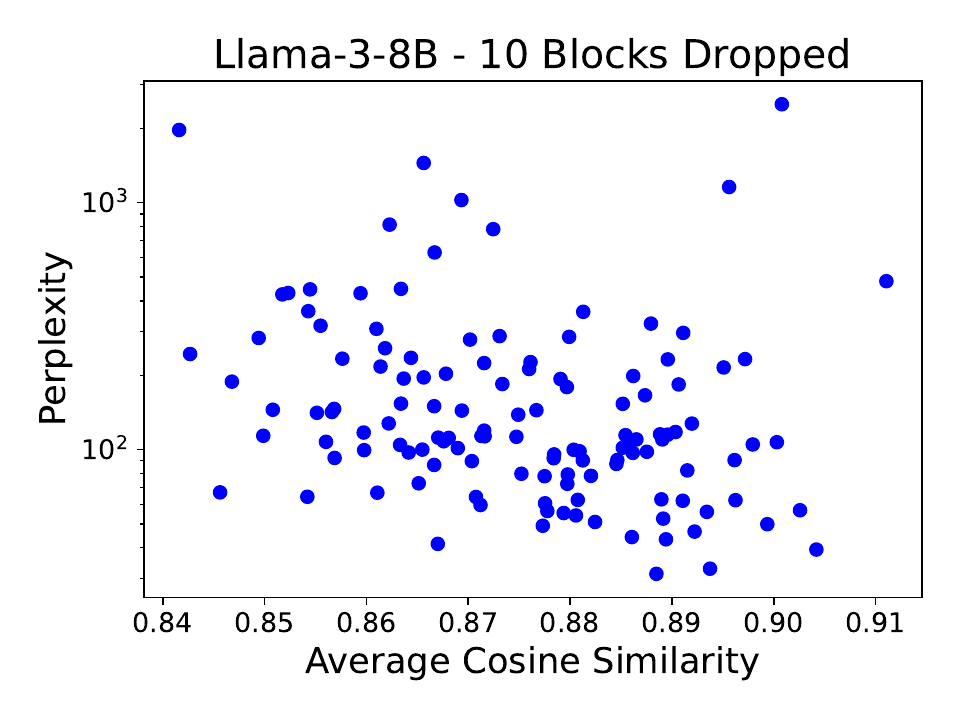}
\hfill
\includegraphics[width=0.32\linewidth]{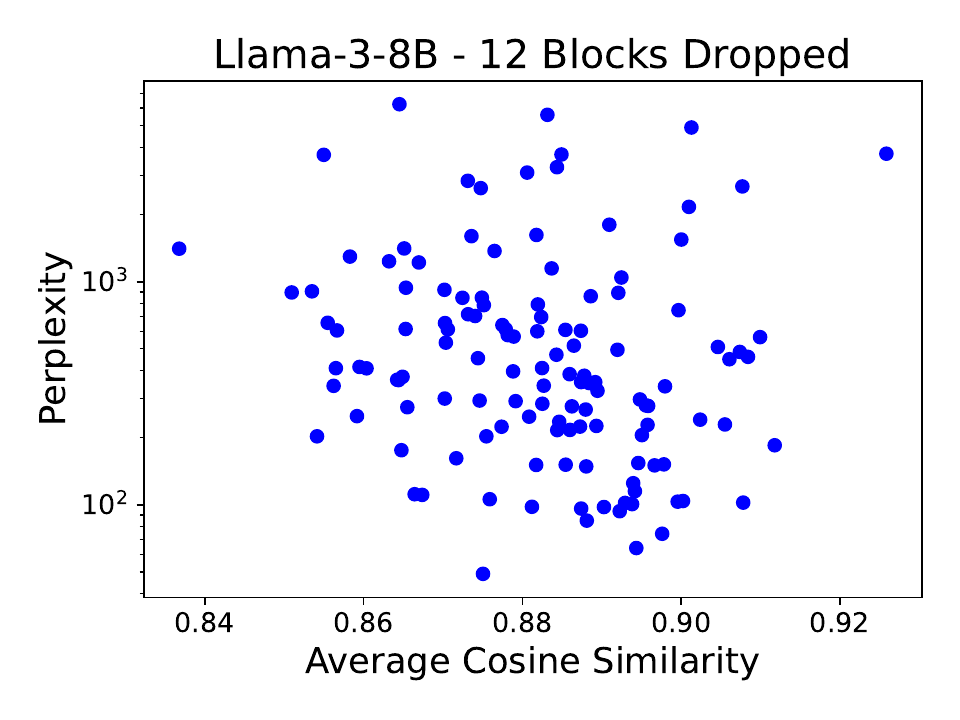}

\vspace{1em}

\includegraphics[width=0.32\linewidth]{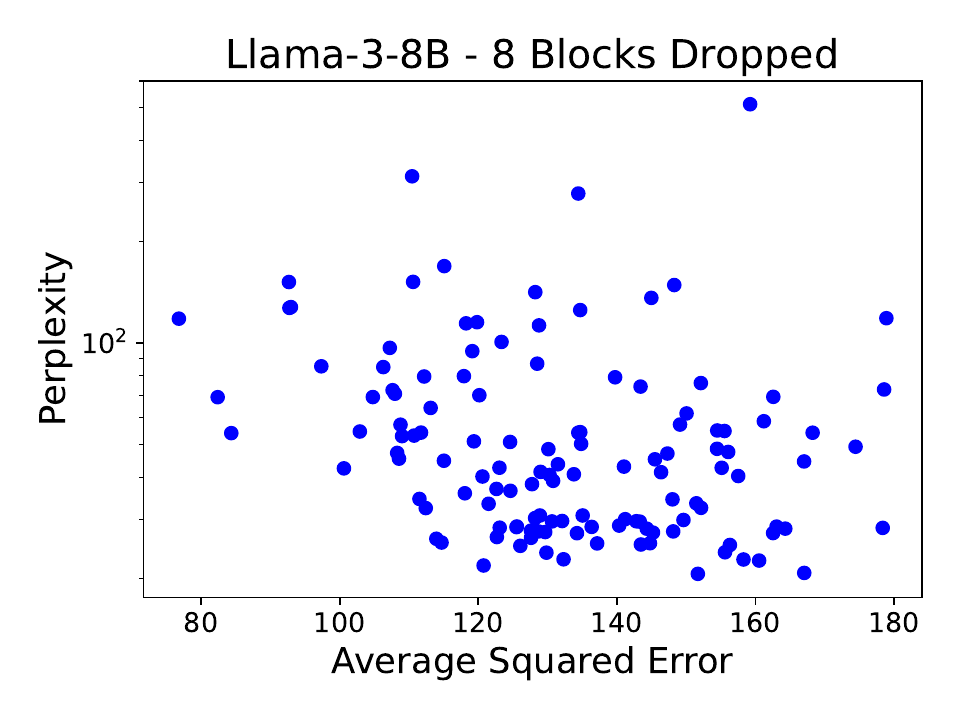}
\hfill
\includegraphics[width=0.32\linewidth]{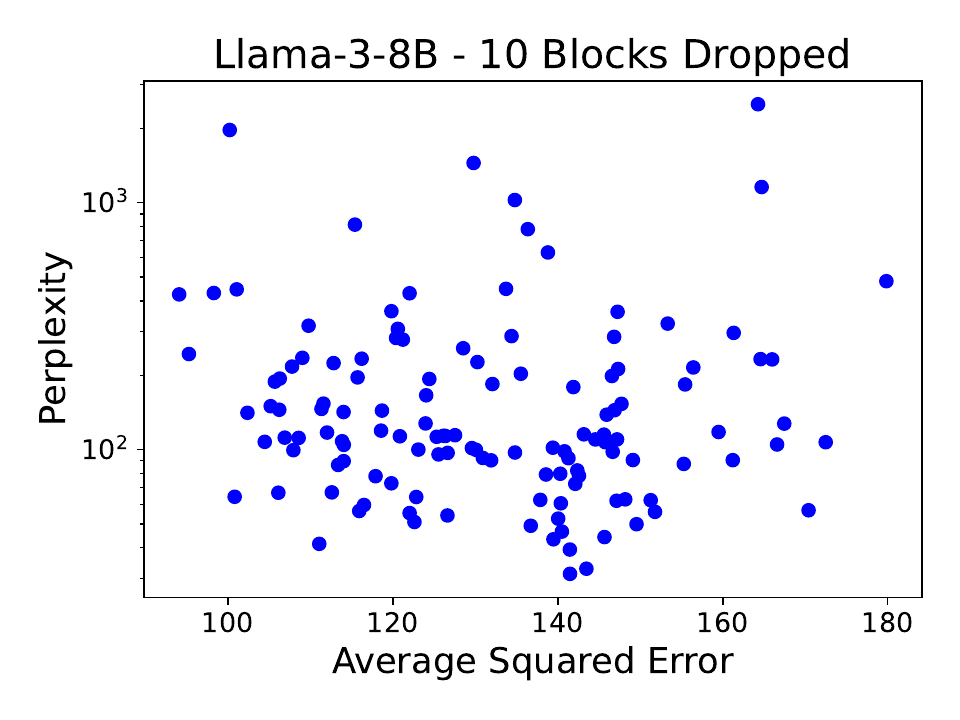}
\hfill
\includegraphics[width=0.32\linewidth]{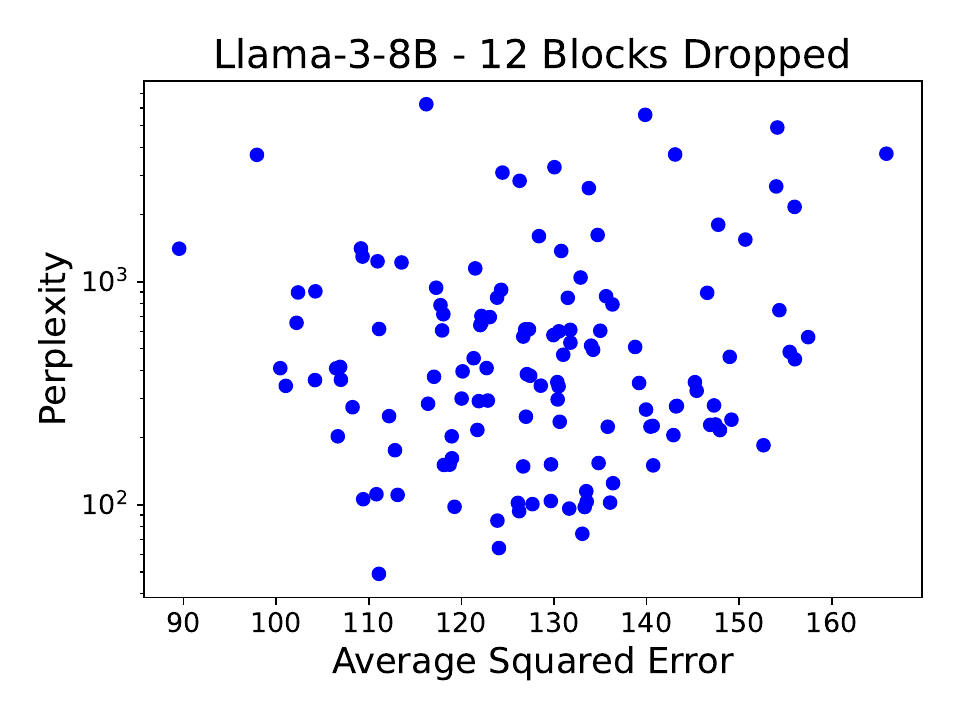}

\vspace{1em}

\includegraphics[width=0.32\linewidth]{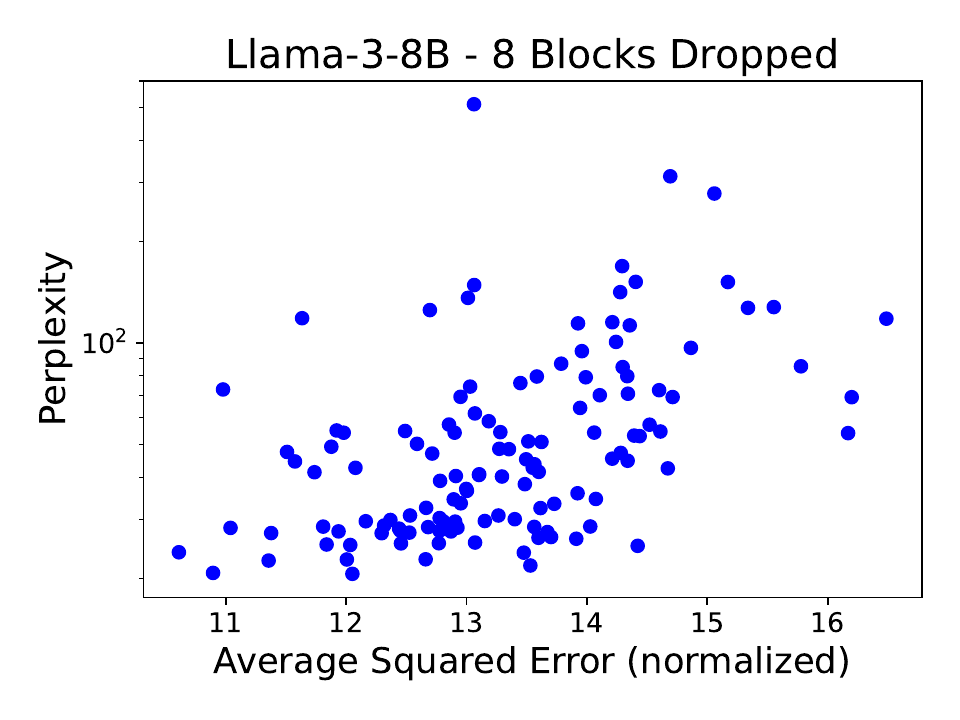}
\hfill
\includegraphics[width=0.32\linewidth]{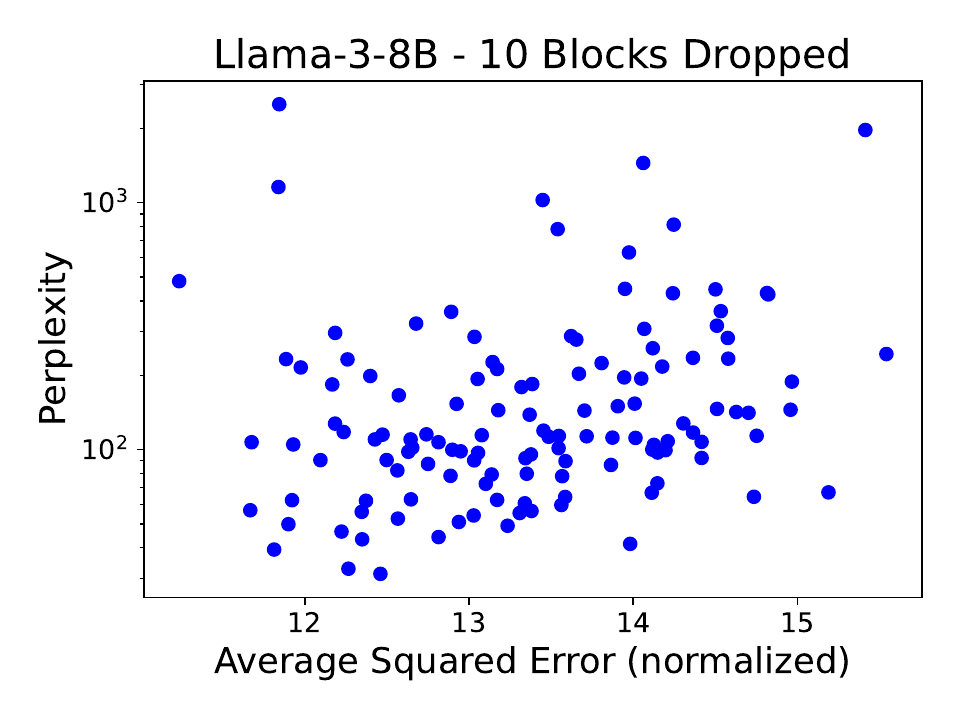}
\hfill
\includegraphics[width=0.32\linewidth]{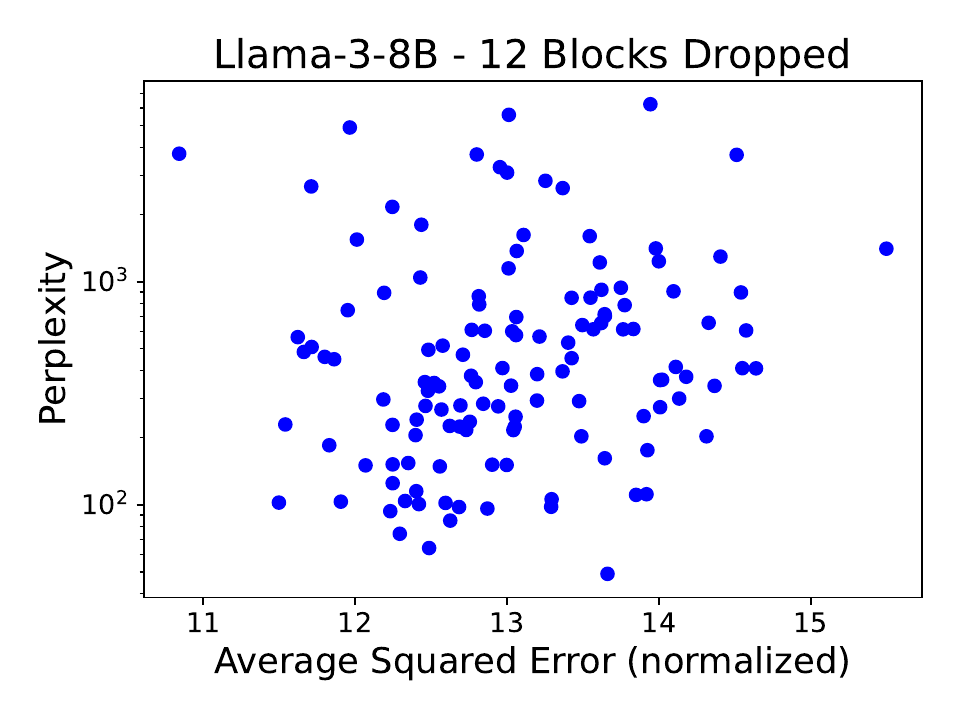}

\caption{Effect of removing random subsets of $8$, $10$, and $12$ blocks for Llama-3-8B. The x-axis depicts the average cosine similarity (first row), average squared error (second row), and the average squared error after normalization (third row) when removing each block separately, while the y-axis depicts the perplexity after removing the entire set of blocks. Error monotonicity breaks at medium sparsity rates, providing an explanation for the failure of scoring-based methods at these sparsity levels.}
\label{fig:correlation_scores_with_ppl}
\end{figure}

\FloatBarrier

\section{Additional Unstructured Sparsity Results}
\subsection{50\% and 60\% Sparsity}
In the main text, we focused on results at 70\% sparsity, where the performance differences are more pronounced. However, since 50\% and 60\% sparsity levels are more practical and frequently referenced in the literature, we present the corresponding results in Tables~\ref{tab:sparsity_50} and~\ref{tab:sparsity_60}. We have also included Llama-2-7B in these tables for legacy purposes. Even at these lower sparsity levels, \methodname{} demonstrates significant improvements over uniform sparsity and consistently outperforms OWL.

\begin{table}[htb]
\footnotesize
\centering
\setlength\tabcolsep{3pt}
\caption{Performance of various sparsity profiles at 50\% sparsity.}\label{tab:sparsity_50}
\resizebox{0.75\textwidth}{!}{
\begin{tabular}{c|c|cc|ccccc|c}
    \toprule
    \bf{Model} & \bf{Method} & \bf{Wiki2$\downarrow$} & \bf{C4$\downarrow$} & \bf{ArcC$\uparrow$} & \bf{ArcE$\uparrow$} & \bf{HS$\uparrow$} &   \bf{PiQA$\uparrow$} & \bf{WG$\uparrow$} & \bf{Avg$\uparrow$} \\
    \midrule
    \multirow{5}{*}{Mistral-7B-v0.3}  & Dense 
    & 4.82 & 7.72 & 48.9 & 79.6 & 60.9 & 80.3 & 73.9 & 68.7 \\
    \cmidrule{2-10}
    & Uniform & 5.68 & 8.93 & 43.7 & 76.7 & 55.7 & 78.4 & 71.0 & 65.1 \\
    & OWL & 5.69 & 8.94 & 43.9 & 76.9 & 55.4 & 78.5 & 70.3 & 65.0 \\
    & EvoPress & \bf{5.49} & \bf{8.70} & 45.7 & 77.3 & 56.5 & 78.9 & 71.2 & \bf{65.9} \\
    \midrule
    \multirow{5}{*}{Llama-2-7B} & Dense
    & 5.12 & 6.93 & 43.4 & 76.3 & 57.1 & 78.1 & 69.0 & 64.8 \\
    \cmidrule{2-10}
    & Uniform & 6.40 & 8.87 & 41.3 & 73.4 & 52.8 & 75.7 & 68.8 & 62.4 \\
    & OWL & 6.38 & 8.77 & 41.1 & 73.2 & 53.2 & 76.5 & 70.2 & 62.9 \\
    & EvoPress &  \bf{6.22} &  \bf{8.52} & 41.5 & 74.2 & 54.0 & 76.7 & 69.6 &  \bf{63.2} \\
    \midrule
    \multirow{5}{*}{Llama-3-8B} & Dense
    & 5.54 & 7.10 & 50.4 & 80.1 & 60.2 & 79.7 & 72.6 & 68.6 \\
    \cmidrule{2-10}
    & Uniform & 8.05 & 13.07 & 43.6 & 75.7 & 54.2 & 76.1 & 71.7 & 64.3 \\
    & OWL & 8.13 & 13.12 & 43.8 & 75.8 & 54.0 & 75.7 & 72.2 & 64.3 \\
    & EvoPress &  \bf{7.63} &  \bf{12.53} & 43.9 & 77.5 & 54.5 & 76.8 & 72.2 &  \bf{65.0} \\
    \midrule
    \multirow{5}{*}{Llama-3.1-8B} 
    & Dense & 5.61 & 8.90 & 51.2 & 81.4 & 60.0 & 80.1 & 73.9 & 69.3 \\
    \cmidrule{2-10}
    & Uniform & 8.06 & 13.03 & 44.5 & 76.7 & 54.0 & 76.7 & 71.5 & 64.7 \\
    & OWL &  8.02 & 12.99 & 44.2 & 76.5 & 53.8 & 76.8 & 72.5 & 64.8 \\
    & EvoPress & \bf{7.51} & \bf{12.31} & 46.6 & 77.7 & 54.9 & 77.6 & 71.7 & \bf{65.7} \\
    \bottomrule
\end{tabular}
}
\end{table}
\begin{table}[htb]
\footnotesize
\centering
\setlength\tabcolsep{3pt}
\caption{Performance of various sparsity profiles at 60\% sparsity.}\label{tab:sparsity_60}
\resizebox{0.75\textwidth}{!}{
\begin{tabular}{c|c|cc|ccccc|c}
    \toprule
    \bf{Model} & \bf{Method} & \bf{Wiki2$\downarrow$} & \bf{C4$\downarrow$} & \bf{ArcC$\uparrow$} & \bf{ArcE$\uparrow$} & \bf{HS$\uparrow$} & \bf{PiQA$\uparrow$} & \bf{WG$\uparrow$} & \bf{Avg$\uparrow$} \\
    \midrule
    \multirow{5}{*}{Mistral-7B-v0.3}  & Dense 
    & 4.82 & 7.72 & 48.9 & 79.6 & 60.9 & 80.3 & 73.9 & 68.7 \\
    \cmidrule{2-10}
    & Uniform & 7.78 & 11.86 & 38.0 & 72.3 & 49.4 & 75.0 & 69.3 & 60.9 \\
    & OWL & 7.50 & 11.34 & 38.5 & 71.9 & 49.6 & 75.1 & 70.2 & 61.1 \\
    & EvoPress & \bf{7.08} & \bf{10.27} & 40.5 & 72.8 & 51.9 & 76.9 & 68.8 & \bf{62.2} \\
    \midrule
    \multirow{5}{*}{Llama-2-7B} & Dense
    & 5.12 & 6.93 & 43.4 & 76.3 & 57.1 & 78.1 & 69.0 & 64.8 \\
    \cmidrule{2-10}
    & Uniform & 9.3 & 12.37 & 35.8 & 69.5 & 45.9 & 72.4 & 65.9 & 57.9 \\
    & OWL & 8.35 & 11.00 & 36.0 & 69.1 & 47.5 & 73.2 & 66.2 & 58.4 \\
    & EvoPress & \bf{8.21} & \bf{10.34} & 37.1 & 70.6 & 49.3 & 74.4 & 67.6 & \bf{59.8} \\
    \midrule
    \multirow{5}{*}{Llama-3-8B} & Dense
    & 5.54 & 7.10 & 50.4 & 80.1 & 60.2 & 79.7 & 72.6 & 68.6 \\
    \cmidrule{2-10}
    & Uniform & 13.86 & 21.43 & 35.2 & 69.7 & 45.6 & 72.2 & 68.0 & 58.2 \\
    & OWL & 12.37 & 18.53 & 38.0 & 70.3 & 47.7 & 72.1 & 68.5 & 59.3 \\
    & EvoPress & \bf{11.02} & \bf{16.37} & 39.0 & 71.9 & 48.6 & 74.0 & 69.1 & \bf{60.5} \\
    \midrule
     \multirow{5}{*}{Llama-3.1-8B} 
    & Dense & 5.61 & 8.90 & 51.2 & 81.4 & 60.0 & 80.1 & 73.9 & 69.3 \\
    \cmidrule{2-10}
    & Uniform & 13.43 & 21.46 & 36.4 & 69.7 & 46.2 & 72.3 & 67.7 & 58.5 \\
    & OWL &  12.08 & 18.25 & 38.9 & 71.1 & 47.7 & 73.1 & 68.8 & 59.9 \\
    & EvoPress & \bf{10.58} & \bf{15.96} & 40.0 & 72.5 & 49.0 & 74.6 & 69.5 & \bf{61.1} \\
    \bottomrule
\end{tabular}
}
\end{table}
\FloatBarrier

\subsection{Sparsity Profiles}

Below, we visualize sparsity profiles determined by \methodname{} and baseline approaches. Notably, \methodname{} prunes the initial blocks less aggressively than the middle and later blocks. Additionally, the \texttt{q\_proj} projection attains higher sparsity levels, whereas the \texttt{v\_proj} projection is pruned to significantly lower sparsity on average. Although Figure~\ref{fig:sparsity_distribution_block_Meta-Llama-3_1-8B} may suggest that OWL and \methodname{} produce similar sparsity profiles, this is misleading -- OWL enforces uniform sparsity at block level, as their original per-layer approach underperformed \citep{owl}. 

\begin{minipage}[htb]{0.48\linewidth}
\includegraphics[width=\linewidth]{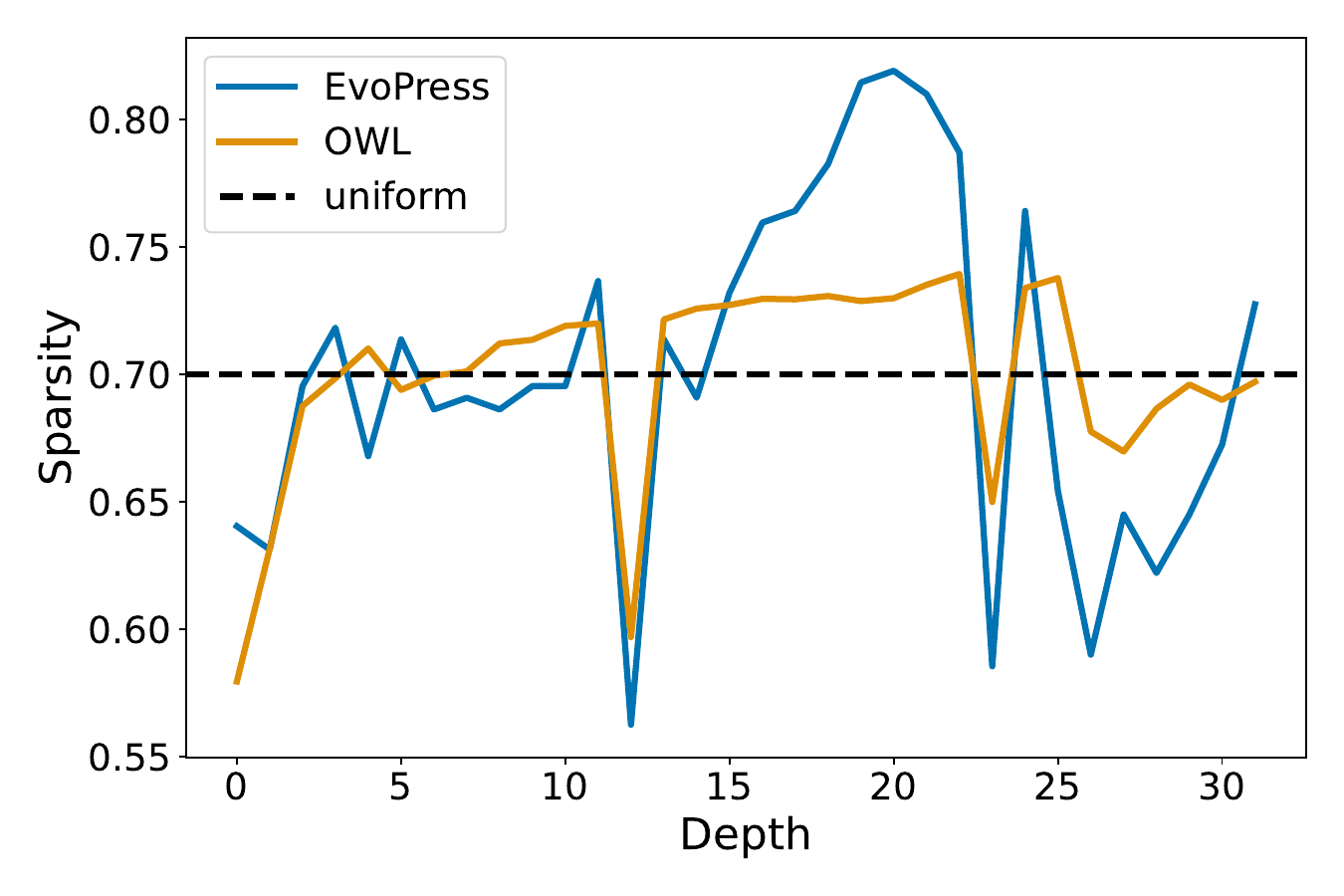}
\captionof{figure}{Comparison of different block-level sparsity profiles for Llama-3.1-8B at 70\% sparsity.}
\label{fig:sparsity_distribution_block_Meta-Llama-3_1-8B}
\end{minipage}
\hfill
\begin{minipage}[htb]{0.48\linewidth}
\includegraphics[width=\linewidth]{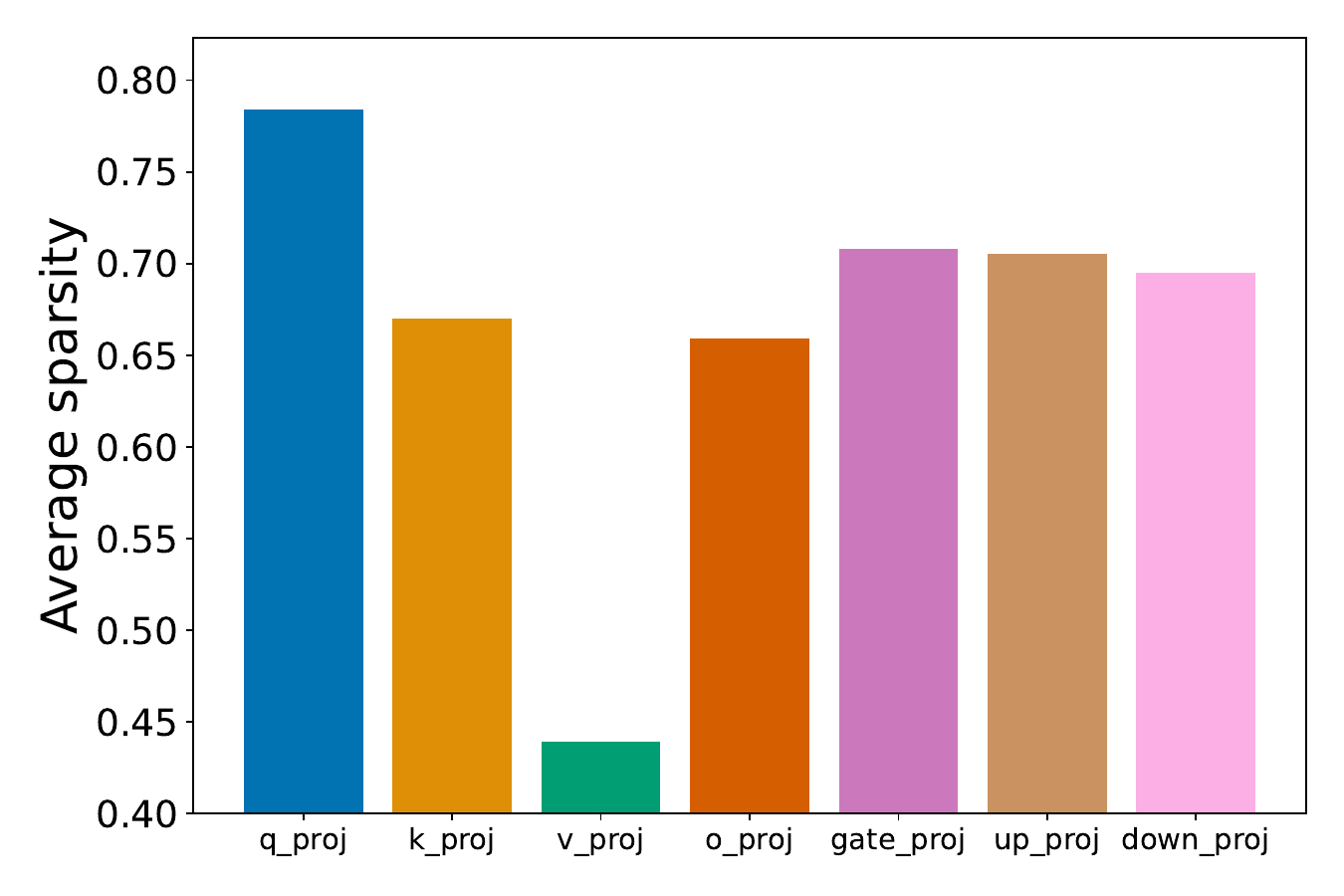}
\captionof{figure}{Average sparsity per projection type for Llama-3.1-8B at 70\% sparsity for \methodname{}.}
\label{fig:sparsity_distribution_proj_Meta-Llama-3_1-8B}
\end{minipage}

\section{Additional Quantization Results}
\subsection{2.25 Bit and 2.5 Bit}
\label{app:subsec:225and25}
In addition to the 3 bit results presented in Section~\ref{subsect:quantization}, we further evaluated \methodname{} under extreme quantization conditions, specifically testing it at 2.25 bit and 2.5 bit levels. As a baseline, we generated 32 random configurations combining 2 bit and 3 bit layers and selected the best performing setup. The results, as shown in Table~\ref{tab:extreme_quant}, demonstrate that \methodname{} significantly outperforms this baseline, highlighting its ability to achieve extreme quantization levels.

\begin{table}[htb]
\footnotesize
\centering
\setlength\tabcolsep{3pt}
\caption{Performance of EvoPress on 2.25 bit and 2.5 bit quantization.}\label{tab:quant_extreme}
\resizebox{0.75\textwidth}{!}{
\begin{tabular}{c|c|c|cc|ccccc|c}
    \toprule
    \bf{Model} & \bf{\# Bits} & \bf{Method} &\bf{Wiki2$\downarrow$} & \bf{C4$\downarrow$} & \bf{ArcC$\uparrow$} & \bf{ArcE$\uparrow$} & \bf{HS$\uparrow$} & \bf{PiQA$\uparrow$} & \bf{WG$\uparrow$} & \bf{Avg$\uparrow$} \\
    \midrule
     \multirow{5}{*}{Mistral-7B-v0.3}& \multirow{2}{*}{2.25} & Best of 32 & 11.53 & 18.32 & 30.1 & 59.6 & 44.5 & 69.4 & 56.8 & 52.1 \\
    &  & EvoPress   & \bf{8.63} & \bf{13.47} & 36.2 & 66.0 & 49.3 & 74.2 & 63.5 & \bf{57.8} \\
    \cmidrule{2-11}
    & \multirow{2}{*}{2.5} &  Best of 32 & 7.50 & 11.76 & 37.0 & 68.0 & 51.7 & 75.0 & 63.5 & 59.0 \\
     & &   EvoPress & \bf{6.60} & \bf{10.40} & 39.8 & 71.7 & 54.0 & 77.1 & 65.8 & \bf{61.7} \\
    \midrule 
     \multirow{5}{*}{Llama-2-7B}& \multirow{2}{*}{2.25} & Best of 32 & 13.18 & 18.19 & 24.8 & 50.2 & 40.3 & 66.8 & 56.1 & 47.7 \\
    &  & EvoPress   & \bf{9.82} & \bf{9.93} & 29.5 & 61.8 & 46.2 & 70.3 & 59.4 & \bf{53.4} \\
    \cmidrule{2-11}
    & \multirow{2}{*}{2.5} &  Best of 32 & 9.42 & 9.01 & 29.1 & 58.6 & 46.9 & 70.1 & 62.6 & 53.5 \\
     & &   EvoPress & \bf{8.03} & \bf{7.33} & 35.3 & 68.4 & 50.8 & 73.9 & 64.2 & \bf{58.5} \\
    \midrule 
    \multirow{5}{*}{Llama-3-8B}& \multirow{2}{*}{2.25} & Best of 32 & 149.85 & 432.96 & 21.2 & 29.1 & 28.1 & 55.6 & 49.8 & 36.8 \\
    &  & EvoPress   & \bf{23.93} & \bf{43.17} & 23.6 & 46.9 & 39.3 & 63.6 & 56.5 & \bf{46.0} \\
    \cmidrule{2-11}
    & \multirow{2}{*}{2.5} &  Best of 32 & 21.65 & 23.92 & 25.1 & 47.6 & 41.2 & 65.6 & 56.2 & 47.1 \\
     & &   EvoPress & \bf{13.93} & \bf{18.15} & 31.7 & 61.5 & 47.9 & 71.7 & 64.3 & \bf{55.4} \\
    \midrule
    \multirow{5}{*}{Llama-3.1-8B}& \multirow{2}{*}{2.25} & Best of 32 &  259.61 & 181.36 & 20.7 & 31.9 & 30.6 & 57.0 & 51.9 & 38.4 \\
    &  & EvoPress   & \bf{22.75} & \bf{33.58} & 26.7 & 48.9 & 40.2 & 63.4 & 55.7 & \bf{47.0} \\
    \cmidrule{2-11}
    & \multirow{2}{*}{2.5} &  Best of 32 & 35.33 & 37.09 & 24.1 & 48.4 & 41.7 & 62.7 & 54.5 & 46.3 \\
     & &   EvoPress & \bf{11.73} & \bf{19.03} & 32.2 & 63.3 & 47.5 & 71.8 & 62.3 & \bf{55.4} \\
    \midrule 
     \multirow{5}{*}{Phi-3-Medium}& \multirow{2}{*}{2.25} & Best of 32 & 14.20 & 18.19 & 28.9 & 46.8 & 40.0 & 61.8 & 53.1 & 46.1 \\
    &  & EvoPress   & \bf{10.48} & \bf{14.60} & 36.2 & 62.0 & 46.6 & 66.2 & 55.6 & \bf{53.3} \\
    \cmidrule{2-11}
    & \multirow{2}{*}{2.5} &  Best of 32 & 8.26 & 12.65 & 40.5 & 69.3 & 50.3 & 70.9 & 61.9 & 58.6 \\
     & &   EvoPress & \bf{7.12} & \bf{11.23} & 44.1 & 75.9 & 54.1 & 73.5 & 64.6 & \bf{62.4} \\
  \bottomrule
\end{tabular}
}
\label{tab:extreme_quant}
\end{table}

\subsection{Practical Convergence}
\label{subsec_app_quant_conv}
Similar to unstructured sparsity, \methodname{} also demonstrates rapid convergence when applied to quantization. As shown in Figure~\ref{fig:conv_quant}, the majority of improvements occur within two GPU hours, with full convergence achieved after approximately eight GPU hours. If needed, this optimization time could be further shortened by tuning the hyperparameters, similarly to the ``super-fast'' version for unstructured sparsity discussed in Section~\ref{subsect:sparsity}. However, we observed that the convergence dynamics are less smooth compared to unstructured sparsity, likely due to the limited number of quantization levels available (practically only 2, 3, and 4 bit are used), which results in a less smooth fitness landscape.

\begin{figure}[htb]
    \centering
    \includegraphics[width=0.48\linewidth]{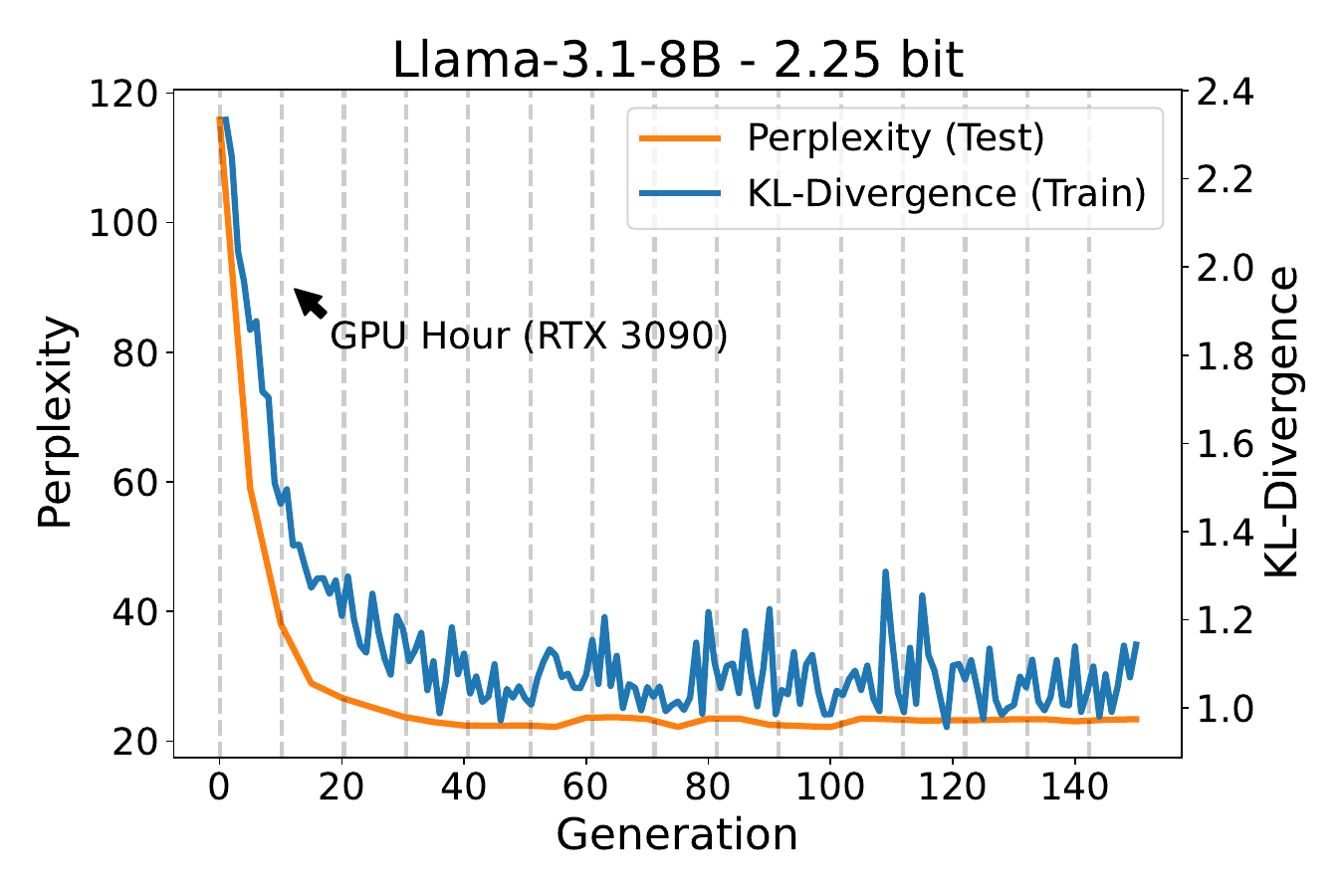}
    \includegraphics[width=0.48\linewidth]{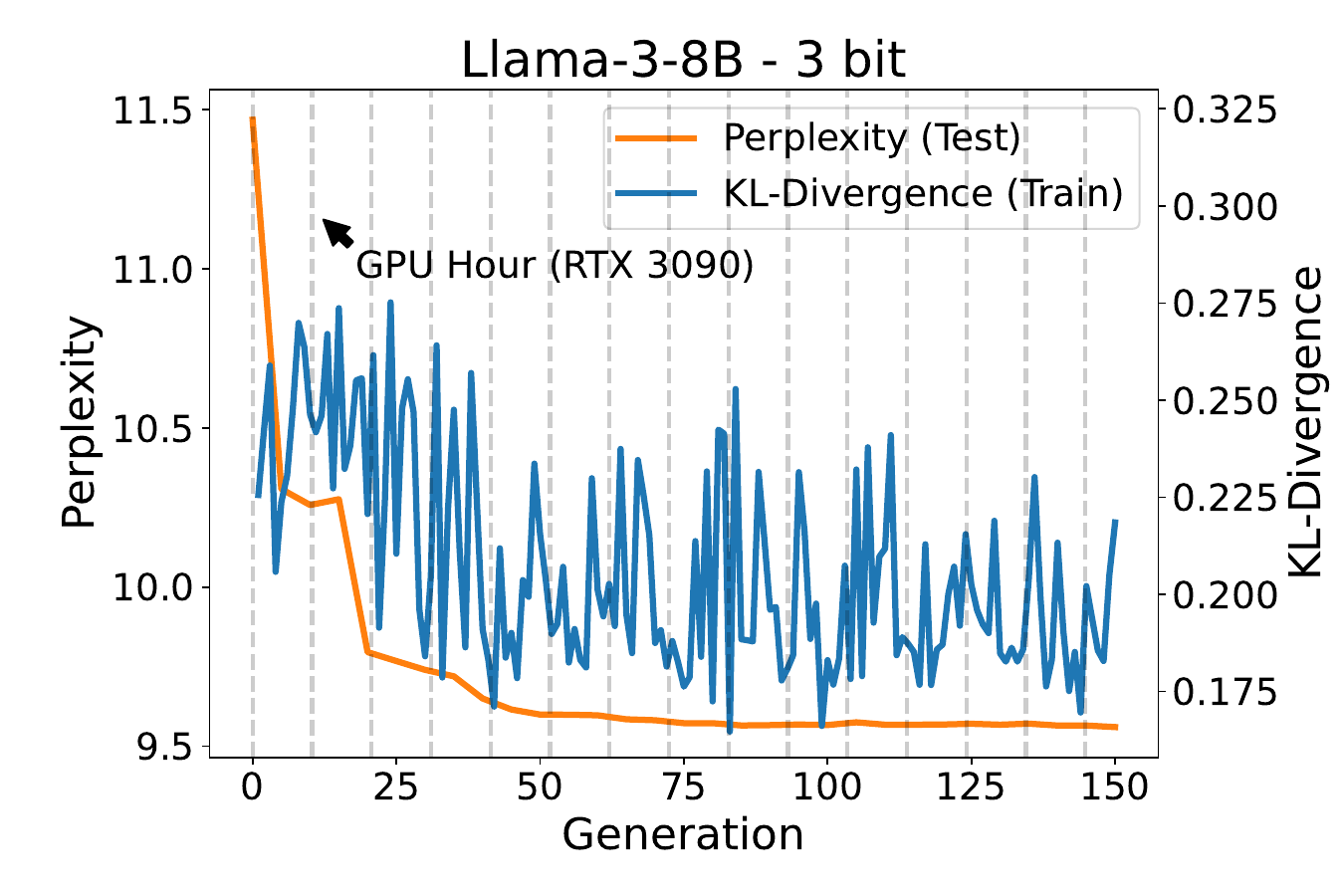}
    \caption{Convergence of \methodname{} for 2.25 bit quantization on Llama-3.1-8B (left) and 3 bit quantization on Llama-3-8B (right).} 
    \label{fig:conv_quant}
\end{figure}

\subsection{Quantization Profiles}

In this section, we visualize a quantization profile determined by \methodname{}. As shown, \methodname{} maintains a relatively uniform quantization bitwidth allocation across the model. Interestingly, while the second and the two last blocks are less compressed, the first block undergoes significantly higher compression. This suggests that ``saliency'' does not directly equate to block importance but rather depends on the specific compression method used. Additionally, similar to the profiles for unstructured sparsity, we observe a transfer of capacity to \texttt{v\_proj}.

\begin{minipage}[htb]{0.45\linewidth}
\includegraphics[width=\linewidth]{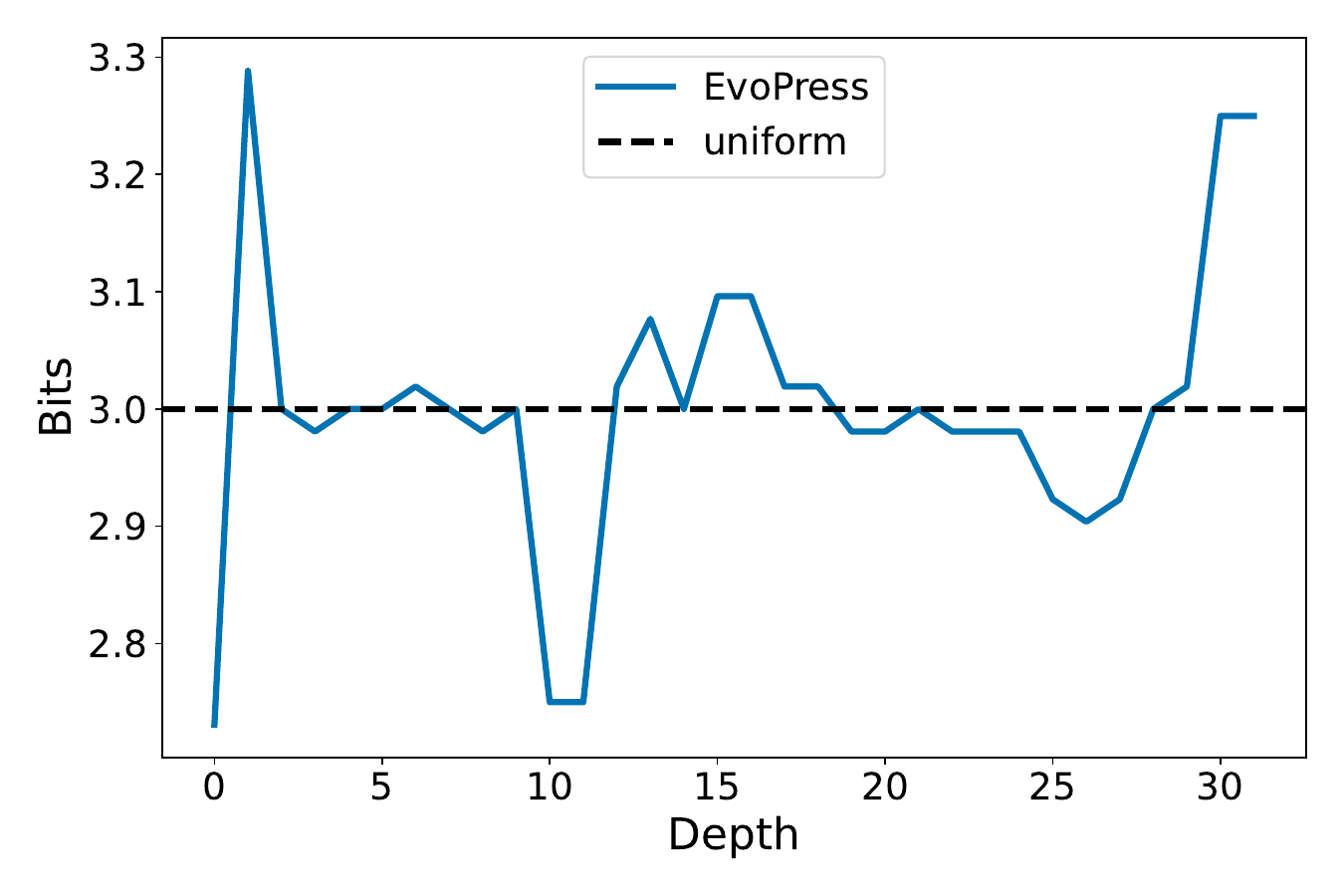}
\captionof{figure}{Block-level quantization profiles for Llama-3.1-8B at 3 bit compression on average.}
\label{fig:quantization_distribution_block_Meta-Llama-3_1-8B}
\end{minipage}
\hfill
\begin{minipage}[htb]{0.45\linewidth}
\includegraphics[width=\linewidth]{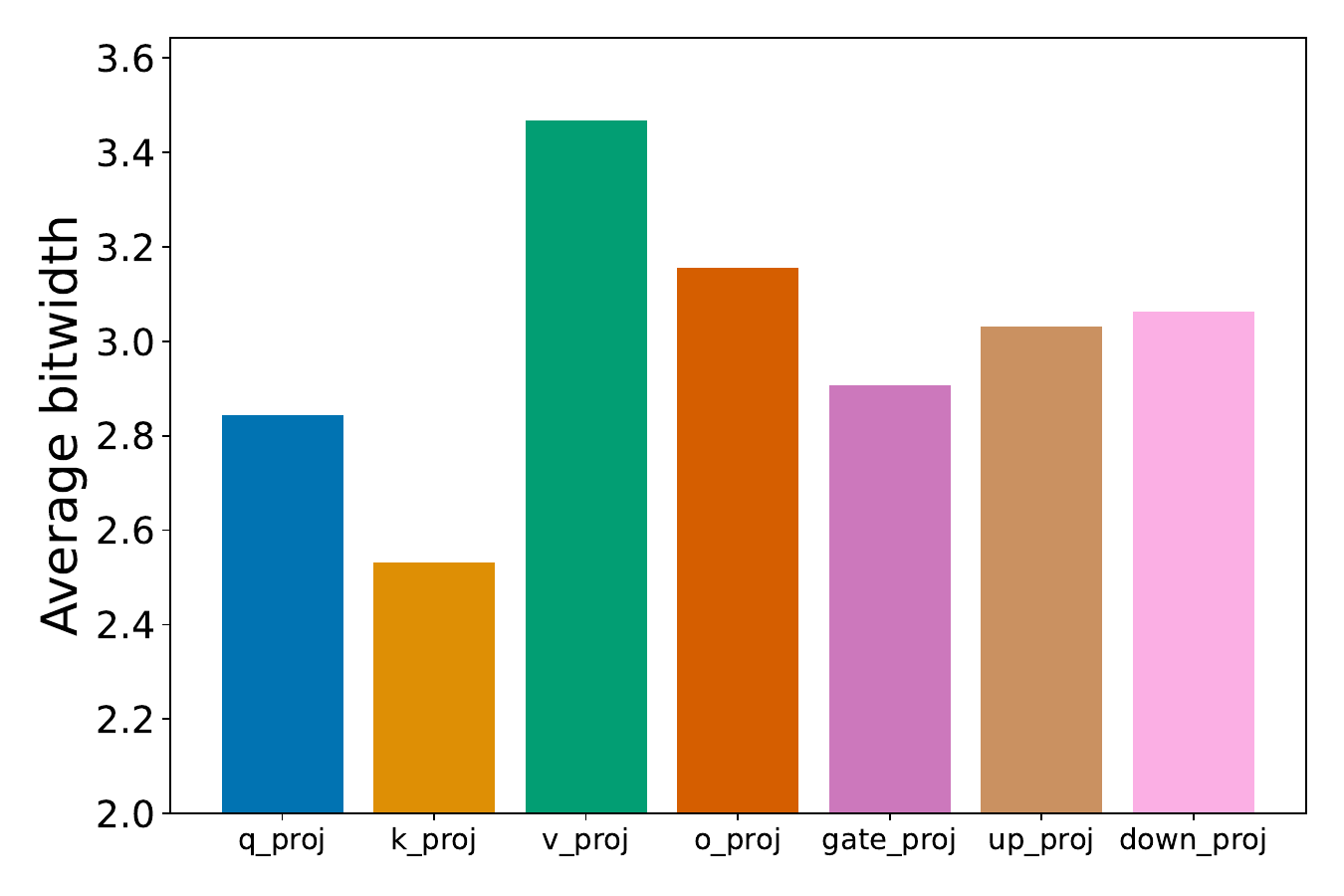}

\captionof{figure}{Average bitwidth per projection type for Llama-3.1-8B at 3 bit compression on average.}
\label{fig:quantization_distribution_proj_Meta-Llama-3_1-8B}
\end{minipage}

\section{Running Time Comparisons}
\label{app:sec:running_time_comp}
In our main results, we did not include a stopping criteria into the optimization process, but instead continued the search well beyond convergence to observe the convergence behavior. Here, we include a simple stopping criteria to our search, which terminates the search whenever the perplexity on the training set has not improved in the last 20 generations. Based on this version we compare the runtime requirements of \methodname{} with baseline methods. However, it should be noted that the amount of data used impacts both runtime and compression quality (with specific trade-offs being method-dependent), meaning one can generally trade-off both objectives, which makes runtime comparison difficult. 

In Table~\ref{tab:dropping_comparison} we compare \methodname{} for block dropping to various baseline methods on Llama-2-7B, where we, additionally to the main results, also compare against the iterative search methods BlockPruner \cite{zhong2024blockpruner} and SLEB \cite{song2024sleb}. As the results show, \methodname{} achieves the best compressed models across all sparsity ratios, while being quicker than other search methods. Scoring-based methods are considerably faster than the search-based methods, but produce significantly worse models.

\begin{table}[ht]
\centering
\footnotesize
\caption{Comparison of various depth pruning methods at different sparsities. We report
the runtime on a single NVIDIA RTX 4090 GPU. Note that some methods perform forward passes on a strongly depth-pruned model, which explains the runtime difference for EvoPress on 50\% and 12.5\% sparsity.}
\label{tab:dropping_comparison}
\begin{tabular}{cc|ccc|cc}
\toprule
\textbf{Sparsity} & \textbf{Method} & \textbf{Fineweb$\downarrow$} & \textbf{Wiki2$\downarrow$} & \textbf{C4$\downarrow$} & \textbf{Runtime} & \textbf{Fwd. Passes (Tokens)} \\
\midrule
0\%   & -                                & 6.40  & 5.12  & 6.99  & -      & -      \\
\midrule
12.5\% & ShortGPT                         & 9.30  & 8.86  & 10.78 & 46s    & 65K    \\
12.5\% & Cosine Similarity (Window)          & 8.51  & 7.53  & 9.82  & 46s    & 65K    \\
12.5\% & Weight Subcloning               & 9.60  & 9.09  & 11.06 & 49s    & 65K    \\
12.5\% & ShortenedLlama                  & 8.57  & 7.68  & 10.44 & 8min   & 2.1M   \\
12.5\% & SLEB                             & 7.71  & 6.51  & 8.78  & 70min  & 18.8M  \\
11.4\% & BlockPruner                     & 7.88  & 7.88  & 8.86  & 100min & 26.6M  \\
12.5\% & EvoPress                         & \bf{7.61}  & \bf{6.41}  & \bf{8.67}  & 58min  & 12.4M  \\
\midrule
25\%   & ShortGPT                         & 21.16 & 23.41 & 30.30 & 44s    & 65K    \\
25\%   & Cosine Similarity (Window)          & 17.37 & 16.60 & 21.04 & 46s    & 65K    \\
25\%   & Weight Subcloning               & 21.16 & 23.41 & 30.30 & 49s    & 65K    \\
25\%   & ShortenedLlama                  & 11.81 & 13.86 & 14.08 & 8min   & 2.1M   \\
25\%   & SLEB                             & 10.21 & \bf{9.45}  & 12.03 & 122min & 35.2M  \\
25\%   & BlockPruner                     & 11.31 & 12.49 & 13.32 & 152min & 42.8M  \\
25\%   & EvoPress                         & \bf{10.06} & 10.31 & \bf{11.99} & 44min  & 10.0M  \\
\midrule
37.5\% & ShortGPT                         & 54.07 & 70.94 & 63.51 & 44s    & 65K    \\
37.5\% & Cosine Similarity (Window)         & 151.10 & 192.07 & 212.60 & 45s   & 65K    \\
37.5\% & Weight Subcloning               & 54.07 & 70.94 & 63.51 & 49s    & 65K    \\
37.5\% & ShortenedLlama                  & 20.37 & 35.37 & 26.07 & 8min   & 2.1M   \\
37.5\% & SLEB                             & 17.29 & 19.57 & 20.91 & 159min & 49.1M  \\
36.4\% & BlockPruner                     & 19.41 & 28.75 & 22.44 & 185min & 54.8M  \\
37.5\% & EvoPress                         & \bf{16.09} & \bf{17.44} & \bf{19.87} & 53min  & 19.7M  \\
\midrule
50\%   & ShortGPT                         & 180.51 & 226.14 & 171.04 & 44s    & 65K    \\
50\%   & Cosine Similarity (Window)         & 3611.06 & 4570.15 & 2876.83 & 45s   & 65K    \\
50\%   & Weight Subcloning               & 180.51 & 226.14 & 141.04 & 49s    & 65K    \\
50\%   & ShortenedLlama                  & 68.79  & 145.78 & 87.40  & 8min   & 2.1M   \\
50\%   & SLEB                             & 51.30  & 117.21 & 58.56  & 184min & 60.1M  \\
48.9\% & BlockPruner                     & 52.28  & 97.23  & 60.47  & 208min & 64.9M  \\
50\%   & EvoPress                         & \bf{34.20}  & \bf{47.15}  & \bf{40.10}  & 31min  & 12.4M  \\
\bottomrule
\end{tabular}
\end{table}

In Table~\ref{besa_comparison} we provide a comparison of EvoPress with BESA \cite{xu2024besapruninglargelanguage}, a method for non-uniform sparsity allocation. BESA performs gradient-descent based search for rowwise sparsity allocation using straight-through-estimation \cite{bengio2013estimating}, with uniform sparsity on a per-block level. Therefore, both methods are not fully comparable, as \methodname{} searches for layerwise sparsities globally (across the entire model, not just blockwise), while keeping per-row sparsities uniform. We demonstrate that both methods are complementary by performing a two-stage search, first, for blockwise sparsities using EvoPress, then, for rowwise sparsities with fixed per-block sparsity using BESA. This merged version produces best results on 50\% and 60\% sparsity, while it performs slightly worse than pure \methodname{} on 70\% sparsity. For a fair comparison, we search on top of a Wanda \cite{wanda} layer database for all methods. The runtime requirements for both BESA and \methodname{} are similar, as there is an additional hyperparameter sweep for BESA, which is not provided in the author's implementation (authors provide tuned coefficients). Similarly, we have not included OWL in this comparison, as the authors have not provided instructions on how  hyperparameters were obtained.

\begin{table}[h!]
\centering
\begin{threeparttable}

\caption{Comparison of EvoPress with uniform pruning and BESA on Llama-2-7B. Both EvoPress as well as BESA (layerwise) operate on a layer database generated via Wanda. BESA (rowwise) additionally learns per-output-channel sparsities, which target a different dimension of the compression space than EvoPress's layerwise allocation. EvoPress+BESA applies BESA using the average per-block sparsities found from running EvoPress, showing that both methods can work complementary. Here, we use the lightweight version of EvoPress with early stopping.}
\label{besa_comparison}
\begin{tabular}{ccc|ccc|c}
\toprule
\textbf{Sparsity} & \textbf{Method} & \textbf{Non-Uniformity} & \textbf{Fineweb$\downarrow$} & \textbf{Wiki2$\downarrow$} & \textbf{C4$\downarrow$} & \textbf{Runtime} \\
\midrule
50.00 & Uniform & N/A & 7.60 & 6.41 & 8.97 & - \\
\textcolor{red}{49.79} & BESA & Layerwise & 7.55 & 6.32 & 8.93 & 16min* \\
50.00 & EvoPress & Layerwise & 7.48 & 6.28 & 8.79 & 87min \\
\textcolor{red}{49.70} & BESA & Rowwise & 7.39 & 6.19 & 8.72 & 33min* \\
\textcolor{red}{49.72} & EvoPress+BESA & Rowwise & \textbf{7.37} & \textbf{6.16} & \textbf{8.71} & 120min \\
\midrule
60.00 & Uniform & N/A & 10.22 & 9.56 & 12.95 & - \\
\textcolor{red}{59.46} & BESA & Layerwise & 9.82 & 8.80 & 12.38 & 16min* \\
\textcolor{red}{58.91} & EvoPress & Layerwise & 8.75 & 8.03 & 11.94 & 102min \\
59.99 & EvoPress & Layerwise & 9.01 & 8.66 & 10.96 & 94min \\
\textcolor{red}{58.90} & BESA & Rowwise & 9.15 & 8.11 & 11.88 & 33min* \\
\textcolor{red}{59.12} & EvoPress+BESA & Rowwise & \textbf{8.59} & \textbf{7.58} & \textbf{10.58} & 135min* \\
\midrule
70.00 & Uniform & N/A & 64.02 & 136.53 & 97.53 & - \\
69.97 & BESA & Layerwise & 52.65 & 66.51 & 67.58 & 16min* \\
\textcolor{red}{68.52} & EvoPress & Layerwise & \textbf{13.53} & 15.61 & \textbf{16.90} & 111min \\
69.99 & EvoPress & Layerwise & 15.55 & 24.04 & 19.54 & 120min \\
\textcolor{red}{68.51} & BESA & Rowwise & 16.53 & 17.79 & 24.09 & 33min* \\
\textcolor{red}{68.03} & EvoPress+BESA & Rowwise & 13.83 & \textbf{14.19} & 19.52 & 144min* \\
\bottomrule
\end{tabular}
\begin{tablenotes}
\footnotesize
\item[\textcolor{red}{\textbullet}] Sparsity values in \textcolor{red}{red} are more than 0.5\% below the target. They are a result of BESA only enforcing a soft constraint on the effective sparsity.
\item[*] BESA runtimes do not include the hyperparameter sweep over `-d-coef`.
\end{tablenotes}
\end{threeparttable}
\end{table}

% \section{Schematic Visualization}

% We provide a schematic illustration of EvoPress in Figure~\ref{fig:evopress_illustration}.

% \begin{figure}[htb]
% \centering

%     \includegraphics[width=0.7\linewidth]{graphics/Schematic/EvoPress-visualization.pdf}

%   \caption{Schematic illustration of EvoPress search. Intially, a set of candidates is sampled. Then, a fraction of the fittest among them is selected at each elimination round. In the last selection round, the parent is added to the population for elitism. Finally, the last remaining search point is made the new parent.}
%     \label{fig:evopress_illustration}
% \end{figure}

\end{document}